% Template for ICASSP-2021 paper; to be used with:
%          spconf.sty  - ICASSP/ICIP LaTeX style file, and
%          IEEEbib.bst - IEEE bibliography style file.
% --------------------------------------------------------------------------
%\documentclass[draftcls,onecolumn,12pt]{IEEEtran}
\documentclass[lettersize,journal]{IEEEtran}
\usepackage{amsmath}

\usepackage{pifont} % for checkmark and cross
\newcommand{\cmark}{\ding{51}} % ✓
\newcommand{\xmark}{\ding{55}} % ✗

\usepackage{amsfonts}
\usepackage{algorithmic}
\usepackage{array}
\usepackage{textcomp}
\usepackage{stfloats}
\usepackage{url}
\usepackage{verbatim}
\usepackage{graphicx}
\hyphenation{op-tical net-works semi-conduc-tor IEEE-Xplore}
\def\BibTeX{{\rm B\kern-.05em{\sc i\kern-.025em b}\kern-.08em
    T\kern-.1667em\lower.7ex\hbox{E}\kern-.125emX}}
\usepackage{balance}

\usepackage{cite}
\DeclareGraphicsExtensions{.JPG,.eps,.pdf}
\usepackage{amsthm}
\newtheorem*{remark}{Remark}

\usepackage{epstopdf}
\usepackage{color}
\usepackage{amscd}
\usepackage{soul}
\usepackage{stfloats}
\usepackage{verbatim}
\usepackage{makecell}
\interdisplaylinepenalty=2500
%\hyphenation{op-tical net-works semi-conduc-tor}
%\usepackage{paralist}
%\usepackage{cleveref}
\usepackage{atbegshi}
\usepackage{flushend}
%\IEEEoverridecommandlockouts
%\AtBeginDocument{\AtBeginShipoutNext{\AtBeginShipoutDiscard}}
\usepackage{tikz,pgfplots,xcolor}

\usepackage{dsfont}
\usepackage{amssymb}
\usepackage{float}
\usepackage{mathtools}
\DeclarePairedDelimiter\norm{\lVert}{\rVert}

\usepackage{amsmath,amssymb,bm}

%%% Algorithm %%%
\usepackage[ruled,norelsize]{algorithm2e}

\usepackage{bm}
\usepackage{xcolor}
\usepackage{soul}
\newcommand{\Ex}{\mathbb{E}}
\newcommand{\bmtheta}{\bm{\theta}}

\newcommand{\vecm}{\mathbf{m}}
\newcommand{\vecx}{\mathbf{x}}
\newcommand{\vecy}{\mathbf{y}}

\newcommand{\vecw}{\mathbf{w}}

\newcommand{\vecs}{\mathbf{s}}

\makeatletter
% \@addtoreset{theorem}{section}
\makeatother

\DeclareMathOperator*{\argmin}{arg\,min}

\newtheorem{assum}{Assumption}
\newtheorem{theorem}{Theorem}
\newtheorem{corollary}{Corollary}
\newtheorem{lemma}{Lemma}
\newtheorem{definition}{Definition}

\makeatletter
\newcommand{\removelatexerror}{\let\@latex@error\@gobble}

\newcommand{\tsum}{\textstyle{\sum}}

% Example definitions.
% --------------------

% Title.
% ------

%
% Single address.
% ---------------

%
% For example:
% ------------
%\address{School\\
%	Department\\
%	Address}
%
% Two addresses (uncomment and modify for two-address case).
% ----------------------------------------------------------
%\twoauthors
%  {A. Author-one, B. Author-two\sthanks{Thanks to XYZ agency for funding.}}
%	{School A-B\\
%	Department A-B\\
%	Address A-B}
%  {C. Author-three, D. Author-four\sthanks{The fourth author performed the work
%	while at ...}}
%	{School C-D\\
%	Department C-D\\
%	Address C-D}
%
\begin{document}
%\ninept
\title{Noise Resilient Over-The-Air Federated Learning In Heterogeneous Wireless Networks}

\author{Zubair Shaban, Nazreen Shah, Ranjitha Prasad,~\IEEEmembership{Member,~IEEE}\\
        % <-this % stops a space
\noindent \thanks{We acknowledge the grants from DRDO CARS, SERB-FICCI PM Fellowship for Doctoral Research and LightMetrics Pvt. Ltd. A preliminary version of this work has been accepted for presentation at IEEE ICASSP, 2025.}% <-this % stops a space
\thanks{Zubair Shaban, Nazreen Shah, and Ranjitha Prasad are with the ECE dept., IIIT Delhi, New Delhi.}}

\maketitle

% The paper headers
%\markboth{Journal of \LaTeX\ Class Files,~Vol.~14, No.~8, August~2021}%
%{Shell \MakeLowercase{\textit{et al.}}: A Sample Article Using IEEEtran.cls for IEEE Journals}

%\IEEEpubid{0000--0000/00\$00.00~\copyright~2021 IEEE}
% Remember, if you use this you must call \IEEEpubidadjcol in the second
% column for its text to clear the IEEEpubid mark.

\maketitle

\begin{abstract}
In 6G wireless networks, Artificial Intelligence (AI)-driven applications demand the adoption of Federated Learning (FL) to enable efficient and privacy-preserving model training across distributed devices. Over-The-Air Federated Learning (OTA-FL) exploits the superposition property of multiple access channels, allowing edge users in 6G networks to efficiently share spectral resources and perform low-latency global model aggregation. Traditional OTA-FL techniques suffer due to the joint effects of additive white Gaussian noise at the server, fading, and both data and system heterogeneity at the participating edge devices. In this work, we propose the novel Noise Resilient Over-the-Air Federated Learning (NoROTA-FL) framework to jointly tackle these challenges in federated wireless networks. In NoROTA-FL, the goal is to obtain inexact solutions to the local optimization problems. We propose novel inexactness conditions in the presence of wireless and system impediments, which manifests as an additional proximal constraint at the clients. The proposed framework provides robustness against straggler-induced partial work, heterogeneity, noise, and fading. From a theoretical perspective, we leverage the proposed zeroth- and first-order inexactness and establish convergence guarantees for non-convex optimization problems in the presence of heterogeneous data and varying system capabilities. Experimentally, we validate NoROTA-FL on real-world datasets, including FEMNIST, CIFAR10, and CIFAR100, demonstrating its robustness in noisy and heterogeneous environments. Compared to state-of-the-art baselines such as COTAF and FedProx, NoROTA-FL achieves significantly more stable convergence and higher accuracy, particularly in the presence of stragglers.
\end{abstract}

\begin{IEEEkeywords}
Distributed Optimization, Federated Learning, Stragglers, AWGN, Fading, Wireless Networks, Convergence.
\end{IEEEkeywords}

\section{Introduction} 
\noindent In the current technology-driven era, the ubiquitous presence of wireless devices underpins seamless connectivity and enhanced mobility. The broad range of applications envisioned as the sixth-generation (6G) of mobile communication systems, such as IoT, edge computing, big data analytics, and D2D communications, have highlighted the data-driven demands of the present era and those of the future. Hence, it is crucial to find a synergy between wireless communications and data-driven Machine Learning (ML) \cite{6GIOT}. In discussions of 6G, there is a focus on shifting ML tasks from central cloud infrastructures to the network edge, capitalizing on the computational potential of edge devices and the flexibility of network connectivity \cite{yang2022federated}. The primary challenge here is to obtain a collaborative integration where ML and wireless communication complement each other\cite{gafni2022federated}.

\noindent In contemporary wireless networks, training ML models necessitates transmitting private data of the edge users to a central server. This poses significant challenges due to bandwidth and privacy constraints \cite{ 10278452, 201564,9084352}. Relying solely on training ML models at an edge device limits the generalization performance. Hence, the adoption of distributed learning approaches that store the data locally at the edge devices and yet train ML models globally is essential. Federated Learning (FL) has emerged as a distributed, privacy-preserving ML paradigm that enables multiple edge users to collaboratively train a global ML model at the server while retaining the data at the edge user \cite{mcmahan2017communication}. In FL, a coordinating entity, also called a \emph{server}, broadcasts the global ML model to the edge devices, referred to as \emph{clients}. Using the previous global model as initialization, the clients train local ML models on their private data. Subsequently, the clients transmit their local models to the server, which are aggregated to form an updated global model. This global model is then broadcast to the clients, thus initializing the next communication round \cite{mcmahan2017communication}. This process continues for several communication rounds until a convergence criterion is satisfied, such as attaining an accuracy threshold. Since the FL process relies heavily on the transmission and reception of the model parameters over the wireless channels (uplink and downlink) \cite{kairouz2021advances}, a key challenge is the efficient allocation of uplink channel bandwidth among the clients. 
%Techniques like time and frequency division multiplexing results in reduced channel bandwidth as the number of edge users increases, leading to diminishing throughput and training speed of FL systems. 
%Existing multiple access technologies in wireless communication, such as orthogonal frequency-division multiple access (OFDMA) and code-division multiple access (CDMA), are primarily designed for rate-driven communication and do not possess the adaptability necessary to accommodate the nuanced requirements of FL tasks.  

\noindent  Over-The-Air Federated Learning (OTA-FL) strategy becomes a preferred choice for efficient communication over a common uplink Multiple Access Channel (MAC)\cite{yang2020federated,sery2021over,OTAEarlyAccess}. OTA-FL leverages the technique of Over-the-Air Computation (AirComp) \cite{zhu2018broadband,amiri2020machine}, wherein the multiple access wireless channel functions as a natural aggregator. This approach enables clients to transmit updates simultaneously using analog signaling over the uplink channel in a non-orthogonal manner, effectively optimizing available temporal and spectral resources. However, OTA-FL is sensitive to the intrinsic Additive White Gaussian Noise (AWGN) and channel fading, which pose a challenge to the robustness of the global model. In particular, fading results in dropped clients across communication rounds due to channel outages, which can further impede the progress of FL, potentially prolonging convergence\cite{mitra2021linear}. Several FL protocols in the literature operate under the assumption of an error-free channel, overlooking the inherent unreliability of wireless communications \cite{khan2021federated}. 
%Furthermore, this issue is exacerbated by the sensitivity of Stochastic Gradient Descent (SGD) to noise \cite{kairouz2021advances}, as seen in the convergence results in \cite{sery2021over}. In practice, wireless channels are affected by fading in addition to noise.
\begin{figure*}[htbp]
    \centering
            \includegraphics[width=9cm,height=5cm]{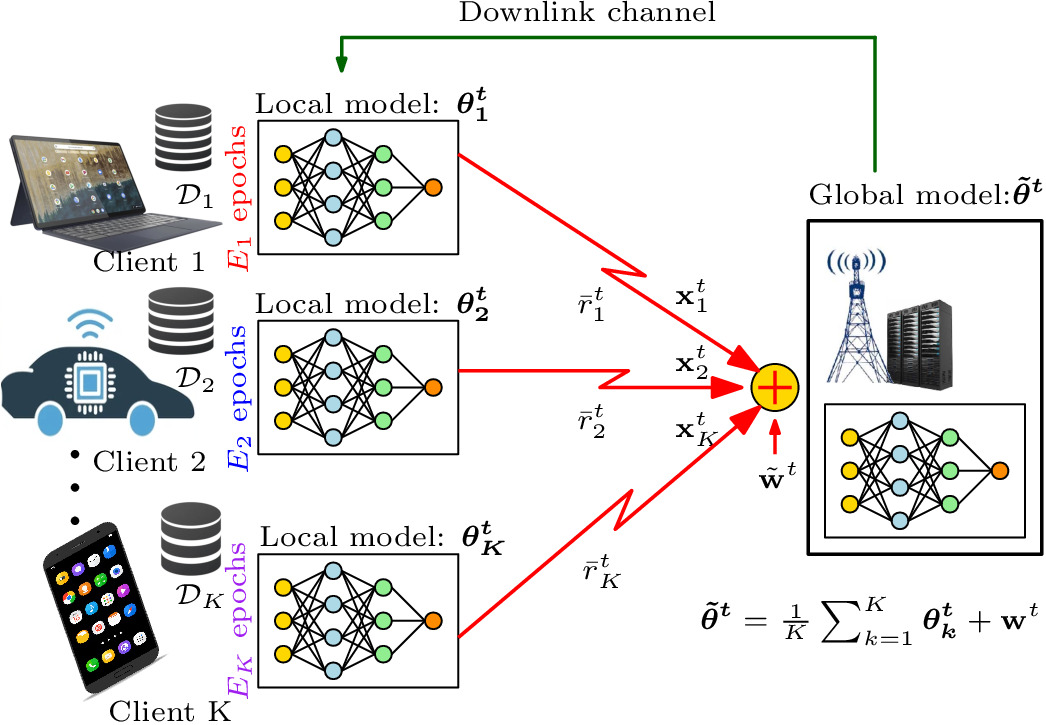}
      \includegraphics[width=9cm,height=4cm]{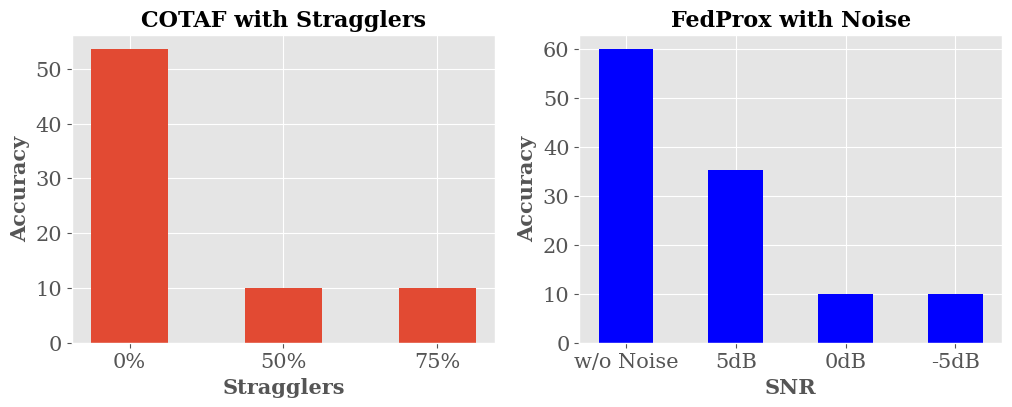}

      \hspace{5mm}
    \caption{Left: Practical OTA-FL with data and system heterogeneity in the presence of wireless channel impairments. Right: The bar plots demonstrate i) the effect of varying system heterogeneity (stragglers) in COTAF~\cite{sery2021over} on the CIFAR10 dataset ii) the effect of noise in FedProx~\cite{li2020federated} on the CIFAR10 dataset. It is evident that COTAF fails in the presence of stragglers, and FedProx fails in the low Signal to Noise Ratio (SNR) regime. This motivates us to propose a new method that can jointly address the challenges posed by system heterogeneity and noisy wireless environments.}
     \label{fig:IntroFig}
\end{figure*}

\noindent Concurrently, a critical challenge in FL is the data and system  heterogeneity \cite{10373878, li2020federated}. Data heterogeneity stems from non-independent and identically distributed (non-IID) data distributions available at the edge devices, which leads to variability in data volume, label imbalance, and model performance, which further leads to system heterogeneity \cite{kairouz2021advances}. Furthermore, system heterogeneity manifests in the variable computational capabilities among participating devices, giving rise to the notorious \emph{straggler effect}\cite{ozfatura2020straggler}. This phenomenon occurs when FL process slows down as dictated by the pace of the slowest device due to underlying reasons such as local memory access, limited computational capabilities, and background processes. In wireless FL, a common solution to deal with the straggler effect is to simply ignore the updates from the slower devices, as done in \cite{sery2021over}. However, ignoring updates from the slower devices results in the client drift problem, wherein the global model starts to favor the local solutions of faster devices.  Hence, despite the undeniable potential of FL in wireless environments, there are robustness issues in practice that arise due to the joint effect of heterogeneity, noise, and fading \cite{ang2020robust}. %This has prompted researchers to explore ways to enhance FL robustness and mitigate the effects of erroneous transmissions.

\noindent Approaches in FL that mitigate system heterogeneity include FedProx \cite{li2020federated} and FedNova\cite{wang2020tackling}. FedProx explores first-order inexact solutions that manifest as solving a proximal constraint-based local optimization problem to alleviate data and system  heterogeneity. In \cite{yuan2022convergence_newfedprox}, the authors explore the zeroth-order inexactness conditions to provide a FedProx-like algorithm that leads to stronger convergence guarantees. Further, FedNova\cite{wang2020tackling} overcomes the problem of stragglers by allowing faster clients to perform more local updates within each communication round. In order to address the issues arising from data heterogeneity, SCAFFOLD\cite{karimireddy2020scaffold} introduces control variates. However, FedProx, FedNova, and SCAFFOLD are not specific to wireless environments and overlook the presence of noise in wireless networks. 

\noindent In \cite{sery2021over}, the authors propose COTAF which employs an effective precoding scheme to abate the effects of noise in wireless environments. In \cite{pmlr-v235-makkuva24a_laser}, authors propose LASER (LineAr CompreSsion in WirEless DistRibuted Optimization) which primarily addresses the issue of communication compression of large ML models in distributed framework under noisy communication channels. However, both COTAF and LASER inherently neglect issues arising from data and system heterogeneity. Among existing robust methods in wireless communications, \cite{ang2020robust} introduces a new regularization term into the loss function to mitigate the noise. However, this technique fails to effectively address the joint robustness issues that arise due to noise, fading, and heterogeneity. In summary, the challenges of mitigating the effects of stragglers and client drift in the presence of noise and fading remain largely unaddressed in the field of wireless FL. \\
\textbf{Contributions:}~We propose the Noise Resilient Over-the-Air Federated Learning (NoROTA-FL) framework, which handles stragglers and data heterogeneity in the presence of noise and fading. NoROTA-FL accounts for the joint effects of noise and heterogeneity in the uplink channel by solving a robust constrained local optimization problem, followed by averaging of model parameters at the server. In particular, we introduce a novel precoding factor to mitigate the impact of noise. In the presence of channel fading alongside noise and heterogeneity, clients with poor channel conditions are dropped using the well-known threshold-based strategy, and NoROTA-FL seeks a solution to the local optimization problem, followed by \emph{partial} participation-based averaging of model parameters at the server. Unlike existing works that provide convergence analysis assuming convex or quasiconvex local problems \cite{sery2021over, pmlr-v235-makkuva24a_laser}, we establish the convergence of NoROTA-FL assuming non-convex local problems. Additionally, the  novel contributions in the proposed theoretical analysis are as follows:
\begin{itemize}
   \item We introduce two novel forms of inexactness, namely the first-order and zeroth-order inexactness, within the NoROTA-FL framework. These conditions capture discrepancies in local updates arising from factors such as noise, fading, and heterogeneity. The inexactness is modeled via a proximal constraint in the local optimization problem, which leads to the following key implications:
    \begin{itemize}
        \item The resulting local solution remains sufficiently close to the exact solution, enabling the inclusion of straggling devices that perform only partial computations during each round.
        \item In the context of SGD-based algorithms, we demonstrate that the use of proximal constraints is a principled and effective strategy for mitigating the effects of noise and fading.
    \end{itemize}
    \item We demonstrate that the NoROTA-FL algorithm converges to the first-order inexact solution under the $(B,H)$-Local Gradient Dissimilarity (LGD)~\cite{yuan2022convergence_newfedprox} assumption. Here, $B$ captures the deviation of the local gradients as compared to the global gradient and $H$ reflects the additional divergence. Under this setting, we establish that the proposed algorithm achieves a convergence rate of $\mathcal{O}(1/T)$, where $T$ is the number of FL rounds.
    \item Employing the zeroth-order inexactness assumption, we analyze NoROTA-FL under a setting that avoids any LGD assumptions. Here, we establish that the NoROTA-FL achieves a convergence rate of $\mathcal{O}( 1/T^{1/2})$ and demonstrates linear speedup as $\mathcal{O}( 1/\sqrt{\hat K T})$ where $\hat K$ represents the number of users participating.
\end{itemize}
We further validate the effectiveness of NoROTA-FL through extensive experiments on FEMNIST, CIFAR10, and CIFAR100  datasets, demonstrating its superior performance in the presence of AWGN and fading channels, jointly with varying levels of data and system heterogeneity. To the best of the authors' knowledge, this work is the first of its kind that explores FL schemes that jointly address robustness to noise, fading, and heterogeneity in wireless networks. In {Tab.~\ref{tab:method_comparison}, we present the assumptions and worst-case convergence rates of NoROTA-FL and existing methods. NoROTA-FL(A), (B), and (C) correspond to the settings that use varying heterogeneity assumptions in this work. Unlike prior works, our framework addresses convergence in the non-convex wireless FL setting and simultaneously handles stragglers, thereby covering a broad range of practical scenarios, and yet achieves the same convergence rate as techniques that are not wireless-aware or straggler-inclusive. Note that if the $(B,H)$ field is marked with a \xmark, it implies that while $(B,H)$ assumption is not used, other assumptions may be used in lieu of standard heterogeneity assumptions. Additionally, the convergence rate of FedProx(B)~\cite{yuan2022convergence_newfedprox} in the table corresponds to the case when $T\geq K^3$. An important point to note is that while FedProx(B) and NoROTA-FL(C) use similar assumption namely the G-Lipshitz continuity, NoROTA-FL(C) achieves the same convergence rate while compensating for wireless impediments.}  
In the sequel, we present the system model and proposed algorithm in Sec.~\ref{sec:problem_statement} followed by convergence analysis in Sec.~\ref{sec:convergence_analysis} and experimental results in Sec.~\ref{sec:experiments}. Finally, we present conclusions in Sec.~\ref{sec:conc}.

% \begin{table}[ht]
% \centering
% \caption{Comparison of FL Methods in Wireless Settings}
% \label{tab:method_comparison}
% \scriptsize
% \begin{tabular}{|l|c|c|c|c|}
% \hline
% \textbf{Method} & \textbf{Wireless} & \textbf{Straggler} & \textbf{Hetero.} & \textbf{Conv.} \\
% \textbf{} & \textbf{Aware} & \textbf{Handling} & \textbf{Assump.} & \textbf{Rate} \\
% \hline
% FedAvg~\cite{FedAVG} & \xmark & \xmark & $(0,H)$ & $\mathcal{O}(1/T)$ \\
% FedProx(A)~\cite{li2020federated} & \xmark & \cmark & $(B,0)$ & $\mathcal{O}(1/T)$  \\
% FedProx(B)~\cite{yuan2022convergence_newfedprox} & \xmark & \cmark & $G$-Lipschitz & {\scriptsize $\mathcal{O}\left(\tfrac{1}{\max{\{T^{2/3},T^{1/2}\}}}\right)$ } \\
% SCAFFOLD~\cite{karimireddy2020scaffold} & \xmark & \xmark & - & $\mathcal{O}(1/T)$  \\
% FedLin & \xmark & \xmark & & \\
% FedPD~\cite{FedPD} & \xmark & \xmark & $(1,H)$ & $\mathcal{O}(1-p/T)$,\\
% &&&&where $0\leq p < 1$ \\
% % FedPD (B)~\cite{FedPD} & \xmark & \xmark & - & $\mathcal{O}(1/T)$\\
% LASER~\cite{pmlr-v235-makkuva24a_laser} & \cmark & \xmark & $(B,H)$ & $\mathcal{O}(1/T)$ \\
% COTAF~\cite{Sery2020Over-the-AirData} & \cmark & \xmark & $(0,H)$ & $\mathcal{O}(1/T)$ \\
% \textbf{NOROTA-FL} & \cmark & \cmark & $(B,H)$ & $\mathcal{O}(1/T)$ \\
% \hline
% \end{tabular}
% \end{table}

\begin{table}[t]
\centering
\caption{Assumptions and Convergence Rates (Worst-case) of NoROTA-FL against existing Methods.}
\label{tab:method_comparison}
\scriptsize
\resizebox{\linewidth}{!}{%
\begin{tabular}{|l|c|c|c|c|c|}
\hline
\textbf{Method} & \textbf{Wireless} & \textbf{Straggler} & \textbf{$(B,H)$} & \textbf{Convergence} & \textbf{Convexity} \\
\textbf{} & \textbf{Aware} & \textbf{Handling} & \textbf{Assump.} & \textbf{Rate} & \textbf{Setting} \\
\hline
\hline
FedAvg~\cite{FedAVG} & \xmark & \xmark & $(0,H)$ & $\mathcal{O}(1/T)$ & Convex \\
FedProx(A)~\cite{li2020federated} & \xmark & \cmark & $(B,0)$ & $\mathcal{O}(1/T)$ & Non-Convex \\
FedProx(B)~\cite{yuan2022convergence_newfedprox} & \xmark & \cmark & \xmark & \makecell{$\mathcal{O}\left({1}/ T^{1/2}\right)$} & Non-Convex \\
SCAFFOLD~\cite{karimireddy2020scaffold} & \xmark & \xmark & \xmark & $\mathcal{O}(1/T)$ & Non-Convex \\
FedLin & \xmark & \xmark & \xmark  & $\mathcal{O}(1/T)$  & Non-Convex \\
FedPD~\cite{FedPD} & \xmark & \xmark & $(1,H)$ & \makecell{$\mathcal{O}((1 - p)/T),0 \leq p < 1$} & Non-Convex \\
LASER~\cite{pmlr-v235-makkuva24a_laser} & \cmark & \xmark & $(B,H)$ & $\mathcal{O}(1/T)$ & Non-Convex \\
COTAF~\cite{Sery2020Over-the-AirData} & \cmark & \xmark & $(0,H)$ & $\mathcal{O}(1/T)$ & Convex \\
\textbf{NoROTA-FL(A)} & \cmark & \cmark & $(B,0)$ & $\mathcal{O}(1/T)$ & Non-Convex \\
\textbf{NoROTA-FL(B)} & \cmark & \cmark & $(B,H)$ & $\mathcal{O}(1/T)$ & Non-Convex \\
\textbf{NoROTA-FL(C)} & \cmark & \cmark & \xmark & $\mathcal{O}(1/T^{1/2})$ & Non-Convex \\

\hline
\end{tabular}%
}
\end{table}

%%%%%% Section II %%%%%

\section{Problem Statement and Proposed Algorithm}
\label{sec:problem_statement}

%, i.e., the total number of instances is given by $D  = \sum_{k = 1}^K D_k$
\noindent We consider a wireless multi-user FL framework with a central server communicating with $K$ wireless clients. The parameter updates from the clients to the server occur over the resource-constrained uplink channel, and the global synchronization between the server and clients occurs over the downlink channel \cite{amiri2020machine}. Mobile applications, onboard sensors, and interactions between hardware and software applications lead to data collection at the wireless edge user. We denote such a collected dataset as $\mathcal{D}_k$ at the $k$-th client where $|\mathcal{D}_k| = D_k$ and $\sum_{k=1}^K D_k=D$.  In supervised learning, $\mathcal{D}_k$ consists of data samples as a set of input-output pairs $\{\vecs_{k,j}, y_{k,j}\} \in \mathcal{D}_k$ where $y_{k,j} \in \mathbb{R}$ is the label for the sample $\vecs_{k,j}$ for $j \in [D_k]=\{1,2,...,D_k\}$. The goal of the local ML problem is to estimate the model parameters $\bmtheta_k \in \mathbb{R}^d$, by optimizing the empirical loss function given by
\begin{equation}
    f_k(\bmtheta_k) \triangleq \frac{1}{D_k} \sum_{j=1}^{D_k} l(\vecs_{k,j},y_{k,j};\bmtheta_k),
\end{equation}
where $l(.;.)$ is the user-specified loss function that quantifies the discrepancy between the predicted output and the ground truth for each sample in the local dataset. In FL, the goal is to obtain a global model, $\bmtheta$, which minimizes a loss function $ F(\bmtheta)$ given as
\begin{equation}
    \text{P0}:~~\min_{\bmtheta}\left\{ F(\bmtheta) \triangleq\frac{1}{K}\sum_{k=1}^K f_k(\bmtheta)\right\}.
    \label{eq:P0}
\end{equation}
Evidently, P0 does not require any access to a dataset, and it allows for local learning via $f_k(\bmtheta)$ and subsequently estimating the global parameter update via aggregation \cite{FedAVG}. Conventional techniques such as federated averaging (FedAvg) employ local SGD at the clients, followed by global averaging at the server. In a typical local SGD implementation, the local model at each client, $\bmtheta_k^t$, is initialized with the current global model $\bmtheta^t$ communicated at the start of the $t$-th round. Subsequently, the clients update the local parameters in multiple steps using SGD and share their local model with the server via a resource-constrained uplink channel. The server computes an average of the local models to obtain a global model, which is then shared with all the clients via the downlink channel for the next communication round.

\noindent A promising approach to address the resource allocation challenges in multi-client systems is to leverage the capabilities of OTA-FL communications \cite{xiao2024over, sery2021over}. In OTA-FL, the clients transmit their model updates over the shared wireless MAC via analog signaling, enabling simultaneous access to both time and spectral resources \cite{OTA_demo}. Although both the uplink and downlink channels are noisy by nature, the effect of noise is considered only in the uplink since the noise in downlink broadcast channel can be compensated using sophisticated error coding schemes. 

\noindent Each client $k \in [K]$ transmits $\vecx_k^t \in \mathds{R}^d$ in the $t$-th communication round, where $t \in [T]$. The channel output at the server is given as:
\begin{equation}
\vecy^t=\sum_{k=1}^{K}\vecx_k^t+{\mathbf{\tilde w}}^t,\quad \text{where} \quad \Ex[\lVert \mathbf{x}_k^t \rVert^2]\leq P.
\label{eq:power}
\end{equation}
Here, ${\mathbf{\tilde w}}^t \sim \mathcal{N}(0,\,\sigma^{2} \mathbf{I}_d)$ is the AWGN in  the uplink MAC channel. Further, $\Ex[\cdot]$ represents the expectation and $P>0$ is the available transmission power. Hence, the nature of the MAC channel allows direct access to the sum of the client updates at the server and not the individual updates. In the context of OTA-FL, the local model update at the $k$-th client in the $t$-th round is precoded as $\mathbf{x}_k^{t}$ \cite{sery2021over}, which acts as an input to the MAC. The precoded vector $\mathbf{x}_k^{t}$ is given as
\begin{align}
    \mathbf{x}_k^{t}=\sqrt{ p^{t}}(\bmtheta_k^{t}-\tilde\bmtheta^{t-1}),
    \label{eq:precoder1}
\end{align}
where the precoding factor $p^{t} \triangleq \tfrac{P}{ \sum_{k=1}^K q_k\{\Ex[\lVert \bmtheta_k^{t}-\tilde\bmtheta^{t-1}\rVert^2]\}}$ with the weights $q_k=D_k/D$, $\bmtheta_k^{t}$ is the local parameter update at $t$-th round and $\tilde \bmtheta^{t-1}$ denotes the noisy global update from the previous round. The server recovers the global aggregated model {$\tilde{\bmtheta}^{t}$ } from the received signal $\vecy^t$ at the server using the following decoding rule:
\begin{align}
    \tilde{\bmtheta}^{t} =\dfrac{\vecy^{t}}{K\sqrt{ p^{t}}}+\tilde{\bmtheta}^{t-1} = \frac{1}{K}\sum_{k=1}^K \bmtheta_k^{t} + \vecw^{t},
    \label{eq:global}
        \end{align}
% $\bmtheta^{t-1}=\frac{1}{K}\sum_{k=1}^{K}\bmtheta_k^{t-1}$ and 
where $\vecw^{t}\triangleq\frac{\tilde{\vecw}^{t}}{K\sqrt {p^{t}}}$, i.e., $\vecw^{t} \sim \mathcal{N} (0,\frac{\sigma^2}{K^2 p^{t}}\mathbf{I}_d)$. From \eqref{eq:global}, it is evident that in OTA communications, the aggregated ML model is susceptible to corruption due to AWGN, a factor that has the potential to undermine the convergence and performance of learning algorithms. Furthermore, the above formulation assumes full participation of the devices, and hence, the server performs global aggregation only after the $K$ clients complete  $E$ local SGD epochs \cite{sery2021over}. Such an aggregation rule is susceptible to large delays due to the straggler effect, i.e., the system is sensitive to the latency of the slowest device. A simple solution is to consider the partial participation case, which expects that a subset of clients, $\hat{K} < K$, participate in parameter aggregation at the server. In such a scenario, the decoding rule is given as
\begin{align}
    \tilde{\bmtheta}^{t} =\dfrac{\vecy^{t}}{\hat K\sqrt{ p^{t}}}+\tilde{\bmtheta}^{t-1} = \frac{1}{\hat K}\sum_{k \in \mathcal{S}^{t}} \bmtheta_k^{t} + \frac{K}{\hat K}\vecw^{t},
    \label{eq:global_partial}
\end{align}
where $\hat K = |\mathcal{S}^{t}|$ is the cardinality of the set of clients, $\mathcal{S}^{t}$, chosen in each round. Effectively, the noise in such a scenario is given by $\hat\vecw^t \triangleq\frac{K}{\hat K}\vecw^t$, i.e., $\hat\vecw^t \sim \mathcal{N} (0,\frac{\sigma^2}{\hat K^2 p^t}\mathbf{I}_d)$.  
\subsection{Handling Data and System Heterogeneity}
\label{sec:datasyshet}
\noindent Clients in federated networks operate under diverse resource constraints, including differences in computational capabilities and battery life. Additionally, the reliability of their network connections is influenced by factors such as channel noise and fading. As a result, mandating all clients to complete $E$ local epochs inevitably leads to straggler effects ~\cite{sery2021over}. To accommodate such heterogeneity and ensure scalable performance, it is imperative to permit inexact solutions to local subproblems within the global optimization process. This idea can be rigorously characterized from two distinct perspectives \cite{li2020federated, yuan2022convergence_newfedprox} using the following definitions:
\begin{definition}(Inexact solution)~ Given  $\lambda>0$, for a differentiable function $h_k(\bmtheta;\tilde\bmtheta^{t-1})=f_k(\bmtheta)+\frac{\lambda}{2}\norm{\bmtheta-\tilde\bmtheta^{t-1}}^2$, 
    \begin{enumerate}
        \item $\bmtheta^t_k$ is a $\gamma^t$-inexact solution of $\min\limits_{\bmtheta}h_k(\bmtheta;\tilde\bmtheta^{t-1})$ if $\norm{\nabla h_k(\bmtheta^t_k;\tilde\bmtheta^{t-1})}\leq \gamma^{t} \norm{\nabla{f}_k(\tilde\bmtheta^{t-1})} + \norm{\psi^t}$, where $\gamma^t \in [0,1]$ and $\psi^t$ depends on the channel parameters.
        \item $\bmtheta^t_k$ is a $\zeta^t$-inexact solution of $\min\limits_{\bmtheta}h_k(\bmtheta;\tilde\bmtheta^{t-1})$ if $h_k(\bmtheta^t_k;\tilde\bmtheta^{t-1})\leq\min\limits_{\bmtheta}h_k(\bmtheta,\tilde\bmtheta^{t-1})+\zeta^t$,  for $\zeta^t\geq0$.
    \end{enumerate}
    \label{defn1}
\end{definition}

% \begin{definition}
%     $\gamma$-inexact solution : For a function $h_k(\bmtheta;\bmtheta^t)=f_k(\bmtheta)+\frac{\lambda}{2}\norm{\bmtheta-\bmtheta^t}^2$ and $\gamma\in[0,1]$, we say $\bmtheta^*$ is $\gamma$-inexact solution of $\min\limits_{\bmtheta}h_k(\bmtheta;\bmtheta^t)$ if $\norm{\nabla h_k(\bmtheta^*;\bmtheta^t)}\leq \gamma \norm{\nabla{f}_k(\bmtheta^t)}$, where $\nabla h_k(\bmtheta;\bmtheta^t)=\nabla{f}_k(\bmtheta)+\lambda(\bmtheta-\bmtheta^t)$.
%     \label{defn1}
% \end{definition}
% \textcolor{red}{\begin{definition}
%     $\zeta$-inexact solution : For a function $h_k(\bmtheta;\bmtheta^t)=f_k(\bmtheta)+\frac{\lambda}{2}\norm{\bmtheta-\bmtheta^t}^2$ and $\zeta\geq0$, we say $\bmtheta^*$ is $\zeta$-inexact solution of $\min\limits_{\bmtheta}h_k(\bmtheta;\bmtheta^{t})$ if $h_k(\bmtheta^*;\bmtheta^t)\leq\min\limits_{\bmtheta}h_k(\bmtheta,\bmtheta^t)+\zeta$.
%     \label{defn2}
% \end{definition}}
\noindent The \textit{first-order}, or \textit{gradient-based}, allows the optimization procedure to tolerate approximate stationarity at the client level. Here, the deviation from the true gradient is modulated by $\gamma^t$,  capturing computational limitations. A smaller value of $\gamma^t$ imposes a tighter convergence criterion, ensuring that local updates closely approximate the exact solution. In contrast, a larger $\gamma^t$ relaxes this requirement, allowing clients with limited resources to contribute partial computations. The  additive noise term $\psi^t$ that explicitly models channel-induced perturbations ensures that as long as a client achieves a descent direction close enough to the ideal, it can still contribute meaningfully to global convergence, even under noisy or unstable links. Consequently, even under stringent $\gamma^t$, local updates may deviate from the true solution due to noisy gradient evaluations.

An alternative formulation is the \textit{$\zeta^t$-inexactness}, which is based on discrepancies in the objective function value and is referred to as \textit{function-based} or \textit{zeroth-order inexactness}. This condition relaxes the requirement of each local subproblem to reach the exact minimum of the regularized local subproblem. This form is especially useful when resource-constrained clients can only perform a limited number of updates or when convergence to high accuracy is computationally infeasible. This flexibility improves resilience to additive noise and perturbations during transmission. By bounding the suboptimality in terms of objective value, this condition ensures that even partially solved subproblems contribute to overall optimization, thus scaling FL to heterogeneous environments. 
Accordingly, as specified in Definition~\ref{defn1}, the local optimization at client $k$ involves the inexact minimization of a proximal surrogate objective $h_k(\cdot, \cdot)$, rather than the original function $f_k(\cdot)$, leading to the following global optimization problem:
\begin{align}
    &\text{P1:}~\min_{\bmtheta}\left\{ {F}(\bmtheta) \triangleq\frac{1}{K}\sum_{k=1}^K {h}_k(\bmtheta;\tilde\bmtheta^{t-1})\right\}, \text{where}\nonumber\\
    &{h}_k(\bmtheta;\tilde\bmtheta^{t-1}) = {f}_k(\bmtheta) + \frac{\lambda}{2} ||\bmtheta - \tilde\bmtheta^{t-1}||^2.
    \label{eq:P1}
\end{align}
%In the following section, we reformulate P1 in the presence of fading.

\subsection{Problem Formulation in the Presence of Fading}
\label{sec:fading}

\noindent We consider the scenario where the $k$-th  client experiences a block-fading channel $ \bar r^t_k= r^t_ke^{j\Omega_k^t}$ in the $t$-th communication round, where $r^t_k$ represents the magnitude and $\Omega_k^t$ represents the phase induced due to fading. In the presence of fading, the MAC channel output is given by
\begin{align}
\vecy^t =\sum_{k=1}^K  r^t_k e^{j\Omega_k^t}\vecx^t_k + \tilde\vecw^t.
\label{eq:FadingOTA}
\end{align}
The above expression highlights that the channel coefficient has a predominant effect on the received signal. As observed from \eqref{eq:FadingOTA}, when $ r^t_k$ is small, the contribution of corresponding $\vecx_k^t$ diminishes, allowing noise to dominate and adversely impact the learning process. To mitigate this, \cite{sery2021over} proposes to choose clients whose channel coefficient is above a threshold  $\hat r$, i.e., if $ r^t_k > \hat r$, the update from the $k$-th client is chosen. Consequently, the channel input in each communication round is given by
\begin{align}
    \vecx^t_k= \begin{cases}
                        \frac{\hat r\sqrt{p^t} }{r^t_k} e^{-j\Omega^t_k}(\bmtheta^t_k-\tilde \bmtheta^{t-1}),  &\text{$r^t_k> \hat r$} \\
                        0, &\text{$r^t_k\leq \hat r$}.
                    \end{cases}
\label{eq:fading_equation}
\end{align}
The above formulation leads to the partial participation-based FL, where partial participation is induced due to $\hat r$. This scheme assumes that the channel state information (CSI) is available at the clients during data transmission. At time $t$, let $\mathcal{K}^t \subset [K]$ represent the set of client indices for which the channel condition $r^t_k > \hat r$ holds. Following the partial participation-based decoding in \eqref{eq:global_partial}, the OTA-FL aggregation of the local models in the presence of fading is obtained as $\tilde\bmtheta^t=\frac{\vecy^t}{\hat r |\mathcal{K}^t| \sqrt{p^t} } + \tilde{\bmtheta}^{t-1}$, which leads to
\begin{align}
    \tilde\bmtheta^t= \frac{1}{|\mathcal{K}^t|} \sum_{k\in \mathcal{K}^t} \bmtheta^t_k + \bar\vecw^t.
    \label{eq:global_fading}
\end{align} 
Here, $\bar\vecw^t \triangleq\frac{K}{\hat r|\mathcal{K}^t| }\vecw^t$ and $\bar\vecw^t \sim \mathcal{N} (0,\frac{\sigma^2}{\hat r^2|\mathcal{K}^t|^2 p^t  }\mathbf{I}_d)$. Therefore, setting up the proximal constraint-based global optimization problem in the presence of fading leads to the following:

\begin{align}
    &\text{P2:}~\min_{\bmtheta}\left\{ F(\bmtheta) \triangleq\frac{1}{|\mathcal{K}^t|}\sum_{k \in \mathcal{K}^t} {h}_k(\bmtheta;\tilde\bmtheta^{t-1})\right\}, \text{where}\nonumber\\
    &h_k(\bmtheta;\tilde\bmtheta^{t-1}) = {f}_k(\bmtheta) + \frac{\lambda}{2} ||\bmtheta - \tilde\bmtheta^{t-1}||^2.
    \label{eq:P2}
\end{align}
Clearly, solving Problem P2 requires selecting a subset of clients \(\mathcal{K}^t\) in advance, thereby inducing a \textit{partial participation} regime. Moreover, by allowing inexact solutions, the framework accommodates clients that contribute varying amounts of computational effort, enabling their inclusion in the federated learning process despite resource heterogeneity. In the following subsection, we demonstrate that a formulation analogous to \(h_k(\bm{\theta}; \tilde{\bm{\theta}}^{t-1})\) naturally arises as a suitable optimization framework for settings characterized by stochastic noise in SGD-based updates.

%For theoretical analysis, we assume $|\mathcal{K}^t|=\hat K$. 
\subsection{Relationship between Noisy SGD and Proximal Constraint}
\label{sec:Prox}

\noindent In this subsection, we show that, in the specific case of using SGD as the local solver, introducing a proximal constraint is a principled approach to mitigate the impact of noise. 

\noindent In the OTA-FL framework, the presence of noise and fading in the communication channel leads to a noisy global model $\tilde \bmtheta^{t-1}$ at the server, as given in \eqref{eq:global}. Subsequently, the corrupted model parameters are broadcast to the clients. Mathematically, we model the noisy model parameters captured at the client for {$t-1 \in [T]$} as
\begin{equation}
    \tilde{\bmtheta}^{t-1} = \bmtheta^{t-1} + \Delta \bmtheta^{t-1},
    \label{eq:noisyEstimate}
\end{equation}
where $\Delta \bmtheta^{t-1}$ represents the additive noise effect on $\bmtheta^{t-1}$. Using {$\tilde{\bmtheta}^{t-1}$ as the initialization of client models for the local rounds} and incorporating the effect of noise in local SGD updates, the SGD update rule is rewritten as:
\begin{align}
  {\bmtheta}_k^{t} &= \tilde{\bmtheta}^{t-1}  - \eta^t \nabla f_k(\tilde{\bmtheta}^{t-1}),
   \label{eq:noiseEffectSGD}
\end{align}
where ${\bmtheta}_k^{t}$ represents the one-step SGD update at the $k$-th client with $\eta^t$ as the SGD learning rate. Denoting the local gradient $ \nabla f_k(\cdot)=g_{k}(\cdot)$ and defining noiseless one-step SGD update as $\check\bmtheta^t_k=\bmtheta^{t-1}-\eta^t g_k({\bmtheta}^{t-1})$, from \eqref{eq:noiseEffectSGD} and \eqref{eq:noisyEstimate}, we obtain
\begin{align}
    &\bmtheta_k^{t} = \check\bmtheta^t_k + \Delta \bmtheta^{t-1}  - \eta^t \Delta g_k({\bmtheta}^{t-1}) = \check\bmtheta_k^{t} + \Lambda,
    \label{eq:finalnoiseterms}
\end{align}
where $\Delta g_k(\bmtheta^{t-1}) = g_k(\tilde \bmtheta^{t-1}) - g_k(\bmtheta^{t-1})$ and the effective noise term is $\Lambda = \Delta \bmtheta^{t-1}  - \eta^t \Delta g_k({\bmtheta}^{t-1})$. Evidently, noise affects the training process, and there is a need to incorporate robust designs to address its effects. Stochastic optimization theory \cite{6796505} suggests that in order to optimize the local loss function $f_k(\bmtheta)$ in the presence of noise, the local loss function in P0 needs to be replaced by a regularized loss function $\bar{f}_k(\bmtheta)$  given by $\bar{f}_k(\bmtheta) = {f}_k(\bmtheta) + \tilde{\lambda}||g_k({\bmtheta}^{t-1})||^2$, where $\tilde{\lambda} = \sigma_{\Lambda}^2 (\eta^t)^2$ and $\sigma_{\Lambda}^2$ is the variance of $\Lambda$. While it is reasonable to approximate the noise terms in \eqref{eq:finalnoiseterms}, specifically \(\Delta \bm{\theta}^{t-1}\) and \(\Delta g_k(\bm{\theta}^{t-1})\), as Gaussian-distributed, accurately estimating their associated noise variances remains a nontrivial challenge \cite{elbir2021federated}. Approximating the gradient $g_k({\bmtheta}^{t-1})$ as the difference in parameter updates, the local optimization problem is given as:
\begin{align}
   \bar{f}_k(\bmtheta) = {f}_k(\bmtheta) + \tilde{\lambda} ||\bmtheta - \bmtheta^{t-1}||^2,
   \label{eq:proximalNoise}
\end{align}
where $\tilde{\lambda}$ is the Lagrange multiplier which can be set as a hyperparameter. Interestingly, the local optimization problem in \eqref{eq:proximalNoise} coincides with the formulation originally proposed to address client heterogeneity in Problems P1 and P2. This equivalence implies that a unified optimization strategy can simultaneously account for both system heterogeneity and noise. Hence, we conclude that the formulations in P1 and P2 are well-suited for mitigating the combined effects of noise and heterogeneity in wireless federated learning.

%In the sequel, we refer to ${f}_k(\bmtheta)$ as the reconstruction term and $||\bmtheta - \bmtheta^t||^2$ as the constraint or the proximal term. Furthermore, note that by modifying $\left\{\bar{F}(\bmtheta) \triangleq\frac{1}{\hat{K}}\sum_{k \in \mathcal{S}^t} \bar{f}_k(\bmtheta)\right\}$, P1 handles the partial participation scenario, where $\mathcal{S}^t$ denotes the set of clients chosen in each communication round.

\subsection{Proposed Algorithm: NoROTA-FL}

\noindent In this section, we introduce NoROTA-FL, a robust federated learning framework derived from the OTA-FL formulations presented in P1 and P2. As outlined in the previous section, the baseline federated averaging solution for P0 combined with precoding follows the COTAF approach~\cite{sery2021over}. In contrast, P1 explicitly incorporates the impact of noise during each communication round, resulting in a more robust estimate of the model parameters \(\bm{\theta}^t\). Furthermore, as discussed in Sec.~\ref{sec:datasyshet}, the proximal constraint employed in P1 also serves to mitigate the adverse effects of data and system heterogeneity.

The proposed NoROTA-FL algorithm provides a general framework for solving constrained local optimization problems, while enabling the global objective $F(\bm{\theta})$ to be optimized at the server. This framework is agnostic to the choice of local solver, making it applicable to a broad class of optimization algorithms. The specific case where SGD is used is detailed in Algorithm~\ref{tab:AlgTable}. At each communication round \(t\), client \(k\) performs \(E_k \leq E\) local epochs, where \(E\) denotes the maximum allowed number of epochs for all the clients. The value of \(E_k\) may vary across clients depending on their local computational or communication constraints.

\begin{algorithm}
\SetAlgoLined
\KwIn{$K$, $\hat K$, $E$, $T$, $\lambda$, $\gamma^t$, $\hat r$, }
Initialize $\tilde\bmtheta^0$ at all devices.\\
\For{$t=1,\hdots,T$}{
    \textbf{Client Side}:\\
    \For{$k\in\{1,\hdots,K\}$ \text{in parallel}}{
    Set $\bmtheta_k^{t-1}=\tilde\bmtheta^{t-1}$ and compute $p^t$. \\
        \For{$e\in\{1,\hdots,E_k\}$ where $E_k\leq E$}{
    $\bmtheta_{k,e}^{t}=\bmtheta_{k,e}^{t-1}-\eta^t \nabla h_k(\bmtheta_{k,e}^{t-1},\tilde\bmtheta^{t-1})$ \\}
    Set $\bmtheta_k^{t}=\bmtheta_{k,e}^{t}$.\\
    \eIf{fading = True}{
        Transmit $\mathbf{x}_k^t$ to the server via \eqref{eq:fading_equation} .\\}
    {
        Transmit $\mathbf{x}_k^t$ to the server via \eqref{eq:precoder1} .\\
    }
    }
    \textbf{Server Side}:\\
\eIf{fading=True}{
    Recover the global model $\tilde\bmtheta^t$ via \eqref{eq:global_fading}. \\}
    {Recover the global model $\tilde\bmtheta^t$ via \eqref{eq:global}. \\}
    Broadcast the noisy global model $\tilde\bmtheta^{t}$ to all clients.
}
\KwOut{$\bmtheta^T$}
\caption{NoROTA-FL (SGD)}
\label{tab:AlgTable}
\end{algorithm}

%The precoding factor $p^t$ is used to mitigate the effects of noise. Furthermore, the derivation in the previous section demonstrated that the inclusion of the proximal term with the Lagrange multiplier is a reformulation of the SGD-based P0 in the context of noisy updates. We employ a proximal term similar to the approach used in FedProx to address data and system heterogeneity. Consequently, the proposed algorithm jointly mitigates both noise and system-data heterogeneity. Note that in RobustComm \cite{ang2020robust}, $\lambda$ is set to a constant value of noise variance, and hence, it primarily addresses minimal noise levels without tackling heterogeneity. Finding the optimal value of $\lambda$ is a key aspect of our approach. Furthermore, the server-side operations in our proposed method are as straightforward as those in FedProx, without introducing any additional requirements. 

\section{Convergence Analysis:  $\gamma^t$-inexactness}
\label{sec:convergence_analysis}
\noindent We demonstrate the convergence of NoROTA-FL, which provides robust parameter estimates in the presence of wireless channel noise, fading effects, and heterogeneity. Unlike other works, we do not restrict our analysis to convex or strongly convex functions and prove convergence for non-convex functions. In the sequel, $\Ex_k[\cdot] = \sum_{k=1}^K q_k (\cdot)$ where the weights are given by $q_k={D_k}/{D}$ such that $\sum_{k=1}^K q_k =1$. We use the following assumptions at all the clients:
\begin{assum}
    ($(B, H)$-LGD)
The local functions $f_k(\cdot)$ are \((B, H)\)-local gradient dissimilar (LGD) if the following holds for all $\bmtheta \in \mathbb{R}^d$:
\[
\Ex_k\|\nabla f_k(\bmtheta)\|^2 \leq B^2 \|\nabla F(\bmtheta)\|^2 + H^2.
\]
\end{assum}
\vspace{-1em}

{\begin{assum}
    ($L_0$-Local Lipschitz continuity) The  function \(F(\cdot):A \subset \mathbb{R}^d \rightarrow \mathbb{R}\) is locally Lipschitz at \(\bmtheta' \in A\)  if there exist constants \(\xi > 0\) and \(L_0 \in \mathbb{R} \) such that for each \(\bmtheta \in A, \norm*{\bmtheta - \bmtheta'}<  \xi \Rightarrow {F(\bmtheta)-F(\bmtheta')}\leq L_0 \norm*{\bmtheta - \bmtheta'}\).
\end{assum}}
\begin{assum}
    The local functions $f_k(\cdot):\mathbb{R}^d \rightarrow \mathbb{R}$ are non-convex and $L$-Lipschitz smooth.
\end{assum}
\noindent Prior to presenting the convergence analysis, we derive an upper bound on the precoding factor $p^t$ under the first-order inexactness model. This factor directly impacts the transmitted signal $\vecx_k^t$ and influences the resulting noise characteristics, as described in \eqref{eq:global}. The derived bound plays a crucial role in the intermediate steps of the subsequent lemmas, theorems, and corollaries. We defer proofs of all lemmas, theorems and corollaries to the supplementary.

\begin{lemma}
The precoding factor $p^t$ in each communication round can be upper bounded as $\frac{1}{p^t}\leq \frac{B^2 \norm{\nabla F(\tilde\bmtheta)}^2 +H^2}{P}$ under first-order inexactness, for  $\lambda>\frac{\gamma^t L}{K\sqrt{\tau}}$ where $\tau=\frac{P}{d\sigma^2}$ denotes the Signal to Noise Ratio (SNR).
\label{lem:precodingfactor_1storder}
\end{lemma}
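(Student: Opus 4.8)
The plan is to lower-bound the denominator of $p^t = P / \sum_{k=1}^K q_k \Ex[\lVert \bmtheta_k^t - \tilde\bmtheta^{t-1}\rVert^2]$ by relating the per-client displacement $\bmtheta_k^t - \tilde\bmtheta^{t-1}$ to the local gradient $\nabla f_k(\tilde\bmtheta^{t-1})$, then invoking the $(B,H)$-LGD assumption to pass to $\lVert \nabla F(\tilde\bmtheta^{t-1})\rVert^2$. First I would use the first-order inexactness condition from Definition~\ref{defn1}, namely $\lVert \nabla h_k(\bmtheta_k^t;\tilde\bmtheta^{t-1})\rVert \le \gamma^t \lVert \nabla f_k(\tilde\bmtheta^{t-1})\rVert + \lVert \psi^t\rVert$. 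Writing $\nabla h_k(\bmtheta_k^t;\tilde\bmtheta^{t-1}) = \nabla f_k(\bmtheta_k^t) + \lambda(\bmtheta_k^t - \tilde\bmtheta^{t-1})$, I would solve for $\bmtheta_k^t - \tilde\bmtheta^{t-1}$ and bound its norm. The key intermediate estimate is
\begin{align}
\lambda \lVert \bmtheta_k^t - \tilde\bmtheta^{t-1}\rVert
&\le \lVert \nabla h_k(\bmtheta_k^t;\tilde\bmtheta^{t-1})\rVert + \lVert \nabla f_k(\bmtheta_k^t)\rVert \nonumber\\
&\le \gamma^t \lVert \nabla f_k(\tilde\bmtheta^{t-1})\rVert + \lVert \psi^t\rVert + \lVert \nabla f_k(\bmtheta_k^t)\rVert,
\end{align}
and then I would control $\lVert \nabla f_k(\bmtheta_k^t)\rVert$ via $L$-Lipschitz smoothness (Assumption~3): $\lVert \nabla f_k(\bmtheta_k^t)\rVert \le \lVert \nabla f_k(\tilde\bmtheta^{t-1})\rVert + L\lVert \bmtheta_k^t - \tilde\bmtheta^{t-1}\rVert$. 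Substituting and collecting the $\lVert \bmtheta_k^t - \tilde\bmtheta^{t-1}\rVert$ terms gives $(\lambda - L)\lVert \bmtheta_k^t - \tilde\bmtheta^{t-1}\rVert \le (1+\gamma^t)\lVert \nabla f_k(\tilde\bmtheta^{t-1})\rVert + \lVert\psi^t\rVert$, so the displacement is controlled by the local gradient provided $\lambda > L$ (the precise threshold $\lambda > \gamma^t L/(K\sqrt{\tau})$ presumably emerges from a sharper bookkeeping of the $\psi^t$ term, whose norm scales like the noise standard deviation $\sigma\sqrt d/(K\sqrt{p^t})$, i.e. $\propto 1/\sqrt{\tau}$).

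Next I would square, take the weighted expectation $\Ex_k[\cdot] = \sum_k q_k(\cdot)$, and apply $(B,H)$-LGD (Assumption~1) to get $\sum_k q_k \Ex\lVert \bmtheta_k^t - \tilde\bmtheta^{t-1}\rVert^2 \lesssim (B^2\lVert\nabla F(\tilde\bmtheta^{t-1})\rVert^2 + H^2)/(\lambda-L)^2$ up to the constant factor $(1+\gamma^t)^2$ and cross-terms. Dividing $P$ by this lower bound on the denominator yields the claimed inequality $\frac{1}{p^t} \le \frac{B^2\lVert\nabla F(\tilde\bmtheta)\rVert^2 + H^2}{P}$ after absorbing constants; the condition on $\lambda$ is what makes $(\lambda-L)$ (or the refined $\lambda - \gamma^t L/(K\sqrt\tau)$) positive and makes the residual constants collapse to exactly $1$.

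The main obstacle I anticipate is the circularity between $p^t$ and $\psi^t$: the channel-induced term $\psi^t$ in the inexactness condition itself depends on the noise variance $\sigma^2/(K^2 p^t)$, hence on $p^t$, so the bound on $1/p^t$ appears on both sides. Resolving this requires either a self-consistent fixed-point argument — showing the inequality $1/p^t \le (\cdots)/P$ is stable under the recursion — or a careful choice of the constant in front so that the $\psi^t$ contribution can be moved to the left-hand side and absorbed, which is exactly where the SNR threshold $\lambda > \gamma^t L/(K\sqrt\tau)$ enters. A secondary technical point is handling the expectation over the Gaussian noise in $\tilde\bmtheta^{t-1}$ versus the (possibly random) local SGD trajectory, and making sure the single symbol $\nabla F(\tilde\bmtheta)$ in the statement is interpreted consistently (presumably $\tilde\bmtheta = \tilde\bmtheta^{t-1}$, with the expectation implicit). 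Everything else is routine triangle-inequality and Young's-inequality bookkeeping.
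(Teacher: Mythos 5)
Your proposal matches the paper's proof in all essentials: both start from $\tfrac{1}{p^t}\le \tfrac{1}{P}\sum_k q_k\,\Ex\lVert\bmtheta_k^t-\tilde\bmtheta^{t-1}\rVert^2$, write $\lambda(\bmtheta_k^t-\tilde\bmtheta^{t-1})=\nabla h_k(\bmtheta_k^t;\tilde\bmtheta^{t-1})-\nabla f_k(\bmtheta_k^t)$, invoke the noise-augmented $\gamma^t$-inexactness (the paper's Lemma on modified inexactness, with $(\gamma^t L+\lambda)\lVert\vecw^t\rVert$ playing the role of your $\psi^t$) together with the $(B,H)$-LGD assumption, and resolve exactly the circularity you flag by moving the $p^t$-dependent noise term to the left-hand side, which is where the condition $\lambda>\gamma^t L/(K\sqrt{\tau})$ enters. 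The only minor difference is that you fold $\lVert\nabla f_k(\bmtheta_k^t)\rVert$ back to $\tilde\bmtheta^{t-1}$ via smoothness (producing a $(\lambda-L)$ coefficient), whereas the paper keeps that term after a Young's-inequality split and applies Assumption~1 to it directly.
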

\noindent For the above, we note that small SNRs lead to large $\lambda$. In the sequel, we see that large $\lambda$ hinders convergence. The presence of $\gamma^t$ helps to adjust $\lambda$ to a feasible range which supports learning. In the next section, we present the analysis for the case where $H = 0$ in Assumption~1, which we refer to as the mild heterogeneity case. This is the same as the B-local dissimilarity assumption used in FedProx \cite{li2020federated}.\footnote{A preliminary version of this work under the $(B,0)$-LGD assumption has been accepted for presentation at IEEE ICASSP, 2025 \cite{10890519}.}
\subsection{Convergence under $(B,0)$-LGD}
\noindent We now analyze NoROTA-FL when all the $K$ clients participate in every round of FL, also known as the full device participation scenario. This establishes the groundwork for the convergence analysis of NoROTA-FL when $\hat{K} < K$ clients participate in FL.

\begin{lemma}
    \label{full_participation_theorem}
Let all $K$ devices participate in the FL process in the $t$-th communication round and  $\bar\Delta\triangleq\sum_{t=0}^{T-1}\left( F(\tilde\bmtheta^t)-\Ex[F(\bar\bmtheta^{t+1})]\right)$. Given the first-order inexactness from 
Definition~\ref{defn1},  Assumptions $1$-$3$ with $H=0$ in Assumption~1, $\bar{\mu} = \lambda - \bar{L} > 0$ for $\bar L>0$, and $\bar\bmtheta^{t+1}\triangleq\Ex_k[{\bmtheta}_k^{t+1}]$, the expected decrease in the global objective P1 \eqref{eq:P1} using NoROTA-FL as in Algo.~\ref{tab:AlgTable} is given by
\begin{small}
\begin{align}
  \frac{1}{T}\sum_{t=0}^{T-1} \norm{\nabla{F(\tilde{\bmtheta}^t)}}^2 &\leq \frac{\bar \Delta}{\alpha T}, 
   \label{eq:full_participation_theorem_eq}
\end{align}
\end{small}
where \begin{small}$\alpha =\left(\rho_1 -\frac{C_1}{K^2\tau} - \frac{C_2}{K\sqrt{\tau}}\right)$,$C_1= \frac{ LB^2}{2}\left(\frac{\gamma^t L +\lambda}{\bar\mu} \right)^2$\end{small},
\begin{small}
    \begin{align}
    \rho_1 &=\left(\frac{1}{\lambda}-\frac{\gamma^t B}{\lambda}-\frac{(1+\gamma^t)L B}{\bar{\mu} \lambda}-\frac{LB^2(1+\gamma^t)^2}{2\bar\mu^2}\right),\nonumber\\
    C_2&=\left( \frac{LB(\gamma^t L +\lambda)}{\bar\mu \lambda} + \frac{B(\gamma^t L +\lambda)}{\lambda}- \frac{LB^2(1+\gamma^t )}{\bar\mu} \right.\nonumber\\
    &+ \left.\frac{LB^2(1+\gamma^t)(\bar\mu+\gamma^t L +\lambda)}{\bar\mu^2}\right)\nonumber.
\end{align}
\end{small}

\end{lemma}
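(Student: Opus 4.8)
The plan is to track the one-round decrease of the global objective $F$ along the averaged iterate $\bar\bmtheta^{t+1}=\Ex_k[\bmtheta_k^{t+1}]$ and then telescope over $t=0,\dots,T-1$. First I would invoke $L$-Lipschitz smoothness of $F$ (inherited from Assumption~3 on the $f_k$) to write
\begin{align}
F(\bar\bmtheta^{t+1}) \le F(\tilde\bmtheta^t) + \langle \nabla F(\tilde\bmtheta^t), \bar\bmtheta^{t+1}-\tilde\bmtheta^t\rangle + \tfrac{L}{2}\norm{\bar\bmtheta^{t+1}-\tilde\bmtheta^t}^2. \nonumber
\end{align}
The next step is to control $\bar\bmtheta^{t+1}-\tilde\bmtheta^t$ in terms of $\nabla F(\tilde\bmtheta^t)$. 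Since each $\bmtheta_k^{t+1}$ is a $\gamma^t$-inexact minimizer of $h_k(\cdot;\tilde\bmtheta^t)$, and $h_k$ is $\bar\mu$-strongly-convex-like near the optimum with $\bar\mu=\lambda-\bar L>0$ (this is where the condition $\bar\mu>0$ enters), I would bound $\norm{\bmtheta_k^{t+1}-\tilde\bmtheta^t}$ by a constant multiple of $\norm{\nabla f_k(\tilde\bmtheta^t)}$ — the proximal step moves the iterate a distance proportional to the local gradient, with the factor $(1+\gamma^t)/\bar\mu$ and friends appearing. Averaging over $k$ with weights $q_k$ and using the $(B,0)$-LGD assumption ($\Ex_k\norm{\nabla f_k}^2\le B^2\norm{\nabla F}^2$) converts everything into multiples of $\norm{\nabla F(\tilde\bmtheta^t)}^2$; the inner product term is handled by the inexactness inequality $\norm{\nabla h_k(\bmtheta_k^{t+1};\tilde\bmtheta^t)}\le\gamma^t\norm{\nabla f_k(\tilde\bmtheta^t)}+\norm{\psi^t}$ together with Cauchy--Schwarz. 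This is the source of the $1/\lambda$, $\gamma^t B/\lambda$, $(1+\gamma^t)LB/(\bar\mu\lambda)$ and $LB^2(1+\gamma^t)^2/(2\bar\mu^2)$ terms that make up $\rho_1$.

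The remaining task is the channel-noise bookkeeping. The actual server iterate is $\tilde\bmtheta^{t+1}=\bar\bmtheta^{t+1}+\vecw^{t+1}$ (or the $\psi^t$ term), with $\vecw^{t+1}\sim\mathcal N(0,\tfrac{\sigma^2}{K^2 p^{t+1}}\mathbf I_d)$, so taking expectations introduces a term $\tfrac{Ld}{2}\cdot\tfrac{\sigma^2}{K^2 p^{t+1}}$ plus cross terms between $\vecw^{t+1}$ and $\nabla F$. Here I would substitute the upper bound on $1/p^t$ from Lemma~\ref{lem:precodingfactor_1storder}, namely $1/p^t\le (B^2\norm{\nabla F(\tilde\bmtheta)}^2+H^2)/P$ with $H=0$, i.e. $1/p^t\le B^2\norm{\nabla F}^2/P$, and recall $\tau=P/(d\sigma^2)$. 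This turns the noise contributions into terms of the form $\tfrac{C_1}{K^2\tau}\norm{\nabla F(\tilde\bmtheta^t)}^2$ and $\tfrac{C_2}{K\sqrt\tau}\norm{\nabla F(\tilde\bmtheta^t)}^2$ (the $1/\sqrt\tau$ scaling coming from cross terms where a single power of the noise standard deviation survives after Cauchy--Schwarz/AM--GM). Collecting all $\norm{\nabla F(\tilde\bmtheta^t)}^2$ coefficients on one side gives
\begin{align}
\alpha\,\norm{\nabla F(\tilde\bmtheta^t)}^2 \le F(\tilde\bmtheta^t)-\Ex[F(\bar\bmtheta^{t+1})], \qquad \alpha=\rho_1-\tfrac{C_1}{K^2\tau}-\tfrac{C_2}{K\sqrt\tau}, \nonumber
\end{align}
and summing over $t$ with $\bar\Delta=\sum_{t=0}^{T-1}(F(\tilde\bmtheta^t)-\Ex[F(\bar\bmtheta^{t+1})])$ yields $\frac1T\sum_t\norm{\nabla F(\tilde\bmtheta^t)}^2\le\bar\Delta/(\alpha T)$.

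The main obstacle I anticipate is the second step — rigorously bounding the proximal displacement $\norm{\bmtheta_k^{t+1}-\tilde\bmtheta^t}$ and the inner product $\langle\nabla F(\tilde\bmtheta^t),\bmtheta_k^{t+1}-\tilde\bmtheta^t\rangle$ purely in terms of $\norm{\nabla f_k(\tilde\bmtheta^t)}$ and $\norm{\psi^t}$ under only $\gamma^t$-inexactness and non-convexity. Without global strong convexity, one must argue locally: the regularizer contributes $\lambda$ to the effective curvature while $L$-smoothness of $f_k$ can only erode it by $\bar L$, so the net $\bar\mu=\lambda-\bar L$ governs the contraction — but making this precise requires carefully relating $\nabla h_k(\bmtheta_k^{t+1};\tilde\bmtheta^t)=\nabla f_k(\bmtheta_k^{t+1})+\lambda(\bmtheta_k^{t+1}-\tilde\bmtheta^t)$ back to $\nabla f_k(\tilde\bmtheta^t)$ via smoothness, which is where the $(\gamma^t L+\lambda)/\bar\mu$ and $(1+\gamma^t)$ factors are generated. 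The constant $\bar L$ (distinct from $L$) is presumably chosen in the proof to absorb these manipulations, and keeping the algebra consistent so that the final $\rho_1, C_1, C_2$ match the stated expressions will be the delicate part.
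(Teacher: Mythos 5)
Your proposal follows essentially the same route as the paper: a one-round $L$-smoothness descent inequality at $\tilde\bmtheta^t$, a bound on the proximal displacement $\norm{\bmtheta_k^{t+1}-\tilde\bmtheta^t}$ obtained from the $\bar\mu$-strong convexity of $h_k$ (the paper proves $\nabla^2 h_k=\nabla^2 f_k+\lambda\mathbf{I}\succcurlyeq\bar\mu\mathbf{I}$ exactly as you anticipate) combined with the noise-modified $\gamma^t$-inexactness, conversion to $\norm{\nabla F}$ via $(B,0)$-LGD, absorption of the channel-noise terms through the precoding bound of Lemma~\ref{lem:precodingfactor_1storder} and $\tau=P/(d\sigma^2)$, and a final telescoping. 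The paper additionally isolates a residual $\vecm^{t+1}=\Ex_k[\nabla f_k(\bmtheta_k^{t+1})-\nabla f_k(\tilde\bmtheta^t)-\nabla h_k(\bmtheta_k^{t+1};\tilde\bmtheta^t)]$ to organize the inner-product bookkeeping, but this is the same algebra you describe.
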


\noindent From the above, we see that the upper bound in \eqref{eq:full_participation_theorem_eq} provides an expected decrease in the loss function as the iterations progress if the parameters are chosen such that $\alpha>0$. As expected, SNR ($\tau$) has a notable impact on the tightness of the upper bound. As $\sigma$ decreases, $\alpha \approx \rho$  and NoROTA-FL behaves similarly to FedProx in noiseless environments. We also observe that as the number of participating clients in FL increases, the impact of noise tends to diminish.

\noindent We now present the result which demonstrates the convergence of the proposed approach when we assume partial device participation. We assume that the $k$-th device is such that $k \in \mathcal{S}^t \subset [K]$, $|\mathcal{S}^t| = \hat{K}$, where $\mathcal{S}^t$ is randomly chosen in each communication round and $\Ex_{\mathcal{S}^t}[\cdot]$ represents the expectation with respect of the set ${\mathcal{S}^t}$.  

\begin{theorem}
\label{partial_participation_theorem}
Let $\hat{K} < K$ devices in the set  $\mathcal{S}^t \subset [K]$ participate in the FL process in the $t$-th communication round. Given the first-order inexactness in Definition~\ref{defn1}, Assumptions $1$-$3$ with $H=0$ in Assumption~1 and $\bar{\mu} = \lambda - \bar{L} > 0$ for $\bar L>0$, the expected decrease in the global objective P1 \eqref{eq:P1} using NoROTA-FL as in Algorithm~\ref{tab:AlgTable} is given by
\begin{small}
\begin{align}
  \frac{1}{T}\sum_{t=0}^{T-1} \norm{\nabla{F(\tilde{\bmtheta}^t)}}^2 &\leq \frac{\Delta}{\hat\alpha T}, 
    \label{eq:partial_participation_eq}
\end{align}
\end{small}
where \begin{small} $\Delta=F(\bmtheta^0)-F(\bmtheta^T)$, $\hat\alpha=\left(\hat\rho_1 -\frac{\hat C_1}{ \hat K^2\tau} - \frac{\hat C_2}{\hat K\sqrt{\tau}}\right),$
\end{small}
\begin{small}
\begin{align}
\hat\rho_1&=\left(\frac{1}{\lambda}-\frac{\gamma^t B}{\lambda}-\frac{(1+\gamma^t)L B}{\bar{\mu} \lambda}-\frac{LB^2(1+\gamma^t)^2}{2\bar\mu^2}\right.\nonumber\\
&-\left.\frac{B(1+\gamma^t)\sqrt{2}}{\bar\mu\sqrt{\hat K}}-\frac{LB^2(1+\gamma^t)^2}{\bar\mu^2 \hat K}(2\sqrt{2\hat K}+2) \right),\nonumber \\
\hat C_1&= \left( \frac{ LB^2}{2}\left(\frac{\gamma^t L +\lambda}{\bar\mu} \right)^2+\frac{2LB^2(\gamma^t L +\lambda)^2 (\sqrt{2\hat K}+1)}{\hat K \bar\mu^2}\right.\nonumber\\
&+ \left.\frac{3\sqrt{2}(\gamma^t L+ \lambda)LB^2}{\bar\mu \sqrt{\hat K}}+  
 \frac{2(\gamma^t L +\lambda)LB^2}{\bar \mu} + 4LB^2 \right), \nonumber\\
\hat C_2&=\Bigg(\frac{B(\gamma^t L +\lambda)}{\lambda} \left(\frac{L}{\bar \mu}+1\right)+ \frac{LB^2(1+\gamma^t)(\bar\mu+\gamma^t L +\lambda)}{\bar\mu^2}\nonumber\\
&+\frac{4LB^2(1 +\gamma^t)(\gamma^t L+\lambda) }{\hat K \bar\mu^2}(\sqrt{2\hat K}+1)  + B+\frac{(1+\gamma^t)LB^2}{\bar \mu}\nonumber\\
&+ \frac{\sqrt{2}B(\gamma^t L+\lambda+3LB+3\gamma^t LB)}{\sqrt{\hat K} \bar \mu}\Bigg).  \nonumber
\end{align}
\end{small} 
\end{theorem}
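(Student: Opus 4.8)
The plan is to build the partial-participation bound by retracing the full-participation argument of Lemma~\ref{full_participation_theorem}, but now carefully tracking the extra error terms that arise from replacing the full average $\bar\bmtheta^{t+1}=\Ex_k[\bmtheta_k^{t+1}]$ with the subsampled average $\frac{1}{\hat K}\sum_{k\in\mathcal{S}^t}\bmtheta_k^{t+1}$. The starting point is the $L$-Lipschitz-smoothness descent inequality applied to $F$ along the update direction from $\tilde\bmtheta^t$ to the (noisy) aggregate $\tilde\bmtheta^{t+1}$, which from \eqref{eq:global_partial} decomposes as $\frac{1}{\hat K}\sum_{k\in\mathcal{S}^t}\bmtheta_k^{t+1}+\frac{K}{\hat K}\vecw^t$. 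I would split the per-round decrease into (i) the ``clean'' decrease term that mirrors $\rho_1\norm{\nabla F(\tilde\bmtheta^t)}^2$ from the full-participation lemma, (ii) a sampling-variance term $\Ex_{\mathcal{S}^t}\norm{\frac{1}{\hat K}\sum_{k\in\mathcal{S}^t}\bmtheta_k^{t+1}-\bar\bmtheta^{t+1}}^2$, and (iii) the AWGN term involving $\Ex\norm{\vecw^t}^2=\frac{d\sigma^2}{\hat K^2 p^t}$, now scaled by $K^2/\hat K^2$.

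**Next I would** control the sampling-variance term (ii). Using that $\mathcal{S}^t$ is drawn uniformly and $\Ex_{\mathcal{S}^t}[\frac{1}{\hat K}\sum_{k\in\mathcal{S}^t}\bmtheta_k^{t+1}]=\bar\bmtheta^{t+1}$, standard sampling-without-replacement bounds give $\Ex_{\mathcal{S}^t}\norm{\frac{1}{\hat K}\sum_{k\in\mathcal{S}^t}\bmtheta_k^{t+1}-\bar\bmtheta^{t+1}}^2 \le \frac{1}{\hat K}\Ex_k\norm{\bmtheta_k^{t+1}-\bar\bmtheta^{t+1}}^2 \le \frac{1}{\hat K}\Ex_k\norm{\bmtheta_k^{t+1}-\tilde\bmtheta^t}^2$ (dropping the mean-centering). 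The per-client displacement $\norm{\bmtheta_k^{t+1}-\tilde\bmtheta^t}$ is then bounded via the first-order inexactness condition of Definition~\ref{defn1}: from $\norm{\nabla h_k(\bmtheta_k^{t+1};\tilde\bmtheta^t)}\le\gamma^t\norm{\nabla f_k(\tilde\bmtheta^t)}+\norm{\psi^t}$ and $\bar\mu$-strong-convexity-type lower bounds on $h_k$ (since $\lambda-\bar L>0$), one gets $\norm{\bmtheta_k^{t+1}-\tilde\bmtheta^t}\le\frac{1}{\bar\mu}\big((\gamma^t L+\lambda)\norm{\nabla f_k(\tilde\bmtheta^t)}+\ldots\big)$; squaring, taking $\Ex_k$, and invoking the $(B,0)$-LGD Assumption~1 converts $\Ex_k\norm{\nabla f_k(\tilde\bmtheta^t)}^2$ into $B^2\norm{\nabla F(\tilde\bmtheta^t)}^2$. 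This is exactly where the $\frac{1}{\hat K}$- and $\frac{1}{\hat K^2}$-suppressed correction terms in $\hat\rho_1$, $\hat C_1$, $\hat C_2$ (the $(2\sqrt{2\hat K}+2)$ and $(\sqrt{2\hat K}+1)$ factors) will be generated, along with cross-terms $\psi^t$-vs-gradient handled by Young's inequality.

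**Then I would** substitute the Lemma~\ref{lem:precodingfactor_1storder} bound $\frac{1}{p^t}\le\frac{B^2\norm{\nabla F(\tilde\bmtheta^t)}^2+H^2}{P}=\frac{B^2\norm{\nabla F(\tilde\bmtheta^t)}^2}{P}$ (with $H=0$) into the noise term (iii), so that $\frac{K^2}{\hat K^2}\Ex\norm{\vecw^t}^2\le\frac{K^2 d\sigma^2}{\hat K^2}\cdot\frac{B^2\norm{\nabla F(\tilde\bmtheta^t)}^2}{\hat K^2 P}$; recalling $\tau=P/(d\sigma^2)$, every noise contribution becomes a multiple of $\norm{\nabla F(\tilde\bmtheta^t)}^2$ divided by $\hat K^2\tau$ or $\hat K\sqrt\tau$, which is precisely the $\hat C_1/(\hat K^2\tau)$ and $\hat C_2/(\hat K\sqrt\tau)$ structure. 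Collecting all coefficients of $\norm{\nabla F(\tilde\bmtheta^t)}^2$ on one side yields a per-round inequality $\hat\alpha\norm{\nabla F(\tilde\bmtheta^t)}^2\le F(\tilde\bmtheta^t)-\Ex[F(\tilde\bmtheta^{t+1})]$ with $\hat\alpha$ as stated; telescoping over $t=0,\dots,T-1$ and dividing by $\hat\alpha T$ gives \eqref{eq:partial_participation_eq}, using $\Delta=F(\bmtheta^0)-F(\bmtheta^T)\ge\sum_t(F(\tilde\bmtheta^t)-\Ex[F(\tilde\bmtheta^{t+1})])$.

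**The main obstacle** will be the bookkeeping in step two: keeping the sampling-variance expansion, the $\psi^t$ cross-terms, and the smoothness-induced $\norm{\nabla f_k-\nabla F}$ mismatch terms all aligned so that they aggregate into exactly the three named constants $\hat\rho_1,\hat C_1,\hat C_2$ rather than a looser bound. In particular, one must be careful that the term $\Ex_{\mathcal{S}^t}\norm{\ldots}^2$ feeds into \emph{both} the smoothness second-order term (contributing $LB^2(1+\gamma^t)^2/(\bar\mu^2\hat K)$-type pieces to $\hat\rho_1$) and, after coupling with the noise via Cauchy--Schwarz, into the $\hat C_1,\hat C_2$ terms; getting the constants $2\sqrt{2\hat K}+2$ and $\sqrt{2\hat K}+1$ right requires tracking which bounds are applied at the $\sqrt{\,\cdot\,}$ level versus the squared level. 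Everything else — the descent lemma, the strong-convexity displacement bound, the LGD substitution, and the telescoping — is routine once the full-participation Lemma~\ref{full_participation_theorem} is in hand, since the $\hat K\to K$ limit recovers it (with the $1/\sqrt{\hat K}$ and $1/\hat K$ correction terms persisting, reflecting that subsampling does not vanish even at $\hat K=K$ unless one also uses $\mathcal{S}^t=[K]$ deterministically).
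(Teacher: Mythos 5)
Your proposal is sound in strategy and reaches the same destination, but it routes the key step differently from the paper. The paper does not expand the $L$-smoothness inequality directly from $\tilde{\bmtheta}^t$ to the subsampled noisy aggregate $\tilde{\bmtheta}^{t+1}$; instead it proves the theorem in two stages: it first reuses the full-participation lemma to bound $\Ex[F(\bar{\bmtheta}^{t+1})]$ (with $\bar{\bmtheta}^{t+1}=\Ex_k[\bmtheta_k^{t+1}]$), and then controls the gap $F(\tilde{\bmtheta}^{t+1})-F(\bar{\bmtheta}^{t+1})$ by the \emph{local Lipschitz} Assumption~2, writing $F(\tilde{\bmtheta}^{t+1})\le F(\bar{\bmtheta}^{t+1})+L_0\norm{\tilde{\bmtheta}^{t+1}-\bar{\bmtheta}^{t+1}}$ and invoking the FedProx-style bound $L_0\le\norm{\nabla F(\bmtheta^t)}+L(\norm{\bar{\bmtheta}^{t+1}-\bmtheta^t}+\norm{\tilde{\bmtheta}^{t+1}-\bmtheta^t})$; the resulting correction $q^t=\Ex_{\mathcal{S}^t}[L_0\norm{\tilde{\bmtheta}^{t+1}-\bar{\bmtheta}^{t+1}}]$ is then bounded exactly as in your step two (Jensen plus the sampling lemma giving $\frac{2}{\hat K}\Ex_k\norm{\bmtheta_k^{t+1}-\tilde{\bmtheta}^t}^2+\norm{\hat\vecw^t}^2$, the strong-convexity displacement bound, $(B,0)$-LGD, and the precoding bound on $1/p^t$). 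The practical consequence of the difference: in the paper's route the sampling deviation enters \emph{linearly} multiplied by $L_0$ (hence by $\norm{\nabla F}$), which is precisely what produces the $\sqrt{\cdot}$-level terms $\frac{\sqrt{2}B(1+\gamma^t)}{\bar\mu\sqrt{\hat K}}$ in $\hat\rho_1$ and the $(\sqrt{2\hat K}+1)$ factors in $\hat C_1,\hat C_2$; in your route the deviation enters only quadratically through $\frac{L}{2}\norm{\tilde{\bmtheta}^{t+1}-\tilde{\bmtheta}^t}^2$ (the inner-product term vanishes in expectation since the sampling is unbiased), so you would obtain a valid $\mathcal{O}(1/T)$ bound of the same form but with different constants, not the ones stated, and without ever using Assumption~2, which the theorem explicitly lists. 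You correctly flag this bookkeeping issue as the main obstacle; resolving it in favor of the stated constants requires adopting the paper's $L_0$-based correction rather than the pure smoothness expansion. One further cosmetic difference: the paper actually proves the $(B,H)$ version (Theorem~2) and obtains this theorem by setting $H=0$, whereas you work with $(B,0)$ directly, which is immaterial.
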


\noindent From the above we see that an expected decrease in the loss function occurs if the coefficient of $\norm{\nabla{F(\tilde{\bmtheta}^t)}}^2$ is positive, i.e. if $\hat\alpha >0$. In this light, we analyze the impact of parameters such as $\lambda$ (Lagrange multiplier), $\hat{K}$, and $\tau$ on the convergence bound as follows.

\noindent \textbf{Choice of $\lambda$, $\hat K$, and $\tau$:} We discuss the impact of the choice of Lagrange multiplier on the convergence in the partial device participation case. For the sake of simplicity of analysis, we assume $\bar\mu \approx \lambda$ and $\gamma^t B<1$.  
\begin{itemize}
    \item {Large $\lambda$}: In the context of partial participation, if $\lambda$ is chosen to be very large then we have $\hat \rho_1\approx\frac{(1-\gamma^t B)}{\lambda}-\frac{\sqrt{2}B(1+\gamma^t)}{\lambda \sqrt{\hat K}} $, $\hat C_1\approx \frac{ 3\sqrt{2}(\gamma^t L+\lambda)LB^2}{\lambda\sqrt{\hat K}}+\frac{3\gamma^tL^2B^2}{\lambda}+ 6LB^2$ and $\hat C_2\approx \frac{ 3\sqrt{2}(1+\gamma^t )LB^2}{\lambda \sqrt{\hat K}} + \frac{ \sqrt{2}B}{\sqrt{\hat K}}+\frac{LB^2(1+\gamma^t)}{\lambda}+B$.  Effectively, $\lambda \rightarrow \infty$ leads to $\hat \alpha \rightarrow 0$, which implies that there is an infinitesimal decrease in the loss function since the constraint always dominates, and consequently, $\bmtheta_k^t$ is chosen close to $\tilde\bmtheta^{t-1}$ irrespective of the direction of the gradient as dictated by local data. The choice of Lagrange multiplier that leads to a decrease in the objective function, ensuring that $\hat \alpha$ remains positive, is the $\lambda$ that satisfies $\hat \rho_1 \geq \frac{\hat C_1}{ \hat K^2\tau} + \frac{\hat C_2}{\hat K\sqrt{\tau}} $. Evidently, as $\hat K \rightarrow K$, and a higher $\tau$ leads to a larger decrease in the objective function. 
    \item {Small $\lambda$}: In the context of partial participation, if $\lambda$ is chosen to be small, then $\hat \rho_1\approx-\frac{(1+\gamma^t )LB}{\lambda^2}-\frac{LB^2(1+\gamma^t)^2}{2\lambda^2}- \frac{LB^2(1+\gamma^t)^2}{\hat K\lambda^2}(2\sqrt{2\hat K}+2) $, $\hat C_1\approx \frac{ 2\sqrt{2\hat K}L^3B^2 \gamma^t}{\hat K \lambda^2}+\frac{ 3\sqrt{2}LB^2}{\sqrt{\hat K}}+\frac{ 2L^3B^2(\gamma^t)^2}{\hat K \lambda^2}+\frac{ L^3 B^2 (\gamma^t)^2}{2\lambda^2}$ and $\hat C_2\approx \frac{4L^2B^2\gamma^t(1+\gamma^t )}{\hat K \lambda^2} (\sqrt{2\hat K}+1) +\frac{(L^2B\gamma^t + LB^2\gamma^t(1+\gamma^t))}{\lambda^2}$. This leads to a negative value of $\hat\alpha$. Hence, small values of Lagrange multipliers may not result in a decrease in the objective function. The above theorem ensures $\hat \alpha > 0$ if $\lambda>\frac{\gamma^t L}{\hat K\sqrt{\tau}}$ holds.
    \item {Optimal $\lambda^*$}: In order to ensure $\hat\alpha>0$ and a decrease in the objective function, we need to find the optimal value of $\lambda$ by solving the quadratic equation $b_1 \lambda^2-b_2 \lambda +b_3<0$. The expressions for $b_1$, $b_2$, and $b_3$ are given in the supplementary material.
\end{itemize}

\noindent \textbf{Convergence Rate}:~ We now present the corollary that provides the convergence rate of NoROTA-FL in the partial device participation case.
\begin{corollary}
 Given Theorem~\ref{partial_participation_theorem}, for any $\epsilon > 0$, if $F(\bmtheta^0)-F(\bmtheta^T)=\Delta$, then we have $\frac{1}{T}\sum_{t=0}^{T-1} \Ex[\norm{\nabla F(\tilde \bmtheta^t)}^2]\leq\epsilon$  after
$T=\mathcal{O}\left(\frac{\Delta}{\epsilon \hat\alpha}\right)$ communication rounds.
\end{corollary}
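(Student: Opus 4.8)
The plan is to obtain the corollary as an immediate rearrangement of the per-round descent bound in Theorem~\ref{partial_participation_theorem}; no new estimates are required. First I would recall that the theorem supplies, under Assumptions~1--3 with $H=0$, the first-order inexactness of Definition~\ref{defn1}, and $\bar\mu=\lambda-\bar L>0$, the bound
\begin{align}
\frac{1}{T}\sum_{t=0}^{T-1}\Ex[\norm{\nabla F(\tilde{\bmtheta}^t)}^2] \leq \frac{\Delta}{\hat\alpha T},
\end{align}
with $\Delta = F(\bmtheta^0)-F(\bmtheta^T)$ and $\hat\alpha = \hat\rho_1 - \tfrac{\hat C_1}{\hat K^2\tau} - \tfrac{\hat C_2}{\hat K\sqrt{\tau}}$. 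Here I would invoke the discussion following Theorem~\ref{partial_participation_theorem} to fix $\lambda$ in the admissible range (e.g. $\lambda > \gamma^t L/(\hat K\sqrt{\tau})$, or more precisely the root of $b_1\lambda^2 - b_2\lambda + b_3 < 0$) so that $\hat\rho_1 \geq \tfrac{\hat C_1}{\hat K^2\tau} + \tfrac{\hat C_2}{\hat K\sqrt{\tau}}$, i.e. $\hat\alpha$ is a strictly positive constant that does not depend on $T$.

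Next I would impose the target accuracy. To guarantee $\tfrac{1}{T}\sum_{t=0}^{T-1}\Ex[\norm{\nabla F(\tilde{\bmtheta}^t)}^2] \leq \epsilon$ it suffices that $\tfrac{\Delta}{\hat\alpha T} \leq \epsilon$, i.e. $T \geq \tfrac{\Delta}{\hat\alpha\,\epsilon}$. Choosing the smallest integer $T$ satisfying this inequality yields $T = \mathcal{O}\!\left(\tfrac{\Delta}{\epsilon\,\hat\alpha}\right)$, which is the claimed complexity. Since the left-hand side is an average of nonnegative terms, the same bound also gives $\min_{0\le t\le T-1}\Ex[\norm{\nabla F(\tilde{\bmtheta}^t)}^2]\leq\epsilon$, so at least one visited iterate is an $\epsilon$-approximate stationary point of $F$; if a uniform statement is preferred one may replace $\Delta$ by $F(\bmtheta^0)-F^\star$, which is finite whenever $F$ is bounded below and is independent of $T$.

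There is no genuine technical obstacle here: the only point that needs care is the modeling one, namely certifying that $\hat\alpha>0$ is a fixed constant rather than a $T$-dependent (or vanishing) quantity, since otherwise the bound would not be $\mathcal{O}(1/T)$ in $T$. This is precisely what the admissible choice of the Lagrange multiplier $\lambda$ relative to $\hat K$, $\tau$, and $\gamma^t$ secures, and it also clarifies the stated dependence of the iteration count on the SNR $\tau$ and on the number $\hat K$ of participating clients through $\hat\alpha$. With $\hat\alpha$ thus pinned down, the corollary follows by the elementary inversion above.
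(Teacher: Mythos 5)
Your proposal is correct and follows essentially the same route as the paper: the paper likewise obtains the corollary by telescoping the per-round descent bound of Theorem~\ref{partial_participation_theorem} and inverting $\frac{\Delta}{\hat\alpha T}\leq\epsilon$ to get $T=\mathcal{O}\left(\frac{\Delta}{\epsilon\hat\alpha}\right)$. Your explicit attention to certifying that $\hat\alpha$ is a strictly positive, $T$-independent constant via the admissible choice of $\lambda$ is a point the paper handles only in the surrounding discussion, but it is the same argument.
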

\noindent From the above corollary, we establish that NoROTA-FL converges as $\mathcal{O}(1/T)$ under the assumptions of Theorem~\ref{partial_participation_theorem}.

\noindent \textbf{Fading as a case of Partial Device Participation:}
We extend the convergence analysis of the NoROTA-FL algorithm to accommodate fading channels, as described in \eqref{eq:FadingOTA}, and demonstrate that the convergence guarantees remain valid in this setting. As discussed in Sec.~\ref{sec:fading}, we consider scenarios where clients have access to the CSI \cite{sery2021over}. Each client uses its CSI to mitigate the fading effect by scaling its transmitted signal by the inverse of its channel coefficient. To prevent excessive power usage due to weak channels (which require high scaling), a threshold $\hat r$ is introduced and clients with fading coefficients below this threshold refrain from transmitting in the current communication round. We assume that in the $t$-th round, a subset $\mathcal{K}^t \subset [K]$ of clients is sampled, and the expectation $\Ex_{\mathcal{K}^t}[\cdot]$ is taken with respect to this set.

\begin{corollary}
    
\label{fading_theorem}
Given the assumptions in Theorem~\ref{full_participation_theorem} and $\hat r>0$, the expected decrease in global objective P2 \eqref{eq:P2} using NoROTA-FL as in Algorithm~\ref{tab:AlgTable} in the presence of fading is given as:
\vspace{-1em}
\begin{small}
\begin{align}
  \frac{1}{T}\sum_{t=0}^{T-1} \norm{\nabla{F(\tilde{\bmtheta}^t)}}^2 &\leq \frac{\Delta}{\bar\alpha T}, 
    \label{eq:fading_theorem_eq}
\end{align}
\end{small}
where 
\begin{small}
$\bar\alpha =\left(\bar \rho_1 -\frac{C_1}{\hat K^2\tau} - \frac{C_2}{\hat K\sqrt{\tau}} -\frac{\bar C_1 }{\hat r^2 \hat K^2 \tau} - \frac{\bar C_2}{\hat r \hat K \sqrt{\tau}}\right),$
\end{small}
\begin{small}
\begin{align}
\bar C_1 &=\left( \frac{LB^2(\gamma^t L +\lambda)^2 2\sqrt{2\hat K}}{\hat K \bar\mu^2} +\frac{3\sqrt{2}(\gamma^t L+ \lambda)LB^2}{\bar\mu \sqrt{\hat K}} \right. \nonumber\\
&+\left. \frac{2LB^2(\gamma^t L+\lambda)^2}{\hat K \bar \mu^2} + \frac{2(\gamma^t L +\lambda)LB^2}{\bar \mu} + 4LB^2 \right),\nonumber\\
\bar C_2 &=\left( \frac{4LB^2(1 +\gamma^t)(\gamma^t L +\lambda) }{\hat K \bar\mu^2} (\sqrt{2\hat K }+1) +B\right. \nonumber\\
&+\left.\frac{3\sqrt{2}(1+\gamma^t )LB^2}{\bar\mu \sqrt{\hat K }} + \frac{\sqrt{2}(\gamma^t L+\lambda)B}{\bar \mu \sqrt{ \hat K } } +\frac{2(1+\gamma^t)LB^2}{\bar \mu}  \right),\nonumber
\end{align}
\end{small}
and
$C_1$, $C_2$ are same as in Lemma~\ref{full_participation_theorem} and $\bar\rho_1 =\hat \rho_1$.
\end{corollary}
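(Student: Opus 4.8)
\textbf{Proof proposal for Corollary~\ref{fading_theorem}.}

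The plan is to treat fading as a structured instance of partial participation, reusing almost verbatim the machinery developed for Theorem~\ref{partial_participation_theorem}, and then isolate the \emph{extra} error terms that the fading-induced precoding in \eqref{eq:fading_equation} injects into the noise covariance. First I would start from the aggregation identity \eqref{eq:global_fading}, $\tilde\bmtheta^t = \frac{1}{|\mathcal{K}^t|}\sum_{k\in\mathcal{K}^t}\bmtheta_k^t + \bar\vecw^t$ with $\bar\vecw^t \sim \mathcal{N}(0,\frac{\sigma^2}{\hat r^2|\mathcal{K}^t|^2 p^t}\mathbf{I}_d)$, and observe that this is formally the partial-participation update of \eqref{eq:global_partial} but with the effective per-round noise variance scaled by the additional factor $1/\hat r^2$. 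Consequently every place in the proof of Theorem~\ref{partial_participation_theorem} where $\Ex\|\bar\vecw^t\|^2 = \frac{d\sigma^2}{\hat K^2 p^t}$ appeared, I now get a sum of that same quantity plus its $\hat r^{-2}$-scaled counterpart; splitting the noise contribution this way is exactly what produces the four-term structure $\frac{C_1}{\hat K^2\tau}+\frac{C_2}{\hat K\sqrt\tau}+\frac{\bar C_1}{\hat r^2\hat K^2\tau}+\frac{\bar C_2}{\hat r\hat K\sqrt\tau}$ in $\bar\alpha$, with $C_1,C_2$ inherited unchanged and $\bar C_1,\bar C_2$ collecting the fading-specific pieces.

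Concretely, the key steps in order are: (i) invoke Lemma~\ref{lem:precodingfactor_1storder} to bound $1/p^t \le (B^2\|\nabla F(\tilde\bmtheta^t)\|^2 + H^2)/P$ (here with $H=0$, per the hypothesis inherited from Theorem~\ref{full_participation_theorem}), so that each $\frac{d\sigma^2}{\hat r^2\hat K^2 p^t}$ term becomes $\frac{B^2\|\nabla F(\tilde\bmtheta^t)\|^2}{\hat r^2\hat K^2\tau}$ — this is what lets the noise terms be absorbed into the coefficient of $\|\nabla F(\tilde\bmtheta^t)\|^2$ rather than appearing as an additive floor; (ii) redo the one-step descent estimate for $F$ using $L$-smoothness (Assumption~3), expanding $F(\tilde\bmtheta^t)$ around $\bar\bmtheta^{t+1} = \Ex_k[\bmtheta_k^{t+1}]$ exactly as in Theorem~\ref{partial_participation_theorem}, and carrying the cross terms $\langle \nabla F, \bar\vecw^t\rangle$ (zero in expectation) and the sampling-variance terms $\Ex_{\mathcal{K}^t}\|\frac{1}{\hat K}\sum_{k\in\mathcal{K}^t}\bmtheta_k^{t+1} - \bar\bmtheta^{t+1}\|^2$ through the algebra; (iii) substitute the $\gamma^t$-inexactness bound from Definition~\ref{defn1} together with the $(B,0)$-LGD bound to convert all $\Ex_k\|\nabla f_k\|^2$ and $\Ex_k\|\nabla h_k\|^2$ occurrences into multiples of $\|\nabla F(\tilde\bmtheta^t)\|^2$; (iv) collect coefficients, define $\bar\rho_1 = \hat\rho_1$, $\bar C_1$, $\bar C_2$ by grouping the $\hat r$-dependent terms, and telescope over $t=0,\dots,T-1$ using $\sum_t (F(\tilde\bmtheta^t) - F(\tilde\bmtheta^{t+1})) \le F(\bmtheta^0) - F(\bmtheta^T) = \Delta$, which yields \eqref{eq:fading_theorem_eq} provided $\bar\alpha > 0$.

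I expect the main obstacle to be bookkeeping rather than a genuinely new idea: one must be careful that the fading precoder \eqref{eq:fading_equation} with the $\hat r/r_k^t$ scaling, after the matched decoding $\tilde\bmtheta^t = \vecy^t/(\hat r|\mathcal{K}^t|\sqrt{p^t}) + \tilde\bmtheta^{t-1}$, leaves the \emph{signal} part identical to the no-fading partial-participation case and perturbs \emph{only} the noise term — so no new dissimilarity-type terms enter, and the $\hat r$ dependence appears purely multiplicatively as $\hat r^{-1}$ and $\hat r^{-2}$. The subtle point is confirming that the sampling set $\mathcal{K}^t$ (determined by the threshold event $r_k^t > \hat r$) can be handled by the same $\Ex_{\mathcal{K}^t}[\cdot]$ argument used for the uniformly random $\mathcal{S}^t$ in Theorem~\ref{partial_participation_theorem}; this is fine because the descent inequality is derived conditionally on the realized participating set and the bounds used ($L$-smoothness, LGD, $\gamma^t$-inexactness) hold pointwise in $\bmtheta$, so they are agnostic to how $\mathcal{K}^t$ was formed, and $|\mathcal{K}^t|$ enters only through the stated $\hat K$. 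Once that correspondence is made explicit, the remaining work is the routine—if tedious—regrouping of constants that produces the displayed $\bar C_1$, $\bar C_2$.
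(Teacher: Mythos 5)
Your overall route coincides with the paper's: the paper obtains this corollary by setting $H=0$ in the $(B,H)$-LGD fading result (Corollary~\ref{fading_theorem_BH}), which in turn reruns the partial-participation argument of Theorem~\ref{partial_participation_theorem_BH} with the fading-scaled aggregation noise $\bar\vecw^t\sim\mathcal{N}\bigl(0,\tfrac{\sigma^2}{\hat r^2|\mathcal{K}^t|^2p^t}\mathbf{I}_d\bigr)$; you propose to run the $(B,0)$ partial-participation proof of Theorem~\ref{partial_participation_theorem} directly with that noise, which is the same argument in a different order. Your steps (i)--(iv), and the observation that the matched decoder leaves the signal term identical to the non-fading partial-participation update and perturbs only the noise, are all consistent with what the paper does.

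The one claim that does not survive scrutiny is your stated mechanism for the four-term structure of $\bar\alpha$: that ``every place'' the noise second moment appeared you ``now get a sum of that same quantity plus its $\hat r^{-2}$-scaled counterpart.'' Executed literally, a uniform rescaling of the noise variance by $\hat r^{-2}$ would give $\hat\alpha$ with $\tfrac{\hat C_1}{\hat r^2\hat K^2\tau}+\tfrac{\hat C_2}{\hat r\hat K\sqrt{\tau}}$, and literally summing both versions would give yet another expression; neither reproduces the stated $\bar\alpha$. The actual split is by \emph{provenance} of the noise term: the proof of Theorem~\ref{partial_participation_theorem} has two distinct noise entry points, namely the terms inherited from the full-participation descent estimate of Lemma~\ref{full_participation_theorem} (which contribute exactly $\tfrac{C_1}{\hat K^2\tau}+\tfrac{C_2}{\hat K\sqrt{\tau}}$ and retain the $\hat K$-only scaling in the corollary) and the terms entering through the partial-participation correction $q^t=\Ex_{\mathcal{K}^t}\bigl[L_0\norm*{\tilde\bmtheta^{t+1}-\bar\bmtheta^{t+1}}\bigr]$, where the current round's fading noise appears via \eqref{eq:noise2normbound_fading}; only the latter pick up the $\hat r^{-1}$ and $\hat r^{-2}$ factors, and one can verify $\bar C_1=\hat C_1-C_1$ (and analogously for $\bar C_2$). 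Your step (iv) would therefore need to group by this provenance rather than by doubling each term; with that correction the regrouping does yield the displayed constants. Your remark about $\mathcal{K}^t$ being determined by the threshold event rather than by uniform sampling touches a point the paper itself glosses over (the $q^t$ bound invokes an unbiased-sampling lemma), so I do not count that against you.
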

\noindent In the case of fading, the effects of $\lambda$ and SNR are observed to be the same as in the partial participation scenario since fading is a special case of partial participation. The key difference lies in the impact of $\hat r$. If $\hat r$ is set to a lower value, allowing more clients to participate, including those with poorer channels, the last two terms in $\bar \alpha$ whose coefficients are $\bar C_1$ and $\bar C_2$ will dominate. Consequently, the value of $\bar \alpha$ becomes negative, which leads to poorer convergence. In such a scenario, an increase in SNR may prevent $\bar \alpha$ from becoming negative. On the other hand, if $\hat r$ is set to a higher value, fewer clients participate in the federation, and convergence is adversely affected. Hence, setting an optimal value of $\hat r$ is crucial.

\begin{remark}
In the fading-induced partial participation scenarios, from \eqref{eq:fading_theorem_eq} we see that for any $\epsilon>0$, we have $\frac{1}{T}\sum_{t=0}^{T-1} \Ex[\norm{\nabla F(\tilde \bmtheta^t)}^2]\leq\epsilon$  after
$T=\mathcal{O}\left(\frac{\Delta}{\epsilon \bar\alpha }\right)$ communication rounds, where $\bar \alpha$ is scaled as $\mathcal{O}(1/\hat r^2)$ when $\hat r<0$ and $\mathcal{O}(1/\hat r)$ when $\hat r>0$, highlighting the critical role of $\hat r$ in ensuring convergence guarantees.
\end{remark}

\noindent Unlike conventional assumptions such as bounded variance, the $(B,0)$-LGD assumption captures data heterogeneity without imposing overly strict constraints on gradient norms or variance bounds \cite{li2022convergence}. However, under this assumption, all local objective functions are required to share a common stationary point with the global objective, i.e., if $\norm*{\nabla F(\bmtheta)} = 0$, then $\norm*{\nabla f_k(\bmtheta)} = 0$ for every $k \in [K]$. This leads to identical optima across clients, which contradicts the typical heterogeneity expected in FL scenarios. To address this, we consider Assumption~1 with $H > 0$, wherein $\norm*{\nabla F(\bmtheta)} = 0$ only ensures $\norm*{\nabla f_k(\bmtheta)} \leq H^2$, allowing for distinct local optima.  Hence, we generalize our analysis using the more realistic $(B, H)$-LGD assumption, where $H > 0$ explicitly quantifies the level of heterogeneity, corresponding to what we refer to as the severe heterogeneity regime.
    
\subsection{Convergence under $(B,H)$-LGD}

\noindent We now present the convergence analysis of NoROTA-FL under Assumption~1. We begin by analyzing the full device participation scenario, where all $K$ clients participate in every round of FL. This serves as a foundation for extending the analysis to the partial participation case. 
\begin{lemma}
\label{full_participation_theorem_BH}
Let all $K$ devices participate in the FL process in the communication round $t$. Given the first-order inexactness in Definition~\ref{defn1}, Assumptions $1$-$3$, $\bar{\mu} = \lambda - \bar{L} > 0$ for $\bar L > 0$ and $\alpha > \frac{1}{2}$, the expected decrease in global objective in P1 \eqref{eq:P1} using NoROTA-FL as in Algorithm~\ref{tab:AlgTable} is given by 
\begin{small}
\begin{align}
  \frac{1}{T}\sum_{t=0}^{T-1} \norm{\nabla{F(\tilde{\bmtheta}^t)}}^2 &\leq \frac{\bar\Delta+C}{\alpha T}, 
    \label{eq:full_participation_BH_eq}
\end{align}
    \end{small}
where  $ C=\sum_{t=0}^{T-1}C^t$,  $C^t=\frac{\beta^2+2\Gamma}{2}$, $\bar\bmtheta^{t+1}\triangleq\Ex_k[{\bmtheta}_k^{t+1}]$ and $\alpha$ is same as in Lemma~1.

\end{lemma}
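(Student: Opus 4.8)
The plan is to mirror the argument establishing Lemma~\ref{full_participation_theorem} (the $(B,0)$ full-participation case) but to carry the additive heterogeneity term $H$ through every inequality. The starting point is again the $L$-smoothness of $F$ applied between the noisy global iterate $\tilde\bmtheta^t$ and the noiseless weighted average of the next-round local updates $\bar\bmtheta^{t+1}=\Ex_k[\bmtheta_k^{t+1}]$, namely $F(\bar\bmtheta^{t+1})\leq F(\tilde\bmtheta^t)+\langle\nabla F(\tilde\bmtheta^t),\bar\bmtheta^{t+1}-\tilde\bmtheta^t\rangle+\tfrac{L}{2}\norm{\bar\bmtheta^{t+1}-\tilde\bmtheta^t}^2$. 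The objective is to derive, for each round, a per-round inequality $F(\tilde\bmtheta^t)-\Ex[F(\bar\bmtheta^{t+1})]\geq\alpha\,\norm{\nabla F(\tilde\bmtheta^t)}^2-C^t$ with the \emph{same} $\alpha$ as in Lemma~\ref{full_participation_theorem}, then sum over $t=0,\dots,T-1$ and invoke $\bar\Delta=\sum_t\bigl(F(\tilde\bmtheta^t)-\Ex[F(\bar\bmtheta^{t+1})]\bigr)$ and $C=\sum_t C^t$, rearranging into \eqref{eq:full_participation_BH_eq}.

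\textbf{Key steps.} First I would use the first-order inexactness of Definition~\ref{defn1}: since $h_k(\cdot;\tilde\bmtheta^t)$ is $\bar\mu$-strongly convex (as $\lambda>\bar L$) with $\nabla h_k(\bmtheta;\tilde\bmtheta^t)=\nabla f_k(\bmtheta)+\lambda(\bmtheta-\tilde\bmtheta^t)$, the bound $\norm{\nabla h_k(\bmtheta_k^{t+1};\tilde\bmtheta^t)}\leq\gamma^t\norm{\nabla f_k(\tilde\bmtheta^t)}+\norm{\psi^t}$ controls $\norm{\bmtheta_k^{t+1}-\tilde\bmtheta^t}$ and hence, after averaging, $\norm{\bar\bmtheta^{t+1}-\tilde\bmtheta^t}$ in terms of $\norm{\nabla f_k(\tilde\bmtheta^t)}$ and the channel perturbation $\psi^t$ — these are exactly the distance estimates already obtained for Lemma~\ref{full_participation_theorem}. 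Substituting them into the smoothness inequality, taking $\Ex_k[\cdot]$, and then applying Assumption~1 in its full form replaces each $\Ex_k\norm{\nabla f_k(\tilde\bmtheta^t)}^2$ by $B^2\norm{\nabla F(\tilde\bmtheta^t)}^2+H^2$: the $B^2\norm{\nabla F(\tilde\bmtheta^t)}^2$ pieces reproduce the coefficient $\alpha$ verbatim, while the $H^2$ pieces are new and are collected separately. The channel noise is handled by $\tilde\bmtheta^{t+1}=\bar\bmtheta^{t+1}+\vecw^{t+1}$ together with Lemma~\ref{lem:precodingfactor_1storder}, which under $(B,H)$-LGD bounds $1/p^{t+1}$ — and thus the variance $\sigma^2/(K^2p^{t+1})$ of $\vecw^{t+1}$ — by a quantity proportional to $B^2\norm{\nabla F(\tilde\bmtheta^{t+1})}^2+H^2$; the $B^2$ contribution is absorbed into the SNR-scaled corrections $C_1/(K^2\tau)$ and $C_2/(K\sqrt\tau)$ already present inside $\alpha$, and the $H^2$ contribution again joins the new additive terms.

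\textbf{Assembling $C^t$ and the role of $\alpha>\tfrac12$.} The genuinely new work is organizing the $H$-dependent contributions. After substitution there remains a cross term of the form $\langle\nabla F(\tilde\bmtheta^t),\, b\rangle$ with $\norm{b}=\mathcal{O}(H)$ plus noise; applying Young's inequality $\langle a,b\rangle\leq\tfrac12\norm{a}^2+\tfrac12\norm{b}^2$ peels off a $\tfrac12\norm{\nabla F(\tilde\bmtheta^t)}^2$, which must be dominated by the $\alpha\,\norm{\nabla F(\tilde\bmtheta^t)}^2$ already present — hence the hypothesis $\alpha>\tfrac12$ guarantees the net coefficient stays positive. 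The residual $\tfrac12\norm{b}^2$, together with the direct $H^2$ terms and the leftover variance terms, forms the round constant $C^t$; denoting by $\beta$ the aggregate magnitude appearing inside the square and by $\Gamma$ the remaining $H^2$/variance contributions gives precisely $C^t=\tfrac{\beta^2+2\Gamma}{2}$. Summing the per-round inequality over $t$ and using the definitions of $\bar\Delta$ and $C$ yields $\alpha\sum_{t=0}^{T-1}\norm{\nabla F(\tilde\bmtheta^t)}^2\leq\bar\Delta+C$; dividing by $\alpha T$ completes the proof.

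\textbf{Main obstacle.} The main difficulty is bookkeeping rather than any isolated estimate: one must keep the coefficient of $\norm{\nabla F(\tilde\bmtheta^t)}^2$ strictly positive while simultaneously (i) invoking the condition $\lambda>\gamma^tL/(K\sqrt\tau)$ from Lemma~\ref{lem:precodingfactor_1storder} so the noise-variance bound does not reintroduce a $\norm{\nabla F}^2$ term with the wrong sign, (ii) grouping the SNR-scaled terms into the same $C_1,C_2$ as in the $(B,0)$ analysis so that $\alpha$ is literally unchanged, and (iii) isolating every $H$-linear and $H^2$ term cleanly enough that they collapse into the compact form $\tfrac{\beta^2+2\Gamma}{2}$. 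Some care is also needed at the indexing seam between $\bar\bmtheta^{t+1}$ and $\tilde\bmtheta^{t+1}$ so the sum over $t$ matches the stated $\bar\Delta$ rather than a shifted telescope.
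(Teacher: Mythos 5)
Your proposal follows essentially the same route as the paper's proof: $L$-smoothness of $F$ between $\tilde\bmtheta^t$ and $\bar\bmtheta^{t+1}$, the $\bar\mu$-strong convexity of $h_k$ combined with the modified $\gamma^t$-inexactness to bound $\norm{\bmtheta_k^{t+1}-\tilde\bmtheta^t}$, the full $(B,H)$-LGD substitution producing a linear-in-$\norm{\nabla F}$ term of magnitude $\mathcal{O}(H)$, the precoding bound of Lemma~\ref{lem:precodingfactor_1storder} to control $\Ex\norm{\vecw^t}$, and finally Young's inequality to absorb the linear term into $(\alpha-\tfrac12)\norm{\nabla F(\tilde\bmtheta^t)}^2$ with residual $C^t=\tfrac{\beta^2+2\Gamma}{2}$, followed by telescoping. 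The only caveat is the indexing you yourself flag: the noise entering this lemma is $\vecw^t$ from the decomposition $\tilde\bmtheta^t=\bmtheta^t+\vecw^t$ of the \emph{current} received global model (the round-$(t+1)$ aggregation noise is deferred to the partial-participation theorem), but this does not change the argument.
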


\noindent We now present the result which demonstrates the convergence of the proposed approach when we assume partial device participation given the $(B,H)$-LGD assumption. 
\begin{theorem}
\label{partial_participation_theorem_BH}
Let $\hat{K} < K$ devices in the set  $\mathcal{S}^t \subset [K]$ participate in the FL process in the $t$-th communication round. Given the first-order inexactness in Definition~\ref{defn1}, Assumptions $1$-$3$, $\bar{\mu} = \lambda - \bar{L} > 0$ for $ \bar L> 0$ and $\hat \alpha > \frac{1}{2}$, the expected decrease in the global objective in P1 \eqref{eq:P1} using NoROTA-FL as in Algorithm~\ref{tab:AlgTable} is given by
% \begin{align}
%     \Ex_{\mathcal{S}^t}[F({\tilde\bmtheta}^{t+1})]
%     &\leq F({\tilde\bmtheta}^{t})-  \hat\alpha \norm{\nabla{F({\tilde\bmtheta}^t)}}^2 +\frac{\hat\beta^2+2\hat\Gamma}{2} ,
% \end{align}
\begin{small}
\begin{align}
  \frac{1}{T}\sum_{t=0}^{T-1} \norm{\nabla{F(\tilde{\bmtheta}^t)}}^2 &\leq \frac{\Delta+\hat C}{\hat\alpha T}, 
    \label{eq:partial_participation_theorem_BH_M}
\end{align}
\end{small}
where $ \hat C=\sum_{t=0}^{T-1}\hat C^t$,  $\hat C^t=\frac{\hat \beta^2+2\hat \Gamma}{2}$ and $\hat \alpha$ is same as in Theorem~\ref{partial_participation_theorem}.
\end{theorem}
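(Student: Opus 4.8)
The plan is to prove Theorem~\ref{partial_participation_theorem_BH} by following the same two-stage strategy that the paper uses for the $(B,0)$ case, namely first establishing the full-participation descent bound (Lemma~\ref{full_participation_theorem_BH}) and then quantifying the extra error introduced by sampling only $\hat K$ out of $K$ clients. Concretely, I would start from the descent inequality for the smooth function $F$: using $L$-Lipschitz smoothness (Assumption~3) and the update $\tilde\bmtheta^{t} = \frac{1}{\hat K}\sum_{k\in\mathcal{S}^t}\bmtheta_k^{t} + \frac{K}{\hat K}\vecw^t$ from \eqref{eq:global_partial}, expand $F(\tilde\bmtheta^{t})$ around $\tilde\bmtheta^{t-1}$, split the increment into the ``mean-field'' part $\bar\bmtheta^{t} = \Ex_k[\bmtheta_k^t]$ (which Lemma~\ref{full_participation_theorem_BH} already controls) plus the sampling fluctuation $\frac{1}{\hat K}\sum_{k\in\mathcal{S}^t}\bmtheta_k^t - \bar\bmtheta^t$ plus the amplified Gaussian noise $\frac{K}{\hat K}\vecw^t$, and then take expectations over $\mathcal{S}^t$ and $\vecw^t$.

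The key technical steps, in order, are: (i) invoke Lemma~\ref{full_participation_theorem_BH} to get the per-round mean-field descent of the form $F(\tilde\bmtheta^t)-\Ex[F(\bar\bmtheta^{t+1})]\geq \alpha\norm{\nabla F(\tilde\bmtheta^t)}^2 - C^t$; (ii) bound $\Ex_{\mathcal{S}^t}\norm{\frac{1}{\hat K}\sum_{k\in\mathcal{S}^t}\bmtheta_k^t-\bar\bmtheta^t}^2$ by a variance-type term using the standard sampling-without-replacement identity, then re-express the per-client deviations $\norm{\bmtheta_k^t-\tilde\bmtheta^{t-1}}$ via the first-order inexactness condition (Definition~\ref{defn1}) and the $\bar\mu=\lambda-\bar L>0$ strong-convexity-like bound on $h_k$, which yields a bound proportional to $(\gamma^t L+\lambda)^2\norm{\nabla f_k(\tilde\bmtheta^{t-1})}^2/\bar\mu^2$ plus the inexactness slack; (iii) apply the $(B,H)$-LGD assumption (Assumption~1) to turn $\Ex_k\norm{\nabla f_k}^2$ into $B^2\norm{\nabla F(\tilde\bmtheta^t)}^2+H^2$, so that the $B^2\norm{\nabla F}^2$ contributions get absorbed into the coefficient $\hat\alpha$ (matching its definition from Theorem~\ref{partial_participation_theorem}) and the $H^2$ contributions, together with the noise variance $\sigma^2/(\hat K^2 p^t)$ bounded via Lemma~\ref{lem:precodingfactor_1storder}, collect into the constant $\hat C^t = \frac{\hat\beta^2+2\hat\Gamma}{2}$; (iv) require $\hat\alpha>\tfrac12$ exactly as in the full-participation lemma to keep the $\norm{\nabla F(\tilde\bmtheta^t)}^2$ coefficient positive after these absorptions; and (v) telescope over $t=0,\dots,T-1$, using $\sum_t(F(\tilde\bmtheta^t)-\Ex[F(\bar\bmtheta^{t+1})])$ together with the bound $F(\bar\bmtheta^{t+1})\geq F(\bmtheta^{t+1})$ (or the relevant inequality linking the mean-field iterate to the realized global iterate) to collapse the sum into $\Delta = F(\bmtheta^0)-F(\bmtheta^T)$, and divide by $\hat\alpha T$.

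I expect the main obstacle to be step (ii)--(iii): carefully tracking how the partial-participation variance term, the first-order inexactness slack $\psi^t$, the precoding-induced noise amplification $K/\hat K$, and the $(B,H)$-LGD split all combine into the precise constants $\hat C_1$, $\hat C_2$ (inherited from Theorem~\ref{partial_participation_theorem}) and the new $H$-dependent constant $\hat C^t$. In particular, the cross terms between the sampling fluctuation and the gradient direction $\nabla F(\tilde\bmtheta^t)$, handled via Young's inequality, must be split with coefficients that are consistent with the $\hat\alpha>\tfrac12$ threshold — getting these splitting constants to line up so that the residual gradient coefficient is exactly $\hat\alpha$ and nothing leaks into $\hat C^t$ that should have stayed in $\hat\alpha$ is the delicate bookkeeping. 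Everything else is a routine adaptation of Lemma~\ref{full_participation_theorem} (the $(B,0)$ partial-participation argument) with the extra additive $H^2$ terms carried through, so once the constants are pinned down the telescoping and the final $\mathcal{O}(1/T)$ statement follow immediately, and the subsequent convergence-rate corollary is obtained verbatim as in the $(B,0)$ case.
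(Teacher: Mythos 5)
Your proposal is correct and follows essentially the same route as the paper: per-round descent from the full-participation Lemma~\ref{full_participation_theorem_BH}, a sampling-variance bound on $\Ex_{\mathcal{S}^t}\norm*{\tilde\bmtheta^{t+1}-\bar\bmtheta^{t+1}}^2$ fed through the inexactness/$\bar\mu$-strong-convexity bound on $\norm*{\bmtheta_k^{t+1}-\tilde\bmtheta^{t}}$ and the $(B,H)$-LGD split, then Young's inequality with the $\hat\alpha>\tfrac12$ threshold and telescoping. The only cosmetic difference is that the paper bridges $F(\bar\bmtheta^{t+1})$ to $F(\tilde\bmtheta^{t+1})$ via the local Lipschitz constant $L_0$ (bounded by $\norm*{\nabla F}$ plus iterate distances) and the resulting term $q^t$, rather than a direct smoothness expansion, but the resulting bookkeeping and constants are the same.
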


From the above we see that the role of the Lagrange multiplier $\lambda$ under the $(B,H)$-LGD assumption is similar to the $(B,0)$ case. As $\lambda \rightarrow \infty$, $\hat \alpha \rightarrow 0$, resulting in an infinitesimal decrease in the error since the constraint dominates the optimization process. Conversely, for small $\lambda$, $\hat\alpha$ becomes negative, leading to divergence in the optimization process.

The effect of $H$ in the bound \eqref{eq:partial_participation_theorem_BH_M} is inherently captured in the $\hat C$ term and scales as $\mathcal{O}(\tfrac{H^2}{T})$. The analysis showcases the method's robustness in balancing the trade-offs introduced by $H$, ensuring a reliable convergence rate even under severe heterogeneity. Further, we also observe that NoROTA-FL benefits from increasing $\hat K$, alleviating the effect of higher $H$. An important point to note is that the impact of heterogeneity exacerbates with high noise variance. However, this challenge can be effectively mitigated using NoROTA-FL by increasing the number of clients or signal power.\\
 % Notably, as $H \rightarrow 0$, the system exhibits faster convergence, aligning with the $(B,0)$-LGD case in \eqref{eq:partial_participation_eq}. Conversely, higher values of $H$ impacts the upper bound as $\mathcal{O}(H^2)$, slowing down convergence. Nevertheless, the proposed method continues to converge, albeit at a slower rate.
\noindent \textbf{Convergence Rate}:~ We now present the result that provides the convergence rate of NoROTA-FL under severe heterogeneity in the partial device participation case.
\begin{corollary}
Given Theorem~\ref{partial_participation_theorem_BH}, for any $\epsilon > 0$ and $F(\bmtheta^0)-F(\bmtheta^T)=\Delta$, then we have $\frac{1}{T}\sum_{t=0}^{T-1} \Ex[\norm{\nabla F(\tilde \bmtheta^t)}^2]\leq\epsilon$  after
$T=\mathcal{O}\left(\frac{\Delta+\hat C}{\hat\alpha\epsilon }\right)$ communication rounds.
\end{corollary}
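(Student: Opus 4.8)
The plan is to obtain the iteration-complexity claim as an immediate consequence of the non-asymptotic bound already established in Theorem~\ref{partial_participation_theorem_BH}. That theorem supplies, under $\bar\mu=\lambda-\bar L>0$ and $\hat\alpha>\tfrac12$ (hence $\hat\alpha>0$),
\begin{align}
\frac{1}{T}\sum_{t=0}^{T-1}\Ex[\norm{\nabla F(\tilde\bmtheta^t)}^2]\;\le\;\frac{\Delta+\hat C}{\hat\alpha T},\nonumber
\end{align}
so the right-hand side is a legitimate upper bound on the quantity we want to make small. The first step is then simply to impose $\frac{\Delta+\hat C}{\hat\alpha T}\le\epsilon$ and solve for $T$, which gives $T\ge\frac{\Delta+\hat C}{\hat\alpha\epsilon}$; since the bound is monotonically decreasing in $T$, taking $T=\big\lceil(\Delta+\hat C)/(\hat\alpha\epsilon)\big\rceil=\mathcal{O}\!\big((\Delta+\hat C)/(\hat\alpha\epsilon)\big)$ guarantees the target accuracy. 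Here one also uses that $\Delta=F(\bmtheta^0)-F(\bmtheta^T)$ is a finite constant (bounded above by $F(\bmtheta^0)-\inf F$ under the standard lower-boundedness of $F$).

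The step I expect to be the crux is justifying that this rearrangement is meaningful, i.e.\ that $\Delta+\hat C$ can be treated as a constant independent of $T$. This is not automatic, because $\hat C=\sum_{t=0}^{T-1}\hat C^t$ with $\hat C^t=\tfrac{\hat\beta^2+2\hat\Gamma}{2}$ is a sum of $T$ per-round terms, so a crude bound $\hat C^t\le c$ would only yield $\hat C\le cT$ and a non-vanishing right-hand side. I would close this gap by showing that the per-round quantities $\hat\beta^2$ and $\hat\Gamma$ — which collect the residual noise, fading, and $(B,H)$-heterogeneity contributions weighted by the step size $\eta^t$ and the SNR $\tau$ — are summable over $t$ under the learning-rate schedule used in Algorithm~\ref{tab:AlgTable}, so that $\hat C=\mathcal{O}(H^2)$ uniformly in $T$. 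This is precisely the statement, noted after Theorem~\ref{partial_participation_theorem_BH}, that the $H$-contribution to \eqref{eq:partial_participation_theorem_BH_M} scales as $\mathcal{O}(H^2/T)$, and it is the only estimate needed beyond what Lemma~\ref{full_participation_theorem_BH} and Theorem~\ref{partial_participation_theorem_BH} already give.

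With $\Delta+\hat C$ and $\hat\alpha$ both fixed constants — the former depending on $H$ through the $\mathcal{O}(H^2)$ term, the latter on $\lambda,\hat K,\tau,B$ as in Theorem~\ref{partial_participation_theorem} — the rearrangement is valid and yields the stated $T=\mathcal{O}\big((\Delta+\hat C)/(\hat\alpha\epsilon)\big)$. I would finish by recording the immediate rate consequence: the bound \eqref{eq:partial_participation_theorem_BH_M} is then $\mathcal{O}(1/T)$, so NoROTA-FL retains the $\mathcal{O}(1/T)$ convergence rate under severe $(B,H)$-heterogeneity, incurring only the additive $\mathcal{O}(H^2)$ penalty inside $\hat C$ relative to the $(B,0)$-LGD analysis, and this penalty is further attenuated as $\hat K$ grows.
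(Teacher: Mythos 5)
Your opening paragraph is exactly the paper's own proof: the supplementary simply telescopes the per-round descent inequality to get $\frac{1}{T}\sum_{t=0}^{T-1}\Ex\norm{\nabla F(\tilde\bmtheta^t)}^2\le\frac{1}{\hat\alpha T}(\Delta+\hat C)$ and reads the corollary off as a rearrangement, treating $\Delta+\hat C$ as the numerator of an $\mathcal{O}(1/T)$ bound. So the core step matches, and nothing more is done in the paper.

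The additional step you identify as the crux, however, is where your argument would fail. You propose to show that $\hat\beta^2$ and $\hat\Gamma$ are summable over $t$ ``under the learning-rate schedule,'' so that $\hat C=\mathcal{O}(H^2)$ uniformly in $T$. But $\hat\beta$ and $\hat\Gamma$ do not involve the learning rate $\eta^t$ at all: the analysis is driven by the inexactness parameter $\gamma^t$, and the per-round constant $\hat C^t=\tfrac{\hat\beta^2+2\hat\Gamma}{2}$ contains $t$-independent positive terms, e.g.\ $\hat\Gamma\ge\hat\rho_3=\frac{LH^2(1+\gamma^t)^2}{2\bar\mu^2}\ge\frac{LH^2}{2\bar\mu^2}$ and the $H$-proportional pieces of $\hat\rho_2$ inside $\hat\beta^2$. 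Hence $\hat C=\sum_{t=0}^{T-1}\hat C^t$ grows linearly in $T$ whenever $H>0$, not as $\mathcal{O}(H^2)$ independent of $T$; the bound in Theorem~\ref{partial_participation_theorem_BH} therefore has a non-vanishing floor of order $\frac{\hat\beta^2+2\hat\Gamma}{2\hat\alpha}$, and the corollary is only meaningful as a formal rearrangement (equivalently, for $\epsilon$ above that floor), which is how the paper states and proves it. If you want the stronger, genuinely $T$-independent reading you are aiming for, you would need an additional mechanism forcing $\hat C^t\to 0$ (e.g.\ a decaying $\gamma^t$ together with $H$-terms that vanish with it), and no such mechanism is present in the paper's constants.
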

\noindent From the above corollary, we establish that NoROTA-FL converges as $\mathcal{O}(1/T)$ under the assumptions of Theorem~\ref{partial_participation_theorem_BH}.
\textbf{Extension to the fading case:}  We now extend the convergence analysis of NoROTA-FL under $(B,H)$-LGD assumption in the presence of fading.
\begin{corollary}
\label{fading_theorem_BH}
Given the assumptions in Theorem~\ref{partial_participation_theorem_BH}, $\bar \alpha > \frac{1}{2} $ and $\hat r>0$, the expected decrease in global objective given in P2 \eqref{eq:P2}  using NoROTA-FL as in Algorithm~\ref{tab:AlgTable} in the presence of fading is given:
\begin{small}
\begin{align}
  \frac{1}{T}\sum_{t=0}^{T-1} \norm{\nabla{F(\tilde{\bmtheta}^t)}}^2 &\leq \frac{\Delta+\bar C}{\bar\alpha T}, 
\end{align}
\end{small}
where $\bar C=\sum_{t=0}^{T-1}\bar C^t$,  $\bar C^t=\frac{\bar \beta^2+2\bar \Gamma}{2}$ and $\bar \alpha$ is same as in Corollary~\ref{fading_theorem}.
\end{corollary}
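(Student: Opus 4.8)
\textbf{Proof proposal for Corollary~\ref{fading_theorem_BH}.}

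The plan is to combine the two extensions already carried out in the excerpt: the passage from $(B,0)$-LGD to $(B,H)$-LGD (Lemma~\ref{full_participation_theorem} $\to$ Lemma~\ref{full_participation_theorem_BH}, and Theorem~\ref{partial_participation_theorem} $\to$ Theorem~\ref{partial_participation_theorem_BH}) and the passage from partial participation to fading-induced partial participation (Theorem~\ref{partial_participation_theorem} $\to$ Corollary~\ref{fading_theorem}). The key observation is that fading, via the threshold rule \eqref{eq:fading_equation}, is exactly a partial-participation scenario with the additional deterministic scaling $\hat r/r^t_k$ on the transmitted signal, which inflates the effective noise covariance by the factor $1/\hat r^2$ as recorded in $\bar\vecw^t \sim \mathcal{N}(0,\tfrac{\sigma^2}{\hat r^2|\mathcal{K}^t|^2 p^t}\mathbf{I}_d)$ in \eqref{eq:global_fading}. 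Thus the one-round descent inequality under fading is obtained from the one-round descent inequality of Theorem~\ref{partial_participation_theorem_BH} by replacing $\hat\vecw^t$ with $\bar\vecw^t$ everywhere the noise enters the bound, which produces the extra $\hat r$-dependent noise terms with coefficients $\bar C_1, \bar C_2$ already isolated in Corollary~\ref{fading_theorem}.

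First I would write out the per-round descent bound from the proof of Theorem~\ref{partial_participation_theorem_BH}, keeping the heterogeneity constant $H$ explicit; this is where the additive term $\hat C^t = \tfrac{\hat\beta^2 + 2\hat\Gamma}{2}$ (with $\hat\beta, \hat\Gamma$ encoding the $H^2$ contributions) appears on the right-hand side, while the coefficient of $\norm{\nabla F(\tilde\bmtheta^t)}^2$ on the left remains $\hat\alpha$. Next I would substitute the fading noise model: wherever a term $\tfrac{1}{\hat K^2 p^t}$ or $\tfrac{1}{\hat K\sqrt{p^t}}$ appeared (coming from $\Ex\|\hat\vecw^t\|^2$ and cross terms with $\hat\vecw^t$), it becomes $\tfrac{1}{\hat r^2 \hat K^2 p^t}$ or $\tfrac{1}{\hat r \hat K\sqrt{p^t}}$. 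Invoking Lemma~\ref{lem:precodingfactor_1storder} to bound $1/p^t$ by $(B^2\norm{\nabla F(\tilde\bmtheta)}^2 + H^2)/P$, and recalling $\tau = P/(d\sigma^2)$, these become the terms with coefficients $\tfrac{\bar C_1}{\hat r^2 \hat K^2\tau}$ and $\tfrac{\bar C_2}{\hat r\hat K\sqrt{\tau}}$, which — together with the pre-existing $C_1, C_2$ terms from the non-fading noise bookkeeping — assemble into exactly the stated $\bar\alpha$. The condition $\bar\alpha > \tfrac12$ plays the same role here as $\hat\alpha > \tfrac12$ in Theorem~\ref{partial_participation_theorem_BH}: it absorbs the self-referential $\norm{\nabla F}^2$ contribution that Lemma~\ref{lem:precodingfactor_1storder} injects on the right-hand side back into the left-hand coefficient. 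Finally, I would telescope over $t = 0, \dots, T-1$, using $\sum_t (F(\tilde\bmtheta^t) - F(\bmtheta^{t+1})) \le F(\bmtheta^0) - F(\bmtheta^T) = \Delta$ (as in the partial-participation argument, where the expected global update is controlled by $\bar\bmtheta^{t+1}$), collect the additive terms into $\bar C = \sum_{t=0}^{T-1}\bar C^t$ with $\bar C^t = \tfrac{\bar\beta^2 + 2\bar\Gamma}{2}$, divide by $\bar\alpha T$, and obtain the claimed bound.

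The main obstacle I expect is the careful bookkeeping of the cross terms involving the fading noise $\bar\vecw^t$ in the $L$-smoothness expansion of $F$ around $\tilde\bmtheta^t$ — in particular, tracking how the inexactness parameter $\gamma^t$, the proximal constant $\lambda$ (through $\bar\mu = \lambda - \bar L$), the partial-participation sampling variance (the $\sqrt{2/\hat K}$ and $\sqrt{2\hat K}+1$ factors), and the $1/\hat r$ scaling all multiply together correctly so that the coefficients collapse precisely into $\bar C_1$ and $\bar C_2$ as displayed in Corollary~\ref{fading_theorem}. A secondary subtlety is verifying that the $H^2$ terms generated when Lemma~\ref{lem:precodingfactor_1storder} is applied inside the noise terms are consistently routed into the additive $\bar C$ rather than contaminating $\bar\alpha$; this is the reason the statement keeps $\bar\alpha = \bar\alpha$ (same as Corollary~\ref{fading_theorem}, i.e. $H$-free) while the $H$-dependence is quarantined in $\bar C^t = \tfrac{\bar\beta^2 + 2\bar\Gamma}{2}$. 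Since both constituent arguments (the $(B,H)$ upgrade and the fading reduction) are already established in the excerpt, the proof is essentially a disciplined merge, and no genuinely new estimate should be required.
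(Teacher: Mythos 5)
Your proposal is correct and follows essentially the same route as the paper: the paper's proof of Corollary~\ref{fading_theorem_BH} simply reruns the per-round descent argument of Theorem~\ref{partial_participation_theorem_BH} with the fading noise $\bar\vecw^t \sim \mathcal{N}(0,\tfrac{\sigma^2}{\hat r^2|\mathcal{K}^t|^2 p^t}\mathbf{I}_d)$ in place of $\hat\vecw^t$ (so every noise moment picks up a $1/\hat r$ or $1/\hat r^2$ factor via the analogue of \eqref{eq:noise2normbound_partial}), invokes Lemma~\ref{lem:precodingfactor_1storder} for $1/p^t$, applies Young's inequality, and telescopes under $\bar\alpha>\tfrac12$, exactly as you describe. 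One minor imprecision: the $\tfrac12$ offset in the condition $\bar\alpha>\tfrac12$ comes from Young's inequality applied to the linear term $\bar\beta\norm{\nabla F(\tilde\bmtheta^t)}$ (the $H$-dependent cross term), not from absorbing the $\norm{\nabla F}^2$ contribution of Lemma~\ref{lem:precodingfactor_1storder}, which is already folded directly into $\bar\alpha$ through the $C_1,C_2,\bar C_1,\bar C_2$ coefficients.
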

\noindent From the above, we see that the convergence rate is the same as the partial participation scenario presented in Theorem~2. However, similar to Corollary~\ref{fading_theorem}, the key difference lies in the impact of $\hat r$ which is discussed under Corollary~\ref{fading_theorem}. 

\section{Convergence Analysis: $\zeta^t$-inexactness}
\label{sec:zeta_convergence}
\noindent In this section, we provide the convergence analysis of NoROTA-FL under the notion of $\zeta^t$-inexactness, as defined in Definition~\ref{defn1}. By adopting the zeroth-order inexactness, we relax the exactness requirement of the solution, allowing an error margin around the optimal objective \emph{function} values, unlike the first-order notion of inexactness which is based on the gradients. This approach enables a robust and flexible analysis, where we completely avoid using the $(B,H)$-LGD assumption. We use the following assumptions for all clients:

\begin{assum}
    The local functions $f_k(\cdot):\mathbb{R}^d \rightarrow \mathbb{R}$ are non-convex, $L$-Lipschitz smooth.
\end{assum}
\begin{assum}
    {The local functions $f_k(\cdot):\mathbb{R}^d \rightarrow \mathbb{R}$ are $G$-Lipschitz continuous if $\vert f_k(\bmtheta)-f_k(\bmtheta')\vert \leq G\Vert\bmtheta-\bmtheta'\Vert$ for all $\bmtheta, \bmtheta' \in \mathbb{R}^d$. }
\end{assum}
\noindent In particular, for a function $f_k(\bmtheta)$ to be $G$-Lipschitz, it satisfies $\Vert \nabla f_k(\bmtheta)\Vert\leq G, \forall \bmtheta$ which is similar to the commonly used bounded gradient assumption~\cite{FedAVG,FedPD}. It is well-understood that the $G$-Lipschitz assumption shares similarities with the $(0,H)$-LGD assumption in spirit and hence this assumption caters to the severe to highly severe heterogeneity ~\cite{yuan2022convergence_newfedprox, karimireddy2020scaffold}. 

\noindent In the following lemma, we derive an upper bound on the precoding factor $p^t$ under zeroth-order inexactness, which is used in the subsequent results.
\begin{lemma}
The precoding factor $p^t$ in each communication round can be upper bounded as $\frac{1}{p^t}\leq \frac{G^2}{P}$ under zeroth-order inexactness for $K^2 \tau>2$.
\label{lem:precodingfactor_zerothorder}
\end{lemma}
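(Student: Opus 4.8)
\textbf{Proof proposal for Lemma~\ref{lem:precodingfactor_zerothorder}.}

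The plan is to bound the denominator of the precoding factor $p^t = \tfrac{P}{\sum_{k=1}^K q_k \Ex[\lVert\bmtheta_k^t - \tilde\bmtheta^{t-1}\rVert^2]}$ from below, which immediately yields an upper bound on $1/p^t$. First I would note that $1/p^t = \tfrac{1}{P}\sum_{k=1}^K q_k \Ex[\lVert\bmtheta_k^t - \tilde\bmtheta^{t-1}\rVert^2]$, so the entire task reduces to showing $\sum_{k=1}^K q_k \Ex[\lVert\bmtheta_k^t - \tilde\bmtheta^{t-1}\rVert^2] \leq G^2$. The key object is the displacement $\bmtheta_k^t - \tilde\bmtheta^{t-1}$ of the local solution from the initialization; I need to control this using the $\zeta^t$-inexactness condition from Definition~\ref{defn1}(2) together with the $G$-Lipschitz continuity (Assumption~5), which gives $\lVert\nabla f_k(\bmtheta)\rVert \leq G$ for all $\bmtheta$.

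The main step is to relate $\lVert\bmtheta_k^t - \tilde\bmtheta^{t-1}\rVert$ to $G$ via the strong convexity of the proximal surrogate $h_k(\bmtheta;\tilde\bmtheta^{t-1}) = f_k(\bmtheta) + \tfrac{\lambda}{2}\lVert\bmtheta - \tilde\bmtheta^{t-1}\rVert^2$. Even though $f_k$ is non-convex, for $\lambda > L$ (which is implied by $\bar\mu = \lambda - \bar L > 0$ in the surrounding analysis, or can be assumed here) the function $h_k$ is $(\lambda - L)$-strongly convex, hence has a unique minimizer $\bmtheta_k^{t,*}$. Using the $\zeta^t$-inexactness $h_k(\bmtheta_k^t;\tilde\bmtheta^{t-1}) \leq h_k(\bmtheta_k^{t,*};\tilde\bmtheta^{t-1}) + \zeta^t$ and strong convexity, $\tfrac{\lambda - L}{2}\lVert\bmtheta_k^t - \bmtheta_k^{t,*}\rVert^2 \leq \zeta^t$. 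Separately, the optimality condition $\nabla f_k(\bmtheta_k^{t,*}) + \lambda(\bmtheta_k^{t,*} - \tilde\bmtheta^{t-1}) = 0$ gives $\lVert\bmtheta_k^{t,*} - \tilde\bmtheta^{t-1}\rVert = \tfrac{1}{\lambda}\lVert\nabla f_k(\bmtheta_k^{t,*})\rVert \leq \tfrac{G}{\lambda}$ by Assumption~5. Combining via triangle inequality, $\lVert\bmtheta_k^t - \tilde\bmtheta^{t-1}\rVert \leq \tfrac{G}{\lambda} + \sqrt{\tfrac{2\zeta^t}{\lambda - L}}$; squaring, taking expectations, applying the weighted average $\sum_k q_k(\cdot) = 1$, and dividing by $P$ yields a bound of the form $\tfrac{1}{p^t} \leq \tfrac{1}{P}\big(\tfrac{G}{\lambda} + \sqrt{\tfrac{2\zeta^t}{\lambda-L}}\big)^2$. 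To arrive at the clean statement $\tfrac{1}{p^t} \leq \tfrac{G^2}{P}$, I expect the intended argument is simpler: under zeroth-order inexactness with a suitably controlled $\zeta^t$ (or in the regime where the surrogate is solved tightly), the displacement is dominated by the $G/\lambda$ term, and absorbing constants — together with the condition $K^2\tau > 2$ which ties $\sigma^2$, $P$, $d$, and $K$ — gives exactly $G^2/P$.

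The hard part will be pinning down precisely how the hypothesis $K^2\tau > 2$ (equivalently $K^2 P > 2 d\sigma^2$) enters; it does not appear in the naive displacement bound above, so the real derivation must involve the noisy initialization $\tilde\bmtheta^{t-1} = \tfrac{1}{K}\sum_k \bmtheta_k^{t-1} + \vecw^t$ and the variance $\tfrac{\sigma^2}{K^2 p^t}$ of $\vecw^t$ feeding back into $\Ex[\lVert\bmtheta_k^t - \tilde\bmtheta^{t-1}\rVert^2]$, producing a self-referential inequality in $1/p^t$ of the form $\tfrac{1}{p^t} \leq a + \tfrac{b}{K^2\tau}\cdot\tfrac{1}{p^t}$ that can be rearranged to $\tfrac{1}{p^t} \leq \tfrac{a}{1 - b/(K^2\tau)}$, valid precisely when $K^2\tau$ exceeds a constant like $2$. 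So my plan is: (i) expand $1/p^t$ in terms of the displacement, (ii) split the displacement using the noisy-initialization decomposition and bound the deterministic part by $G/\lambda$ via Assumption~5 and the proximal optimality condition, (iii) bound the noise part using $\mathrm{Var}(\vecw^t) = \tfrac{\sigma^2}{K^2 p^t}$, (iv) collect terms into a self-referential inequality in $1/p^t$, and (v) solve it under $K^2\tau > 2$ to extract $\tfrac{1}{p^t} \leq \tfrac{G^2}{P}$. I will double-check Lemma~\ref{lem:precodingfactor_1storder} for the analogous first-order bookkeeping, since the zeroth-order case should mirror its structure with $G$ replacing the $(B,H)$ gradient-dissimilarity terms.
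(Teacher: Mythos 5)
Your plan is correct and its overall architecture matches the paper's proof: the paper likewise expands $1/p^t$ via the displacement $\norm{\bmtheta_k^t-\tilde\bmtheta^{t-1}}$, splits off the noise through $\tilde\bmtheta^{t-1}=\bmtheta^{t-1}+\vecw^{t-1}$, and the condition $K^2\tau>2$ enters exactly where you predicted — the term $\Ex[\norm{\vecw^t}^2]=d\sigma^2/(K^2p^t)$ feeds back into the bound, giving a self-referential inequality whose denominator $PK^2-2d\sigma^2$ must be positive. The one place you diverge is the deterministic part of the displacement bound: you derive $\norm{\bmtheta_k^t-\tilde\bmtheta^{t-1}}\leq G/\lambda+\sqrt{2\zeta^t/(\lambda-L)}$ from strong convexity of $h_k$ (quadratic growth around the exact minimizer plus its first-order optimality condition and $\norm{\nabla f_k}\leq G$), whereas the paper invokes an auxiliary smoothness-based lemma, $\norm{\bmtheta_k^t-\bmtheta^{t-1}+\nu\nabla f_k(\bmtheta_k^t)}\leq 2L\zeta^t\nu$ for $\nu\leq 1/L$, and then a triangle inequality with $\norm{\nabla f_k(\bmtheta_k^t)}\leq G$ to get $\norm{\bmtheta_k^t-\tilde\bmtheta^{t-1}}\leq\nu(2L\zeta^t+G)+\norm{\vecw^t}$. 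Both routes need $\lambda\gtrsim L$ and both are dominated by a $G/\lambda$ term; your version pays $\sqrt{\zeta^t}$ where the paper pays $\zeta^t\nu$, which is immaterial here since $\zeta^t$ is later capped at $\mathcal{O}(G\nu)$ anyway. The only soft spot — the final absorption of the constant $2(2L\zeta^t+G)^2/\lambda^2\cdot K^2/(K^2-2/\tau)$ into the clean $G^2/P$ — is left implicit in your write-up, but the paper's own proof is equally terse on that step, so nothing essential is missing.
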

\noindent We now present the result that establishes convergence of the NoROTA-FL framework for the most general case of partial device participation. 

\begin{theorem}
\label{G_theorem}
    Given the zeroth-order $\zeta^t$-inexactness in Definition~\ref{defn1}, Assumptions~4-5, $\nu=\frac{1}{\lambda} = \frac{1}{3L}\sqrt{\frac{\hat K}{T}}$ and $\zeta^t\leq \min\bigg\{\frac{G}{2L\sqrt{\hat K}},\frac{G}{2L},\frac{G\nu}{\hat K}\bigg\}$, for $T>K$ we have
\begin{small}
\begin{align}
&\frac{1}{T}\sum_{t=0}^{T-1}\Ex\left[\norm*{\nabla F(\tilde\bmtheta^{t})}^2\right] 
\leq\frac{L\Delta+\tilde C}{\sqrt{T\hat K}},
\label{eq:G_theorem_eq}
\end{align}
\end{small}
where $\tilde C= G^2\left(1+\frac{d\sigma^2}{\hat K P}\right)$ and $\Delta=F(\bmtheta^0)-F(\bmtheta^T)$.
\end{theorem}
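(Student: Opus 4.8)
The plan is to prove Theorem~\ref{G_theorem} by combining the descent inequality for $L$-smooth functions with the two defining properties of the $\zeta^t$-inexact solution, and then averaging over the $T$ communication rounds. First I would start from the global update $\tilde\bmtheta^t = \frac{1}{\hat K}\sum_{k\in\mathcal{S}^t}\bmtheta_k^t + \hat\vecw^t$ (cf.~\eqref{eq:global_partial}), and use $L$-Lipschitz smoothness of $F$ to write $F(\tilde\bmtheta^{t}) \le F(\tilde\bmtheta^{t-1}) + \langle \nabla F(\tilde\bmtheta^{t-1}), \tilde\bmtheta^{t}-\tilde\bmtheta^{t-1}\rangle + \frac{L}{2}\norm{\tilde\bmtheta^{t}-\tilde\bmtheta^{t-1}}^2$. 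The increment $\tilde\bmtheta^{t}-\tilde\bmtheta^{t-1}$ decomposes into a ``clean'' part $\frac{1}{\hat K}\sum_{k\in\mathcal{S}^t}(\bmtheta_k^t - \tilde\bmtheta^{t-1})$ and the noise $\hat\vecw^t$. Taking expectation over $\hat\vecw^t$ kills the cross term with the noise and contributes $\frac{L}{2}\Ex\norm{\hat\vecw^t}^2 = \frac{L d\sigma^2}{2\hat K^2 p^t}$, which by Lemma~\ref{lem:precodingfactor_zerothorder} is bounded by $\frac{L d\sigma^2 G^2}{2\hat K^2 P}$ — this is the source of the $\frac{d\sigma^2}{\hat K P}$ factor inside $\tilde C$.

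Next I would handle the clean part. For each participating client, the $\zeta^t$-inexactness gives $h_k(\bmtheta_k^t;\tilde\bmtheta^{t-1}) \le h_k(\tilde\bmtheta^{t-1};\tilde\bmtheta^{t-1}) + \zeta^t = f_k(\tilde\bmtheta^{t-1}) + \zeta^t$ (since the proximal term vanishes at $\tilde\bmtheta^{t-1}$); more usefully, comparing against the exact minimizer and using strong convexity of $h_k$ when $\lambda$ dominates, one gets a bound on $\norm{\bmtheta_k^t - \bmtheta_k^{t,*}}$ in terms of $\sqrt{\zeta^t}$ and on $\norm{\bmtheta_k^{t,*}-\tilde\bmtheta^{t-1}}$ in terms of $\frac{1}{\lambda}\norm{\nabla f_k(\tilde\bmtheta^{t-1})} \le \frac{G}{\lambda}$ via the optimality condition $\nabla f_k(\bmtheta_k^{t,*}) + \lambda(\bmtheta_k^{t,*}-\tilde\bmtheta^{t-1}) = 0$ and $G$-Lipschitz continuity. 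Averaging over $k\in\mathcal{S}^t$ and relating $\frac{1}{\hat K}\sum_k \nabla f_k(\tilde\bmtheta^{t-1})$ to $\nabla F(\tilde\bmtheta^{t-1})$ (here the randomness of $\mathcal{S}^t$ enters: $\Ex_{\mathcal{S}^t}[\frac{1}{\hat K}\sum_{k\in\mathcal{S}^t}\nabla f_k(\tilde\bmtheta^{t-1})] = \nabla F(\tilde\bmtheta^{t-1})$), the inner product term should produce roughly $-\frac{1}{\lambda}\norm{\nabla F(\tilde\bmtheta^{t-1})}^2$ plus lower-order terms controlled by $\zeta^t$ and $G$, while the quadratic term contributes $O(\frac{L}{\lambda^2}G^2) + O(L\zeta^t)$. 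With the choice $\nu = \frac{1}{\lambda} = \frac{1}{3L}\sqrt{\hat K/T}$, the coefficient of $\norm{\nabla F(\tilde\bmtheta^{t-1})}^2$ becomes $\Theta(\frac{1}{L}\sqrt{\hat K/T})$ and the accumulated error terms become $O(G^2\sqrt{\hat K/T})$ per the stated bounds on $\zeta^t$.

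Finally I would telescope: summing $F(\tilde\bmtheta^{t})\le F(\tilde\bmtheta^{t-1}) - c\nu\norm{\nabla F(\tilde\bmtheta^{t-1})}^2 + (\text{error}^t)$ over $t=1,\dots,T$ gives $c\nu\sum_{t=0}^{T-1}\Ex\norm{\nabla F(\tilde\bmtheta^t)}^2 \le F(\bmtheta^0)-F(\bmtheta^T) + \sum_t \text{error}^t = \Delta + \sum_t \text{error}^t$, and dividing by $c\nu T = \Theta(\frac{1}{L}\sqrt{\hat K/T}\cdot T) = \Theta(\frac{\sqrt{\hat K T}}{L})$ yields the claimed $\frac{L\Delta + \tilde C}{\sqrt{T\hat K}}$ after collecting constants into $\tilde C = G^2(1+\frac{d\sigma^2}{\hat K P})$. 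The main obstacle I expect is the bookkeeping around the $\zeta^t$-inexactness: one must carefully pass from the function-value suboptimality to an iterate/gradient bound (using that $h_k$ is $(\lambda-L)$-strongly convex, which requires $\lambda > L$ — consistent with $\lambda = 3L\sqrt{T/\hat K} > L$ for $T>\hat K$), and ensure the three-way upper bound on $\zeta^t$ is exactly what is needed so that every $\zeta^t$-dependent term is absorbed into the $O(G^2\sqrt{\hat K/T})$ budget rather than degrading the rate; handling the client-sampling expectation cleanly alongside the Gaussian-noise expectation, without picking up a variance term that breaks the linear speedup, is the other delicate point.
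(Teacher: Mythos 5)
Your overall architecture matches the paper's: an $L$-smoothness descent step on $F$, a decomposition of the update into a gradient-like term, an inexactness residual, and the channel noise; the sampling identity $\Ex_{\mathcal{S}^t}[\phi^t]=\bar\phi^t$ together with the variance bound $\Ex\|\phi^t-\bar\phi^t\|^2\le G^2/\hat K$ for the linear speedup; Lemma~\ref{lem:precodingfactor_zerothorder} for the $\frac{d\sigma^2 G^2}{\hat K^2 P}$ noise contribution; and telescoping with $\nu=\frac{1}{3L}\sqrt{\hat K/T}$. The paper's actual bookkeeping is organized around $\phi_k^t=\nabla f_k(\bmtheta_k^t)$ (gradients at the \emph{inexact local solutions}, not at $\tilde\bmtheta^{t-1}$) and the residual $\delta_k^t=\frac{1}{\nu}(\bmtheta_k^t-\tilde\bmtheta^{t-1})+\phi_k^t$, so that $\tilde\bmtheta^t=\tilde\bmtheta^{t-1}-\nu(\phi^t-\delta^t)+\hat\vecw^t$, with $\bar\phi^t$ tied back to $\nabla F(\tilde\bmtheta^{t-1})$ by smoothness (Lemma~\ref{norm_nablaF-phibar}); this is a cosmetic difference from your plan.

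The genuine gap is in how you convert the zeroth-order inexactness into a quantitative bound. You propose to pass through the exact minimizer $\bmtheta_k^{t,*}$ and use $\bar\mu$-strong convexity of $h_k$, which gives $\|\bmtheta_k^t-\bmtheta_k^{t,*}\|\le\sqrt{2\zeta^t/\bar\mu}$ --- a $\sqrt{\zeta^t}$ dependence. The paper instead invokes a bound that is \emph{linear} in $\zeta^t$: its Lemma~8 asserts $\|\bmtheta_k^t-\tilde\bmtheta^{t-1}+\nu\phi_k^t\|\le 2L\zeta^t\nu+\|\hat\vecw^{t-1}\|$, i.e.\ $\|\delta_k^t\|\lesssim 2L\zeta^t$, following the FedProx(B) analysis of \cite{yuan2022convergence_newfedprox}. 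This distinction is not cosmetic, because the theorem's budget $\zeta^t\le G\nu/\hat K$ is calibrated to the linear bound: the residual's contribution to the descent inequality is then $\nu G\cdot 2L\zeta^t\le 2LG^2\nu^2/\hat K$ per round, which after dividing by $\nu T$ yields $O(G^2/\sqrt{T\hat K})$ as required. Under your $\sqrt{\zeta^t}$ conversion, the per-round error is instead $\|\nabla F\|\cdot\sqrt{2\zeta^t/\bar\mu}\le G\,\nu\sqrt{2G/\hat K}$ (using $\bar\mu\approx\lambda=1/\nu$ and $\zeta^t\le G\nu/\hat K$), and dividing by $\nu T$ after telescoping leaves a \emph{non-vanishing} floor of order $G^{3/2}/\sqrt{\hat K}$ that does not decay with $T$; to absorb it you would need $\zeta^t=O(\nu G^2/(T\hat K))$, roughly a factor of $T$ tighter than what the theorem assumes. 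So as written, your route does not recover the stated $O(1/\sqrt{T\hat K})$ rate; you would either need to adopt the linear-in-$\zeta^t$ residual bound (and justify it, since plain $(L+\lambda)$-smoothness of $h_k$ only gives $\|\nabla h_k(\bmtheta_k^t)\|\le\sqrt{2(L+\lambda)\zeta^t}$) or strengthen the hypothesis on $\zeta^t$.
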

\noindent In Theorem \ref{G_theorem}, we observe that for small $G$, the upper bound remains low, leading to a faster convergence rate. As $G$ increases, the bound scales as $\mathcal{O}(G^2)$, allowing local functions to be more dissimilar while still ensuring convergence, albeit at a slower rate of $\mathcal{O}(1/T^{1/2})$. While this rate is slower, it holds without requiring the restrictive $(B,0)$-LGD condition. Moreover, the term $\mathcal{O}(1/\sqrt{T\hat K})$ highlights the benefits of client sampling in improving convergence speed.\\
 We now extend the convergence analysis of NoROTA-FL under $\zeta^t$-inexactness in the presence of fading.
\begin{corollary}Under the conditions stated in Theorem~\ref{G_theorem} and $\hat r>0$, the convergence bound in the presence of fading is given by
\begin{small}
\begin{align}
&\frac{1}{T}\sum_{t=0}^{T-1}\Ex\left[\norm*{\nabla F(\tilde\bmtheta^{t})}^2\right] 
\leq \frac{ L\Delta+\tilde C_f}{\sqrt{T\hat K}}
\label{eq:fading_G_M}
\end{align}
\end{small}
where $\tilde C_f= G^2\left(1+\frac{d\sigma^2}{\hat r^2\hat K P}\right)$.
\end{corollary}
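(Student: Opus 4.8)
\textbf{Proof proposal for the fading extension of Theorem~\ref{G_theorem}.}

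The plan is to reduce the fading case to the non-fading case already established in Theorem~\ref{G_theorem}, since the fading-induced client selection $\mathcal{K}^t$ is structurally just another instance of the partial-participation regime analyzed there. Concretely, I would revisit the global aggregation rule \eqref{eq:global_fading}, namely $\tilde\bmtheta^t = \frac{1}{|\mathcal{K}^t|}\sum_{k\in\mathcal{K}^t}\bmtheta_k^t + \bar\vecw^t$ with $\bar\vecw^t \sim \mathcal{N}\!\left(0, \frac{\sigma^2}{\hat r^2 |\mathcal{K}^t|^2 p^t}\mathbf{I}_d\right)$. Comparing this to the partial-participation noise $\hat\vecw^t \sim \mathcal{N}\!\left(0,\frac{\sigma^2}{\hat K^2 p^t}\mathbf{I}_d\right)$ from \eqref{eq:global_partial}, the only change is the extra factor $1/\hat r^2$ in the noise covariance. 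So the first step is to identify $|\mathcal{K}^t| = \hat K$ (the thresholding parameter $\hat r$ is calibrated so that the expected number of admitted clients matches $\hat K$, or one works conditionally on $|\mathcal{K}^t|=\hat K$) and to track where the noise variance enters the bound in Theorem~\ref{G_theorem}.

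The second step is to trace the single place in the proof of Theorem~\ref{G_theorem} where the noise variance $\frac{\sigma^2}{\hat K^2 p^t}$ (equivalently $\frac{d\sigma^2}{\hat K^2 p^t}$ after taking the expected squared norm over $\mathbb{R}^d$) appears. In that proof the term $\tilde C = G^2\left(1+\frac{d\sigma^2}{\hat K P}\right)$ is assembled by combining the per-round noise contribution $\Ex\|\bar\vecw^t\|^2 = \frac{d\sigma^2}{\hat K^2 p^t}$ with the precoding-factor bound from Lemma~\ref{lem:precodingfactor_zerothorder}, $\frac{1}{p^t}\leq \frac{G^2}{P}$, giving $\Ex\|\bar\vecw^t\|^2 \leq \frac{d\sigma^2 G^2}{\hat K^2 P}$, and this aggregates over $T$ rounds against the $1/(\lambda T) = \nu/T$-type prefactor to yield the $\frac{d\sigma^2}{\hat K P}$ summand inside $\tilde C$. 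In the fading case the identical chain of inequalities runs, except $\Ex\|\bar\vecw^t\|^2 = \frac{d\sigma^2}{\hat r^2 \hat K^2 p^t} \leq \frac{d\sigma^2 G^2}{\hat r^2 \hat K^2 P}$, so every occurrence of $\frac{d\sigma^2}{\hat K P}$ is simply multiplied by $1/\hat r^2$, producing $\tilde C_f = G^2\left(1+\frac{d\sigma^2}{\hat r^2 \hat K P}\right)$ and leaving the deterministic part $L\Delta$ and the $\frac{1}{\sqrt{T\hat K}}$ rate untouched. The conditions $\nu = \frac{1}{\lambda} = \frac{1}{3L}\sqrt{\hat K/T}$, the bound on $\zeta^t$, and $T>K$ carry over verbatim, with $\hat r>0$ needed only to keep $1/\hat r^2$ finite (and Lemma~\ref{lem:precodingfactor_zerothorder} still applies since the precoding-factor analysis under zeroth-order inexactness did not depend on fading).

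The main obstacle — though a mild one — is justifying that the client-selection set $\mathcal{K}^t$ induced by the threshold $\hat r$ can be treated exactly like the uniformly sampled set $\mathcal{S}^t$ of Theorem~\ref{G_theorem}: the fading magnitudes $r_k^t$ are random, so $\mathcal{K}^t$ is a data-independent but non-uniform random subset, and one must check that the expectation steps in the original proof (in particular any use of $\Ex_{\mathcal{S}^t}$ to symmetrize the selection) go through with $\Ex_{\mathcal{K}^t}$ in their place. Under the standard assumption that fading is independent of the model iterates and the clients' data — which is implicit in the block-fading model of \eqref{eq:FadingOTA} and the CSI-based scheme of Sec.~\ref{sec:fading} — this independence is exactly what is needed, so conditioning on $\mathcal{K}^t$ and then taking the outer expectation reproduces the same bound with $\hat K = \Ex|\mathcal{K}^t|$ (or working on the event $|\mathcal{K}^t|=\hat K$). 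Once this identification is in place, the rest is the mechanical variance-rescaling described above, and the stated bound \eqref{eq:fading_G_M} follows.
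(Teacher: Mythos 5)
Your proposal is correct and follows essentially the same route the paper takes: the fading corollary is obtained by rerunning the proof of Theorem~\ref{G_theorem} with the fading-induced noise $\bar\vecw^t \sim \mathcal{N}\bigl(0,\tfrac{\sigma^2}{\hat r^2|\mathcal{K}^t|^2 p^t}\mathbf{I}_d\bigr)$ in place of $\hat\vecw^t$, so that the precoding bound of Lemma~\ref{lem:precodingfactor_zerothorder} gives $\Ex\norm{\bar\vecw^t}^2 \leq \tfrac{d\sigma^2 G^2}{\hat r^2 \hat K^2 P}$ and every noise contribution picks up the factor $1/\hat r^2$, yielding $\tilde C_f$ while leaving $L\Delta$ and the $1/\sqrt{T\hat K}$ rate unchanged. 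Your additional remark on identifying $|\mathcal{K}^t|$ with $\hat K$ and on the independence of the fading-induced selection from the iterates is a point the paper leaves implicit, but it does not change the argument.
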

\noindent The additional dependence on $\hat r$ in \eqref{eq:fading_G_M} highlights the impact of channel fading. The effect scales as $\mathcal{O}(1/\hat r^2)$, indicating that severe fading (small $\hat r$) amplifies the noise effect, leading to slower convergence. Conversely, higher $\hat r$ reduces the number of participating devices per round, which also degrades convergence. However, as observed in \eqref{eq:fading_G_M}, our method can mitigate the impact of severe fading by increasing the SNR, while the effect of low participation can be counteracted by longer training duration $T$, intuitively requiring more rounds to achieve convergence.
% \section{Experimental Details}
\section{Experimental Results}
\label{sec:experiments}
\noindent In this section, we present experimental evaluations of the NoROTA-FL algorithm across three benchmark datasets, comparing its performance against established OTA-FL baselines such as COTAF \cite{sery2021over}, RobustComm \cite{ang2020robust}, FedProx \cite{li2020federated}, and their respective variants. We utilize various deep neural network architectures for image classification tasks on datasets such as FEMNIST~\cite{caldas2019leafbenchmarkfederatedsettings}, CIFAR10, and CIFAR100~\cite{krizhevsky2009learningcifar}, representing realistic scenarios characterized by non-convex optimization landscapes. Subsequently, we investigate the robustness of the proposed method under two challenges encountered in federated settings: (a) statistical and system heterogeneity among clients, and (b) impairments arising from wireless communication channels, including AWGN and fading.

\noindent \textbf{Datasets:}~The FEMNIST dataset, a federated variant of the MNIST dataset, consists of a large collection of handwritten digit images, each of size $28\times 28$ pixels, categorized into $47$ classes. The CIFAR10 dataset, a widely-used benchmark in computer vision, contains images from $10$ different categories, each of size $32 \times 32$ pixels and consisting of three channels (RGB).  The CIFAR100 dataset is an extension of CIFAR10, comprising 100 object classes with 600 images per class, making it a more complex and fine-grained classification task.

\noindent For our image classification task on the FEMNIST and CIFAR10 datasets, we employ a Convolutional Neural Network (CNN) that comprises of two convolutional layers followed by two fully connected layers for FEMNIST and three convolutional layers followed by two fully connected layers for CIFAR10. For the CIFAR100 dataset, we employ a ResNet-18 classifier to leverage deeper feature extraction and to improve performance on a more challenging dataset.  All datasets are partitioned among clients, in both, IID and non-IID fashion.

 \noindent \textbf{Baselines:}~To demonstrate the reliability of our proposed method in noisy scenarios, we compare its performance against several baselines. In particular, the baselines we consider include COTAF~\cite{sery2021over}, which applies precoding at the clients to mitigate channel noise while implementing FedAvg, and RobustComm~\cite{ang2020robust}, a robust FL technique designed for noisy wireless environments. We also consider NoisyFedAvg and NoisyProx as baselines, which are the implementations of FedAvg and FedProx, respectively, without any means of mitigating the noise effects.
 The relationships between these baselines are as follows: (a)~NoROTA-FL with $\sigma=0$ and $p^t=1$ reduces to FedProx\cite{li2020federated}, (b) NoROTA-FL with $\lambda=0$ and $0$\% stragglers is equivalent to COTAF\cite{sery2021over}, (c) NoROTA-FL with $p^t=1$ corresponds to NoisyProx, (d) COTAF with $\sigma=0$ and $p^t=1$ is equivalent to FedAvg, (e) COTAF with $p^t=1$ is the same as NoisyFedAvg and (f) NoROTA-FL with $\lambda=\sigma^2$ aligns with RobustComm\cite{ang2020robust}. Through these comparisons, we highlight the robustness and efficiency of our proposed technique.

 \noindent \textbf{Experimental Settings:} The default experimental settings are as follows: the number of clients is set to $K = 30$, the mini-batch size is $64$ for all datasets, and the local epochs are fixed at $E = 3$ at all clients in scenarios without system heterogeneity. The SNR of the MAC channel is set to $0$dB unless stated otherwise. $\lambda$ is a hyperparameter and is set to a value of $0.4$ for FEMNIST and CIFAR10 datasets and $0.01$ for CIFAR100. We explore distinct data heterogeneity scenarios to evaluate the performance of our approach. In the IID scenario, training data is distributed among users such that each user possesses an equal distribution of images across all classes. In the non-IID scenario, ($(1-\pi)*100 \%$) of each user's training data is associated with a single label, leading to varied data distributions among users. The value of $\pi$ lies between $0$ and $1$, where the higher the $\pi$, the higher the similarity of data among clients and the lower the data heterogeneity~\cite{sery2021over}. The default similarity parameter $\pi$ is set to $0.5$ unless specified otherwise. 
 
\noindent We assess the performance of the models on FEMNIST, CIFAR10, and CIFAR100 datasets in terms of classification accuracy and convergence behavior across different data heterogeneity scenarios, as detailed above. Furthermore, we analyze the influence of FL and wireless channel attributes such as system heterogeneity, client participation,  communication rounds, SNR, and fading on the models' training and performance.

\subsection{Comparison with Baselines}
\begin{figure}[htbp]
      \includegraphics[scale=0.16]{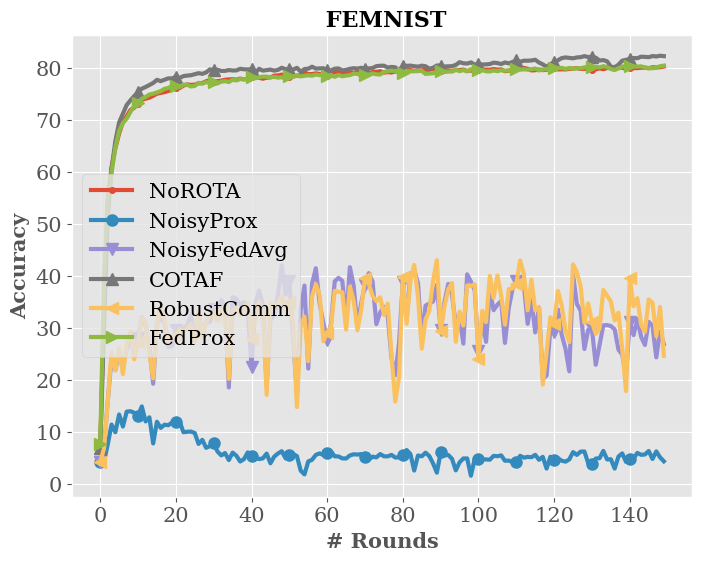}
      \includegraphics[scale=0.16]{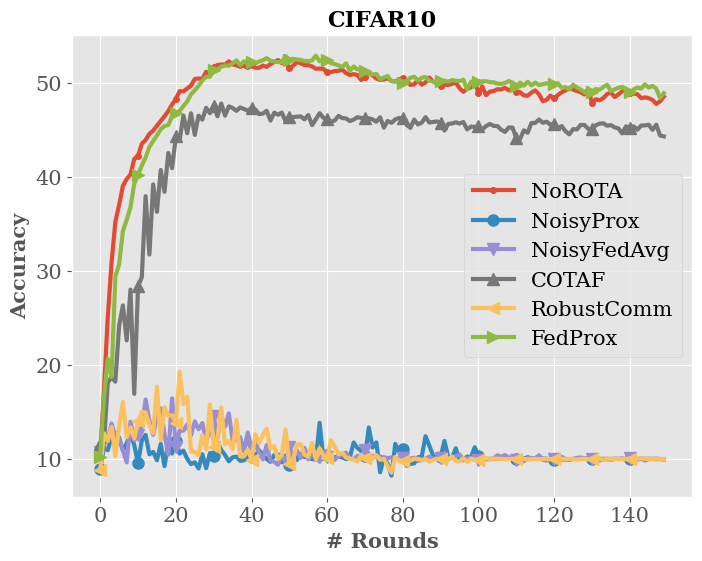}
      \includegraphics[scale=0.16]{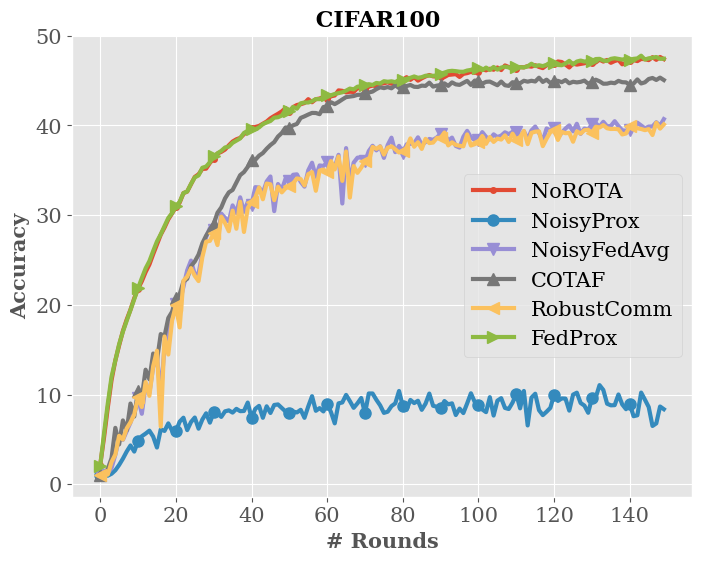}
    \caption{Comparison of NoROTA-FL with the baselines for FEMNIST and CIFAR10 $(\pi = 0.1, \lambda=0.4)$ and CIFAR100 $(\pi =  0.5, \lambda=0.01)$ for SNR($\tau$) = $0$ dB.}
     \label{fig:all_snr0_sim0Pt1}
\end{figure}
\noindent In this subsection, we compare the proposed NoROTA-FL framework against the baselines described in the previous section, considering the default parameter settings. We observe from Fig.~\ref{fig:all_snr0_sim0Pt1} that the proposed method (NoROTA) performs on par with FedProx while exhibiting robustness to both heterogeneity and wireless channel noise. This indicates that the precoding strategy in NoROTA-FL is successful in mitigating the effect of noise. To emphasize this further, we have also included the convergence behavior of NoisyFedAvg and NoisyProx, which do not incorporate any precoding. We observe that both NoisyFedAvg and NoisyProx fail to converge in most cases. Further, NoROTA-FL surpasses COTAF in identical wireless settings, highlighting its superior performance benefiting from the proposed proximal term. Additionally, NoROTA-FL consistently outperforms RobustComm, where RobustComm incorporates a constant noise-variance-based Lagrange multiplier for the proximal term. This signifies that our proposed strategy to search for the optimal Lagrange multiplier ($\lambda$) for the proximal term is better suited as compared to \cite{ang2020robust}. Another important observation is that NoROTA-FL demonstrates superior stability, i.e., fewer accuracy variations across communication rounds and faster convergence on all the datasets as compared to other methods.

\subsection{Ablation Study}
\noindent We study the effect of varying data and system heterogeneity, SNR and fading on the proposed algorithm as compared to the baselines. We also substantiate some of the observations made in the convergence analysis given in Sec.~\ref{sec:convergence_analysis} and Sec.~\ref{sec:zeta_convergence}.
%Data heterogeneity is a fundamental attribute of FL, and addressing heterogeneity is crucial to enhancing the robustness of any method. 
\noindent \textbf{Varying Data Heterogeneity ($\pi$)}:~ We demonstrate the performance of the proposed approach and the baselines by varying $\pi$, which is the similarity parameter as described earlier.  As depicted in Fig.~\ref{fig:similarity}, we observe that NoROTA-FL exhibits only a slight decrease in performance with an increase in data heterogeneity (from $\pi = 0.5$ to $\pi = 0.1$) on FEMNIST and CIFAR10  datasets, while still maintaining high accuracy and stability, signifying its resilience to varying data distributions across clients. For CIFAR100, the degradation is more pronounced due to the high number of classes compared to other datasets. The performance on the CIFAR10 and  CIFAR100 dataset highlights the effectiveness of the proximal term in mitigating the effects of data heterogeneity, as compared to COTAF. We also observe that NoisyProx barely converges for the above setting. Hence, the robustness and stability demonstrated by NoROTA-FL make it an excellent choice for FL scenarios characterized by diverse data distributions in noisy environments.

%NoROTA consistently demonstrates superior performance across different levels of data heterogeneity, maintaining high accuracy and stability compared to other methods. 
\begin{figure}[htbp]
       \includegraphics[scale=0.16]{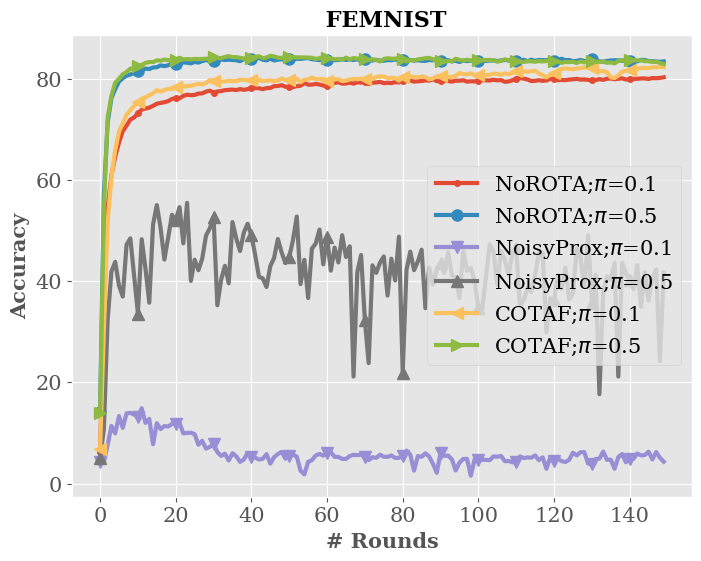}
    % \label{fig:fem_norota_het}
         \includegraphics[scale=0.16]{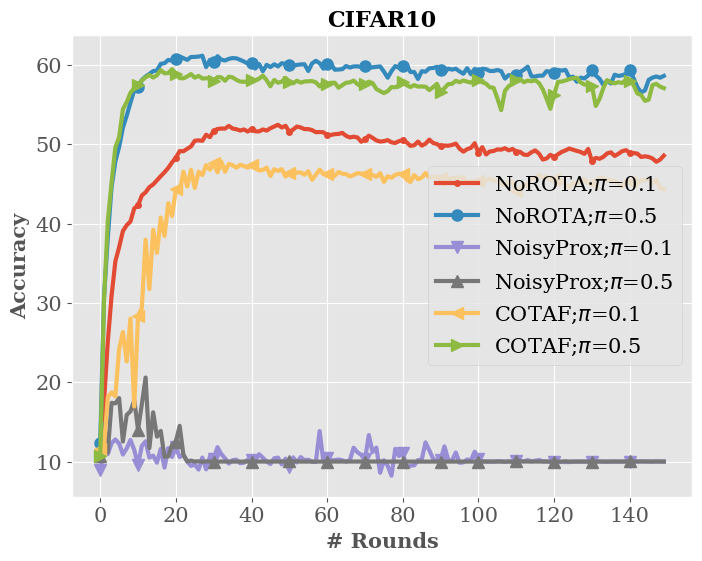}
    % \label{fig:cifar_snr0_hetro}
      \includegraphics[scale=0.16]{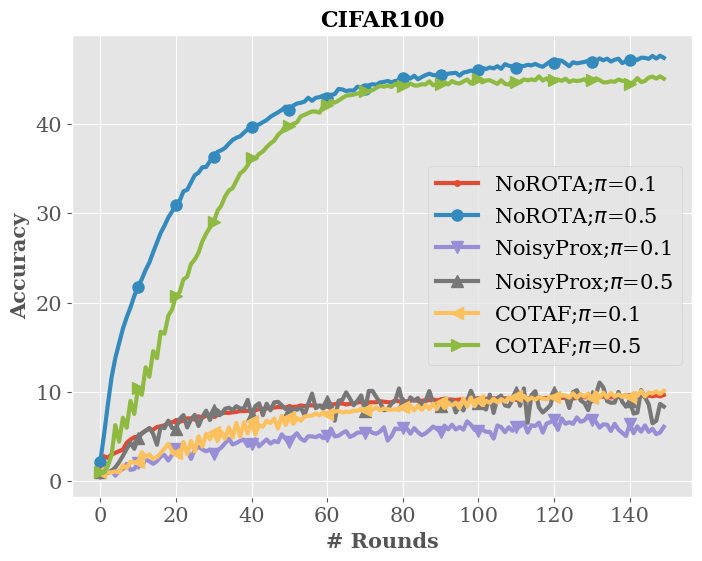}
    \caption{Performance of NoROTA-FL and baselines for varying data heterogeneity for SNR = $0$dB.}
    \label{fig:similarity}
\end{figure}
\vspace{-1em}
\begin{figure}[htbp]
    \centering
      \includegraphics[scale=0.21]{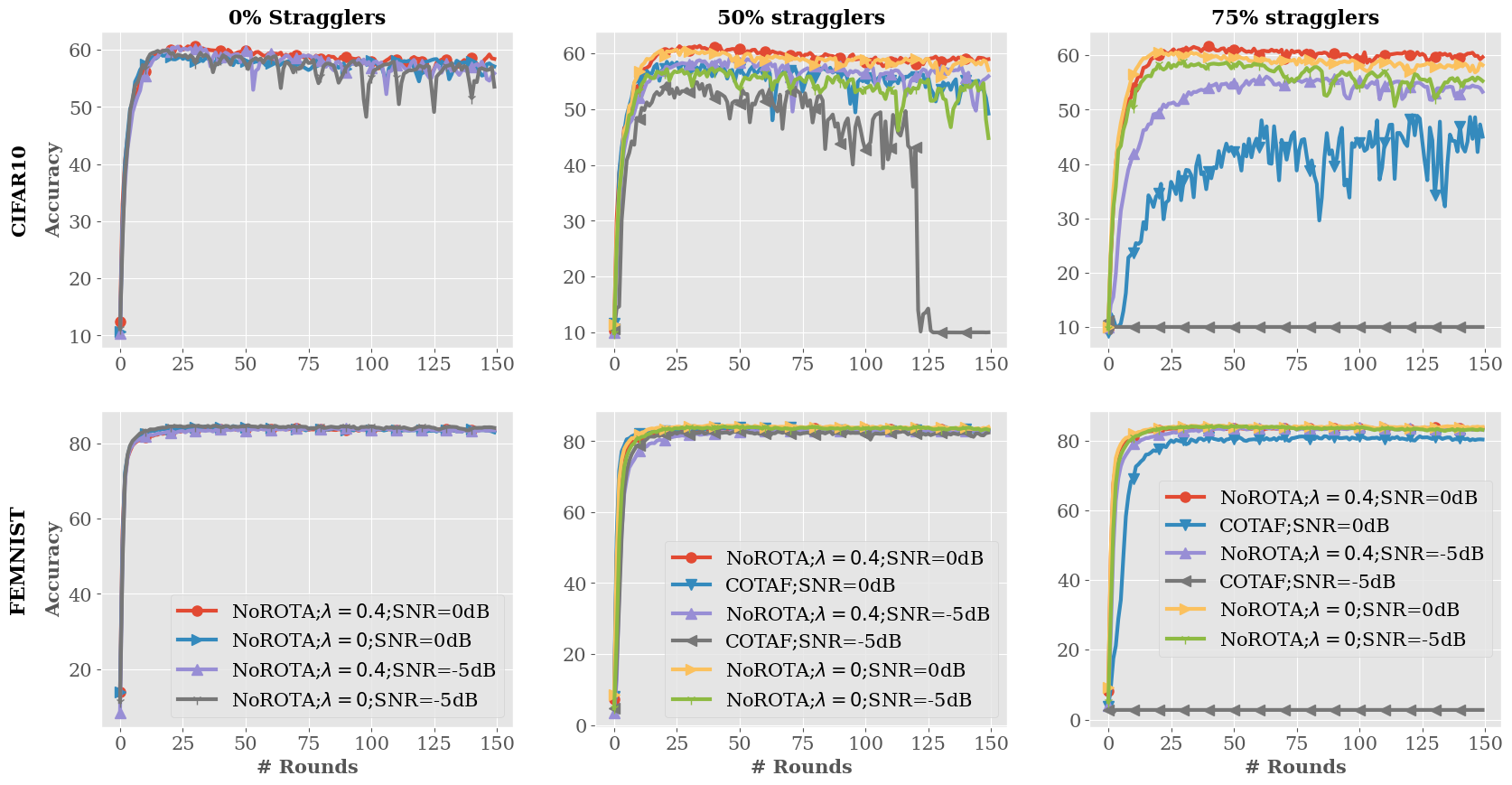}
    \caption{Performance of NoROTA-FL and COTAF under heterogeneous system settings on FEMNIST and CIFAR10 datasets. }
    \label{fig:Stragglers}
    \end{figure}
% \vspace{-1em}
\begin{figure}[htbp]
    \centering
      \includegraphics[scale=0.21]{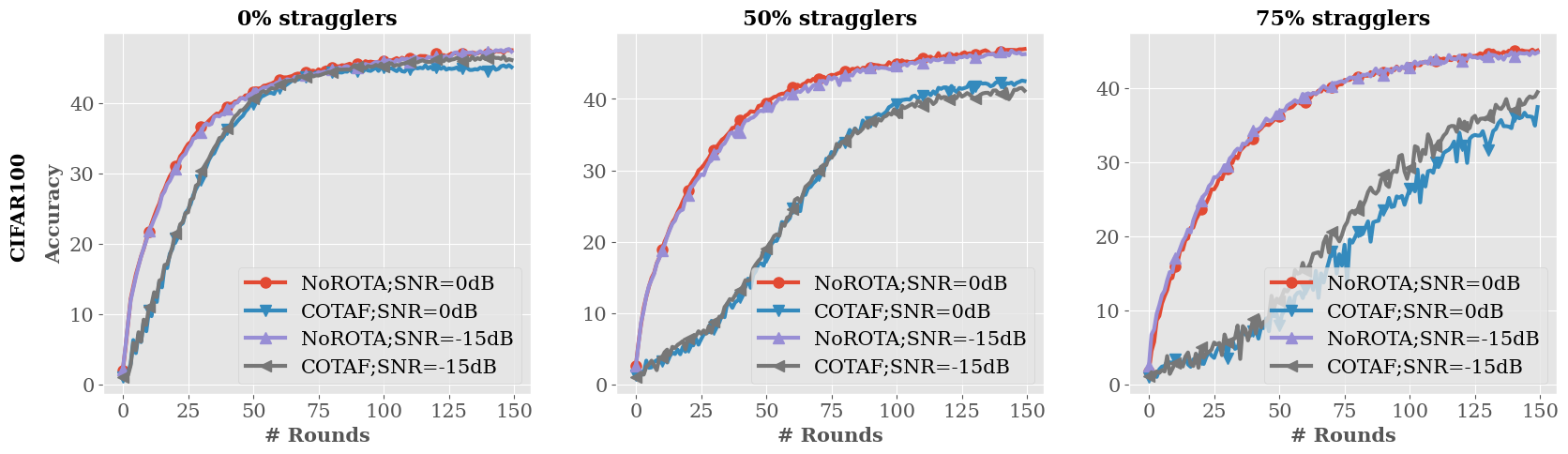}
    \caption{Performance of NoROTA-FL and COTAF under heterogeneous system settings on CIFAR100 dataset. }
    \label{fig:Stragglers_cifar100}
    \vspace{-5mm}
    \end{figure}
\noindent \textbf{Varying System Heterogeneity:}
 We conduct extensive experiments to evaluate the effects of varying levels of system heterogeneity by introducing stragglers into the FL system. We introduce stragglers which are assumed to have varying system capabilities. We examine the proposed framework and the baselines by introducing different percentages of stragglers: $0\%, 50\%,$ and $75\%$. The plots in Fig.~\ref{fig:Stragglers} and Fig.~\ref{fig:Stragglers_cifar100} depict the performance of NoROTA-FL and COTAF under heterogeneous system settings on the FEMNIST, CIFAR10, and CIFAR100 datasets. Each column of plots corresponds to a different percentage of stragglers: $0\%$, $50\%$, and $75\%$.
 NoROTA-FL is designed to handle stragglers, whereas the COTAF addresses stragglers in each round by dropping them. It is important to note that in the case of $0\%$ stragglers, NoROTA-FL with $\lambda=0$ is equivalent to COTAF as depicted in Fig.~\ref{fig:Stragglers}. In the scenario with $50\%$ stragglers, we observe that COTAF's performance deteriorates when the SNR is -$5$dB in Fig.~\ref{fig:Stragglers}. This deterioration becomes more pronounced in the case of $75\%$ stragglers, where COTAF is unable to perform well, even on the FEMNIST dataset. Conversely, NoROTA-FL maintains robust performance even as the proportion of stragglers increases. This trend is also observed in the case of CIFAR100 in Fig.~\ref{fig:Stragglers_cifar100}. 
 \begin{figure*}[htbp]
 \centering
      \includegraphics[scale=0.195]{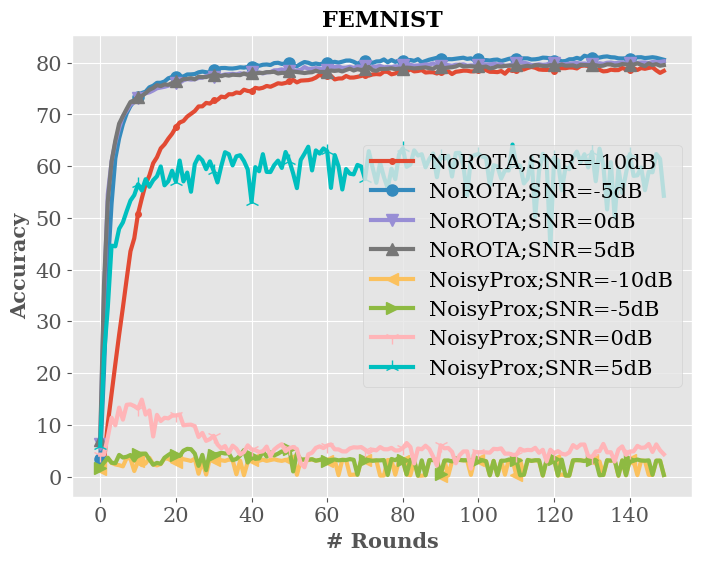}
    % \label{fig:varyingsnr_femnist}
         \includegraphics[scale=0.195]{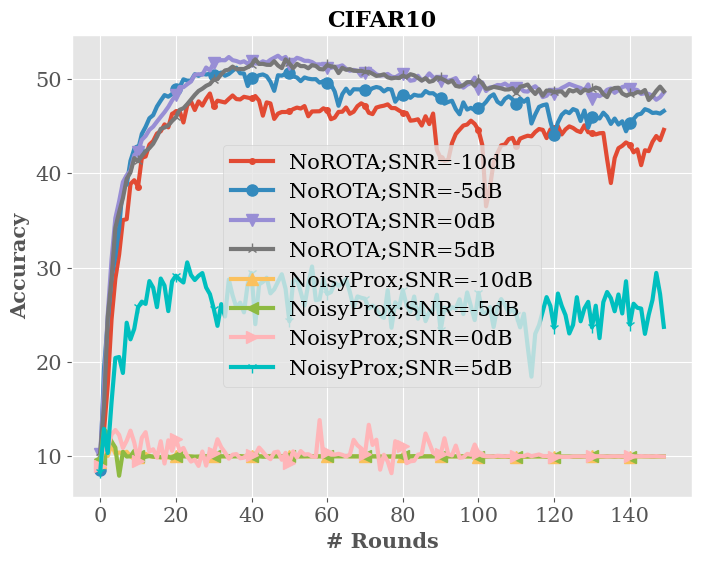}
          \includegraphics[scale=0.195]{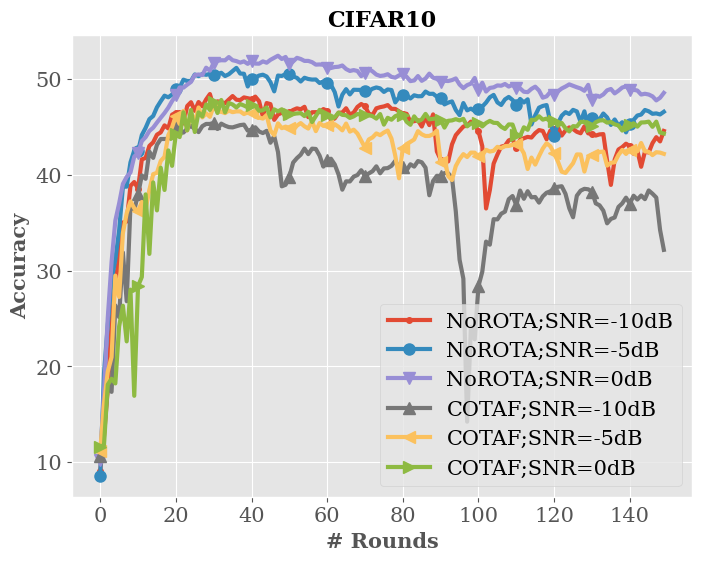}
         \includegraphics[scale=0.195]{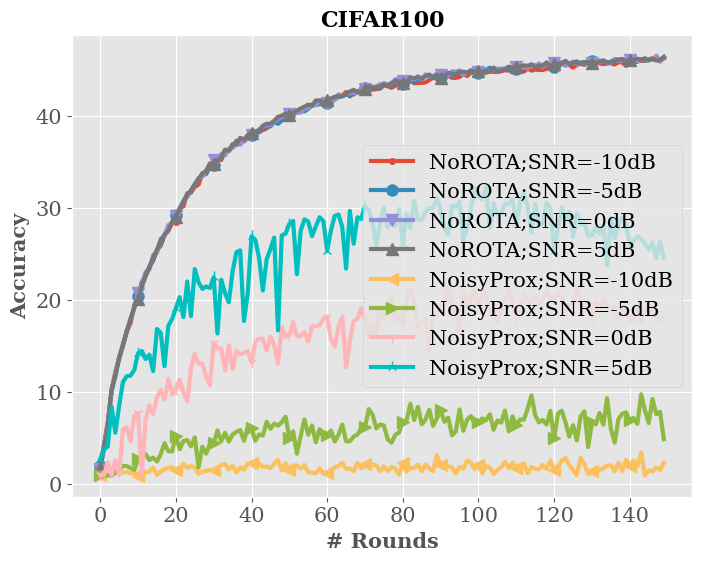}
         \includegraphics[scale=0.195]{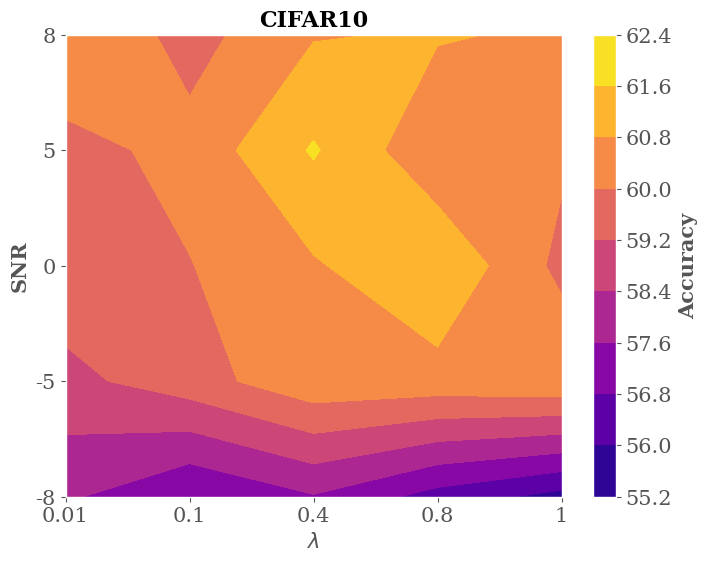}
    % \caption{Performance on CIFAR10 dataset for varying SNR and $\beta$ = 0.1.}
    % \label{fig:cifar_nnp_snr}
    \caption{Performance of NoROTA-FL, NoisyProx and COTAF for varying SNR on FEMNIST, CIFAR10 $(\pi= 0.1)$ and CIFAR100 $(\pi= 0.5)$ datasets. Rightmost: Contour plot for SNR Vs $\lambda$ on CIFAR10 dataset $(\pi=0.5)$.}
    \label{fig:snr_plot1}
\end{figure*}

% \begin{figure}[htbp]
%       \includegraphics[scale=0.16]{femnistResults/norota_femnist_varyingsnr_sim0.1png.png}
%     % \label{fig:varyingsnr_femnist}
%          \includegraphics[scale=0.16]{cifar10Results/cifar_nnp_snr.png}
%          \includegraphics[scale=0.16]{cifar100results/varyingsnr.png}
%     % \caption{Performance on CIFAR10 dataset for varying SNR and $\beta$ = 0.1.}
%     % \label{fig:cifar_nnp_snr}
%     \caption{Performance of NoROTA-FL and NoisyProx for varying SNR. We choose $\beta= 0.1$ for FEMNIST, CIFAR10 and $\beta= 0.5$ for CIFAR100.}
%     \label{fig:snr_plot1}
% \end{figure}

% \begin{figure}[htbp]
%       \includegraphics[scale=0.16]{cifar10Results/cifar_nc_snr.png}
%     % \caption{Performance of NoROTA  and COTAF on CIFAR10 dataset for varying SNR and $\beta$ = 0.1.}
%     % \label{fig:cifar_nc_snr}
%         \includegraphics[scale=0.16]{cifar10Results/heatmap(snrVslambda).png}
%          \includegraphics[scale=0.16]{cifar10Results/heatmap(snrVslambda).png}
%     % \caption{ Contour plot for SNR Vs $\lambda$ on CIFAR10 dataset and $\beta=0.5$}
%     % \label{fig:heatmap(snrVslambda)}
%     \caption{Left: Performance of NoROTA-FL and COTAF for varying SNR and $\beta = 0.1$. Right: Contour plot for SNR Vs $\lambda$ on CIFAR10 dataset and $\beta=0.5$.}
%     \label{fig:snr_plot2}
% \end{figure}
\begin{figure*}[htbp]
    \centering
       \includegraphics[scale=0.195]{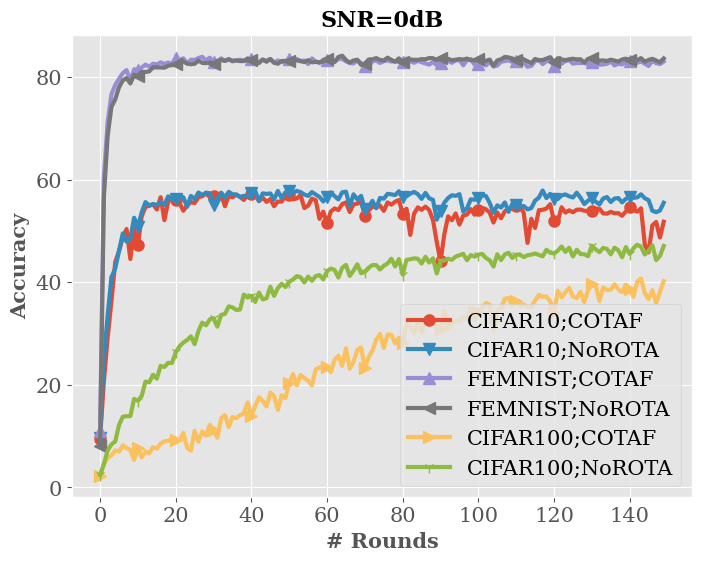}
       \includegraphics[scale=0.195]{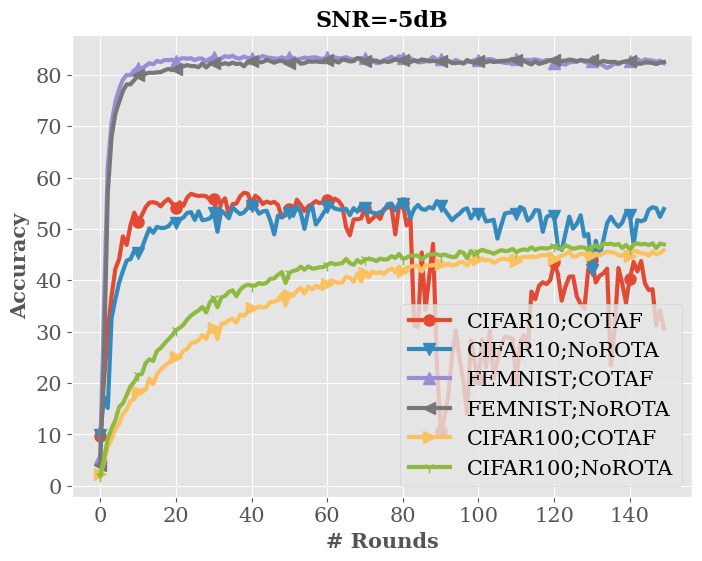}
       \includegraphics[scale=0.195]{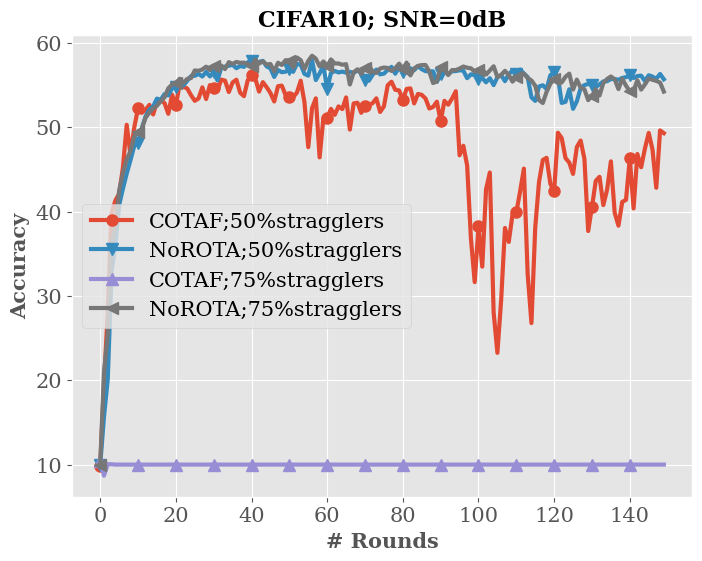}
       \includegraphics[scale=0.195]{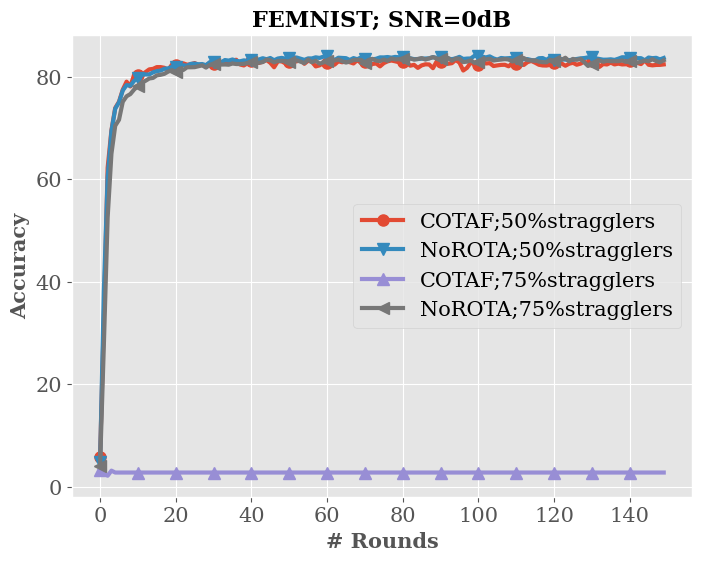}
       \includegraphics[scale=0.195]{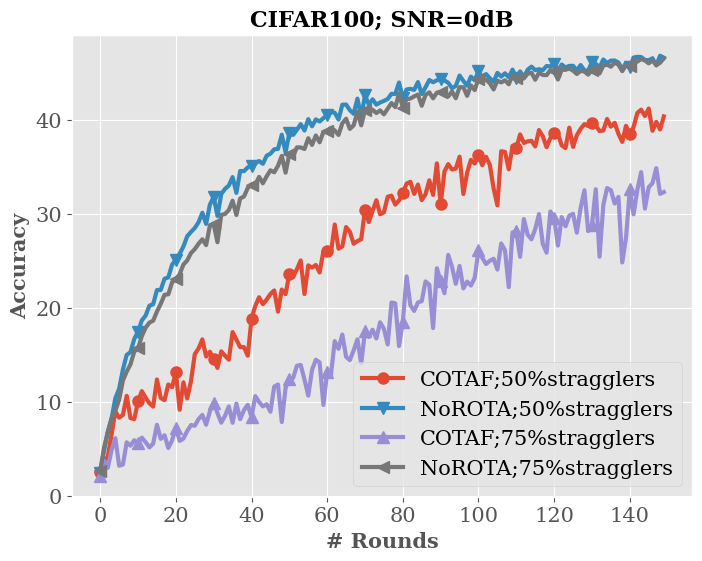}
    \caption{Performance of NoROTA-FL and COTAF under the effect of fading and stragglers. For the leftmost plot, we choose $\hat r$ such that the minimum number of clients is $10$ with an SNR of $0$dB. For the rest of the plots, we choose the $\hat r$ such that the minimum number of clients is $20$.}
    \label{fig:fading}
    \end{figure*}

\noindent \textbf{Varying SNR:}
We demonstrate the performance of the proposed approach and the baselines by varying SNR, i.e., we verify the robustness of the proposed approach in the presence of detrimental effects of noise at the server. From Fig.~\ref{fig:snr_plot1}, it is evident that NoROTA-FL demonstrates robustness to varying SNR. The performance is shown for SNR  ranging from $-10$dB to $5$dB. The proposed method effectively mitigates the impact of noise, with only a slight decrease in accuracy observed at SNR of $-10$dB on FEMNIST, CIFAR10. The robustness to varying SNR is highly evident in the CIFAR-100 dataset. Additionally, the effectiveness of the proximal term is highlighted on the CIFAR10 and CIFAR100 datasets, where we compare NoROTA-FL with NoisyProx and COTAF across varying SNR. These observations clearly indicate that NoROTA-FL exhibits greater robustness as compared to COTAF. It is also observed that when SNR increases, the proposed method performs better, which indicates small levels of noise are fully mitigated in NoROTA-FL. As discussed in the convergence analysis in the theorems of Sec.~\ref{sec:convergence_analysis} and \ref{sec:zeta_convergence}, it is experimentally illustrated in (FEMNIST and CIFAR10 datasets) Fig.~\ref{fig:snr_plot1} that lower values of SNR slow down the convergence rate.

\noindent In Fig.~\ref{fig:snr_plot1} (rightmost), we present a contour plot depicting the relationship between SNR and $\lambda$, and its effect on the resulting accuracy of the proposed NoROTA-FL method on the CIFAR10 dataset. The plot shows that accuracy generally increases with higher SNR values, as expected. As SNR moves from $-8$dB to $8$dB, the accuracy improves, indicating that NoROTA-FL is more effective in less noisy environments. It is evident that the parameter $\lambda$ significantly influences the accuracy. For small values of $\lambda$ (close to $0.01$), accuracy is generally lower on the CIFAR10 dataset. As $\lambda$ increases, accuracy initially improves, peaking at around $\lambda \approx 0.4$, where the highest accuracy is observed. For higher values of $\lambda$, accuracy does not improve significantly and may also lead to a slight decline. This was also observed in the theoretical analysis in Sec.~\ref{sec:convergence_analysis}, where we concluded that large values of $\lambda$ hindered convergence. This indicates that tuning $\lambda$ appropriately can significantly enhance the robustness and accuracy of NoROTA-FL, especially in noisy environments. We have observed that values of $\lambda > 1$ results in poor accuracy performance.

\noindent \textbf{Effect of Fading:} We experimentally analyze the impact of fading on the proposed setup. In Fig.~\ref{fig:fading}, we illustrate the performance of NoROTA-FL and COTAF in the presence of fading on the FEMNIST, CIFAR10, and CIFAR100 datasets. Further, we also investigate the effect of fading under varying SNR and in the presence of stragglers. The first and second plot from the left in Fig.~\ref{fig:fading} shows that NoROTA-FL achieves higher accuracy as compared to COTAF under fading conditions, especially in CIFAR100 and CIFAR10 datasets. The third, fourth, and fifth plots from the left demonstrate the robust performance of NoROTA-FL under the joint effect of fading and stragglers on the FEMNIST, CIFAR10, and CIFAR100 datasets, respectively. In contrast, COTAF fails to converge in scenarios with $75\%$ stragglers on all the datasets.

\vspace{-1em}
\section{Conclusions}
\label{sec:conc}
\noindent In this work, we proposed the NoROTA-FL framework, which is a robust OTA-FL algorithm that tackles fading, noise uncertainty, and data and system heterogeneity that are inherent to wireless federated networks. We derived novel constrained local optimization problems that incorporated the effects of noise and fading at each client. The constraint, also called the proximal term, played a dual role where it allowed for variable amounts of work to be performed locally across devices and proved to be effective at mitigating noise and fading in each communication round. NoROTA-FL uses precoding for long-term noise robustness. We provide convergence guarantees for NoROTA-FL for locally non-convex optimization problems, while solving for first-order inexact solutions in mild and severe heterogeneity settings, as characterized by the local gradient dissimilarity assumption. We also provide convergence guarantees for NoROTA-FL that seek zeroth-order inexact solutions in very severe heterogeneity settings as characterized by the Lipschitz continuity assumptions. Our empirical evaluation across a suite of federated datasets has validated our theoretical analysis and demonstrated that the NoROTA-FL framework can significantly stabilize and improve the convergence behavior of FL in realistic wireless heterogeneous networks.

\begin{appendix}

\begin{lemma}
\label{lem:boundingthegradients}
(Bounding the gradients) Let assumption~3 hold. Then, for any $\bmtheta, {\tilde{\bmtheta}} \in \mathbb{R}^d$ and $\vecw^t = {\tilde\bmtheta}^t-{\bmtheta}^t$ we have the following bound on the norm of the gradient of $f(\cdot)$: $\norm{\nabla{f({\bmtheta}^t)}}\leq \norm{\nabla{f({\tilde{\bmtheta}}^t)}}+ L\norm{\vecw^t}.$
% \begin{align}
%   \norm{\nabla{f({\bmtheta}^t)}}\leq \norm{\nabla{f({\tilde{\bmtheta}}^t)}}+ L\norm{\vecw^t}.
% \end{align}
% \label{lem:boundingthegradients}
\end{lemma}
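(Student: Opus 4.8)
The claim is a direct consequence of the $L$-Lipschitz smoothness of $f(\cdot)$ from Assumption~3, combined with the triangle inequality. The plan is to write $\nabla f(\bmtheta^t) = \nabla f(\tilde\bmtheta^t) + \left(\nabla f(\bmtheta^t) - \nabla f(\tilde\bmtheta^t)\right)$, take norms, and apply subadditivity of the norm.

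\begin{proof}
By the triangle inequality,
\begin{align}
\norm{\nabla f(\bmtheta^t)} &= \norm{\nabla f(\tilde\bmtheta^t) + \big(\nabla f(\bmtheta^t) - \nabla f(\tilde\bmtheta^t)\big)} \nonumber\\
&\leq \norm{\nabla f(\tilde\bmtheta^t)} + \norm{\nabla f(\bmtheta^t) - \nabla f(\tilde\bmtheta^t)}. \nonumber
\end{align}
By Assumption~3, each $f_k(\cdot)$ — and hence $f(\cdot)$ — is $L$-Lipschitz smooth, so $\norm{\nabla f(\bmtheta^t) - \nabla f(\tilde\bmtheta^t)} \leq L\norm{\bmtheta^t - \tilde\bmtheta^t} = L\norm{\vecw^t}$, using the definition $\vecw^t = \tilde\bmtheta^t - \bmtheta^t$ and the symmetry of the norm. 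Substituting gives $\norm{\nabla f(\bmtheta^t)} \leq \norm{\nabla f(\tilde\bmtheta^t)} + L\norm{\vecw^t}$, as claimed.
\end{proof}

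There is essentially no obstacle here: the only subtlety worth noting is the sign convention, since $\vecw^t = \tilde\bmtheta^t - \bmtheta^t$ is defined with $\tilde\bmtheta^t$ first, whereas the Lipschitz-smoothness bound is invoked on $\norm{\bmtheta^t - \tilde\bmtheta^t}$; these coincide because the norm is symmetric. This lemma is purely an auxiliary inequality that will be reused to transfer gradient bounds between the noisy iterate $\tilde\bmtheta^t$ and the noiseless iterate $\bmtheta^t$ throughout the convergence proofs, so keeping it as a standalone one-line consequence of smoothness is the cleanest approach.
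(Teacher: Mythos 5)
Your proof is correct and is essentially identical to the paper's: the paper applies the reverse triangle inequality $\norm{\nabla f(\bmtheta^t)} - \norm{\nabla f(\tilde\bmtheta^t)} \leq \norm{\nabla f(\bmtheta^t) - \nabla f(\tilde\bmtheta^t)}$ followed by $L$-Lipschitz smoothness, which is the same argument as your add-and-subtract decomposition plus the triangle inequality. (The paper additionally sketches an approximate Taylor-expansion derivation, but that is supplementary and your one-line version matches the main proof.)
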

\begin{lemma}(Modified  $\gamma^t$ inexactness)
    \label{lem:norm_gradient_h}
   Given the $\gamma^t$ inexactness condition \cite{li2020federated}, expressed as $\norm{\nabla{h_k(\bmtheta_k^{t+1};{\bmtheta}^t)}} \leq \gamma^t\norm{\nabla{f_k({\bmtheta}^t)}} $ where $\bmtheta^t$ are non-noisy parameter updates, the corresponding $\gamma^t$ inexactness for noisy parameter updates $\tilde \bmtheta^t$ is given by
    \begin{align}
\norm{\nabla{h_k(\bmtheta_k^{t+1};\tilde{\bmtheta}^t)}} \leq \gamma^t\norm{\nabla{f_k(\tilde{\bmtheta}^t)}} + (\gamma^t L+\lambda) \norm{{\vecw}^t}.
    \end{align}
\end{lemma}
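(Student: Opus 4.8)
The plan is to exploit the observation that the two proximal surrogates $h_k(\cdot;\bmtheta^t)$ and $h_k(\cdot;\tilde\bmtheta^t)$ differ only through the anchor of their quadratic term, so their gradients at any common point differ by a fixed vector whose norm is controlled by $\norm{\vecw^t}$. Concretely, from the definition $h_k(\bmtheta;\tilde\bmtheta^t)=f_k(\bmtheta)+\tfrac{\lambda}{2}\norm{\bmtheta-\tilde\bmtheta^t}^2$ we have $\nabla h_k(\bmtheta;\tilde\bmtheta^t)=\nabla f_k(\bmtheta)+\lambda(\bmtheta-\tilde\bmtheta^t)$, and likewise $\nabla h_k(\bmtheta;\bmtheta^t)=\nabla f_k(\bmtheta)+\lambda(\bmtheta-\bmtheta^t)$. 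Subtracting gives the identity $\nabla h_k(\bmtheta;\tilde\bmtheta^t)-\nabla h_k(\bmtheta;\bmtheta^t)=\lambda(\bmtheta^t-\tilde\bmtheta^t)=-\lambda\vecw^t$, so these two gradients are a constant shift of one another regardless of $\bmtheta$.

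First, I would instantiate this identity at $\bmtheta=\bmtheta_k^{t+1}$ and apply the triangle inequality to get $\norm{\nabla h_k(\bmtheta_k^{t+1};\tilde\bmtheta^t)}\le\norm{\nabla h_k(\bmtheta_k^{t+1};\bmtheta^t)}+\lambda\norm{\vecw^t}$. Second, I would substitute the hypothesized non-noisy inexactness condition $\norm{\nabla h_k(\bmtheta_k^{t+1};\bmtheta^t)}\le\gamma^t\norm{\nabla f_k(\bmtheta^t)}$, producing the bound $\gamma^t\norm{\nabla f_k(\bmtheta^t)}+\lambda\norm{\vecw^t}$. Third, since each $f_k$ is $L$-Lipschitz smooth by Assumption~3, Lemma~\ref{lem:boundingthegradients} applied to $f_k$ with the same displacement $\vecw^t=\tilde\bmtheta^t-\bmtheta^t$ yields $\norm{\nabla f_k(\bmtheta^t)}\le\norm{\nabla f_k(\tilde\bmtheta^t)}+L\norm{\vecw^t}$; multiplying by $\gamma^t\ge0$ and collecting the two $\norm{\vecw^t}$ terms gives exactly $\gamma^t\norm{\nabla f_k(\tilde\bmtheta^t)}+(\gamma^t L+\lambda)\norm{\vecw^t}$, which is the claimed inequality.

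Since every step is either a one-line gradient identity, the triangle inequality, or a direct invocation of an already-established fact, there is no substantive obstacle here. The only point requiring mild care is that Lemma~\ref{lem:boundingthegradients}, as stated for a generic smooth function $f$, is being applied to each local objective $f_k$; this is legitimate precisely because Assumption~3 endows every $f_k$ with the common smoothness constant $L$. One should also keep the sign convention consistent, taking the displacement vector throughout as $\vecw^t=\tilde\bmtheta^t-\bmtheta^t$ so that $\norm{\bmtheta^t-\tilde\bmtheta^t}=\norm{\vecw^t}$ and the final bound is expressed in the stated form.
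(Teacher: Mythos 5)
Your proposal is correct and follows essentially the same route as the paper: both decompose $\nabla h_k(\bmtheta_k^{t+1};\tilde\bmtheta^t)$ as $\nabla h_k(\bmtheta_k^{t+1};\bmtheta^t)-\lambda\vecw^t$ via $\tilde\bmtheta^t=\bmtheta^t+\vecw^t$, apply the triangle inequality and the non-noisy inexactness condition, and then invoke Lemma~\ref{lem:boundingthegradients} on $f_k$ to convert $\norm{\nabla f_k(\bmtheta^t)}$ into $\norm{\nabla f_k(\tilde\bmtheta^t)}+L\norm{\vecw^t}$. Your framing of the two surrogate gradients as differing by a constant shift $-\lambda\vecw^t$ is a slightly cleaner way of stating what the paper does term-by-term, but the argument is the same.
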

\begin{proof}
   The local objective function as given in Definition~1 is as follows,
   \vspace{-1em}
\begin{align}
h_k(\bmtheta_k^{t+1};{\tilde{\bmtheta}}^t) = f_k(\bmtheta_k^{t+1}) + \frac{\lambda}{2}\norm{\bmtheta_k^{t+1} - {\tilde{\bmtheta}}^t}^2.
\label{eq:OptiProxA}
\end{align}
where $\tilde{\bmtheta}^t$ is the noisy  aggregated global model from the $t$-th  communication round. Differentiating \eqref{eq:OptiProxA} with respect to $\bmtheta_k^{t+1}$, considering $\ell_2$ norm on both the sides and applying triangular inequality, we obtain 
    \begin{align}
\norm{\nabla{h_k(\bmtheta_k^{t+1};\tilde{\bmtheta}^t)}}&\leq\norm{\nabla f_k(\bmtheta_k^{t+1}) + \lambda(\bmtheta_k^{t+1}-\bmtheta^t)}+\lambda\norm*{{\vecw}^t}.\nonumber\\
&\leq \norm{\nabla{h_k(\bmtheta_k^{t+1};{\bmtheta}^t)}} + \lambda\norm{{\vecw}^t},
    \label{eq:prox_tri_inequalityA}
\end{align}
where we have used the noisy fedavg decoding rule from \eqref{eq:global} given as ${\tilde\bmtheta}^t = \frac{1}{K}\tsum_{k  = 1}^K \bmtheta_k^t + {\vecw}^t$ and introduced the noiseless parameter update as  ${\bmtheta}^t = \frac{1}{K}\tsum_{k=1}^{K}\bmtheta_k^t$, which leads to ${\tilde\bmtheta}^t = \bmtheta^t + {\vecw}^t$.
Using the notion of $\gamma^t$-inexactness with non-noisy parameter updates as $\norm{\nabla{h_k(\bmtheta_k^{t+1};{\bmtheta}^t)}} \leq \gamma^t\norm{\nabla{f_k({\bmtheta}^t)}}$, \eqref{eq:prox_tri_inequalityA} can be rewritten as
\begin{align}
\norm{\nabla{h_k(\bmtheta_k^{t+1};\tilde{\bmtheta}^t)}} &\leq \gamma^t\norm{\nabla{f_k({\bmtheta}^t)}} + \lambda\norm{{\vecw}^t}.
  \label{norm_nabla_hkA}
\end{align}
Finally, using Lemma.~\ref{lem:boundingthegradients}, we have the result.
\end{proof}

\noindent \textbf{Proof of Lemma~\ref{lem:precodingfactor_1storder} and Lemma~\ref{lem:precodingfactor_zerothorder} :}
From the definition of $p^t$ in \eqref{eq:precoder1}, we have $\frac{1}{p^t}\leq \frac{1}{P}\sum_{k=1}^K q_k\{\Ex[\norm{\bmtheta_k^t-\tilde\bmtheta^{t-1}}^2]\}$. First, we consider the first-order inexactness condition. Using the differentiation of $h_k(\cdot)$, we have
\begin{align}
    \frac{1}{p^t}&\leq\frac{\Ex_k\norm{\nabla h_k(\bmtheta_k^t;\tilde\bmtheta^{t-1})-\nabla f_k(\bmtheta_k^t)}^2}{\lambda^2 P}\nonumber\\
    &\leq\frac{8K^2(B^2 \norm{\nabla F(\tilde\bmtheta)}^2 +H^2)}{\lambda^2 P K^2-(\gamma^t L+\lambda)^2d \sigma^2},
\end{align}
where the last inequality is obtained after using Lemma~\ref{lem:norm_gradient_h}, Young's inequality and Assumption~1. For $\lambda>\frac{\gamma^t L}{K\sqrt{\tau}}$, we have the result.
Next, we consider the zeroth-order inexactness condition.From the definition of $p^t$ in \eqref{eq:precoder1}, using $\norm*{\bmtheta_k^t-\tilde \bmtheta^{t-1}}\leq \nu(2L\zeta^t+G)+\norm*{\vecw^{t}}$ and using $K^2 \tau>2$ we have 
\begin{align}
    \frac{1}{p^t}&\leq\frac{2K^2(2L\zeta^t+G)^2}{\lambda^2 (PK^2-2d\sigma^2)}.
\end{align}
% \begin{lemma}
% The precoding factor in each round can be bounded as
% $\frac{1}{p^t}\leq \frac{1}{P}\Ex_k[\norm{\nabla f_k(\tilde\bmtheta^t)}^2$.
% \label{lem:precodingfactor}

% \end{lemma}

% \noindent \textbf{Upperbound on the Noise Variance:} Since in the case of full participation, $\vecw^t \sim \mathcal{N} (0,\frac{\sigma^2}{K^2 p^t}\mathbf{I}_d)$, therefore $\Ex_{\vecw^t}\left[\norm{\vecw^t}^2\right]=\frac{d\sigma^2}{K^2p^t}$. Using Lemma~\ref{lem:precodingfactor} and Assumption~1, we have
% \begin{align}
% \Ex_{\vecw^t}\left[\norm{\vecw^t}^2\right]=\frac{ d\sigma^2}{K^2p^t}\leq\frac{ d\sigma^2(B^2\norm{\nabla F(\tilde\bmtheta^t)}^2 +H^2)}{K^2P}.
% \end{align}
% Using Jensen's inequality, we can rewrite the above as
% \begin{align}
%    \Ex_{\vecw^t}\left[\norm{\vecw^t}\right]\leq \sqrt{\Ex_{\vecw^t}\left[\norm{\vecw^t}^2\right]}\leq\frac{\sqrt{d} \sigma (B\norm{\nabla F(\tilde\bmtheta^t)}+H)}{K\sqrt{P}}.
%    \label{eq:noise2normbound}
% \end{align}
% Similarly expressions in the case of partial participation case where $\hat\vecw^t \sim \mathcal{N} (0,\frac{\sigma^2}{\hat{K}^2 p^t}\mathbf{I}_d)$, and fading where $\bar\vecw^t \sim \mathcal{N} (0,\frac{\sigma^2}{\hat r^2|\mathcal{K}^t|^2  p^t}\mathbf{I}_d)$ are derived in the supplementary.

\noindent \textbf{Auxiliary terms of  Lemma~\ref{full_participation_theorem_BH} :} 
The proof of Lemma~\ref{full_participation_theorem_BH} is deferred to the supplementary.
% \begin{tiny}
    \begin{small}    
\begin{align}
    &\beta=\rho_2+\tfrac{C_3}{K\sqrt{\tau}}, \quad \rho_2=\frac{H\gamma^t}{\lambda}+\frac{LH(1+\gamma^t)}{\bar\mu}+ \frac{LBH(1+\gamma^t)^2}{\bar\mu^2},\nonumber\\
    &C_3=\frac{H(\gamma^t L +\lambda)}{\lambda}(\frac{L}{\bar \mu}+1)+\frac{2LBH(1+\gamma^t)(\gamma^t L+\lambda)}{\bar\mu^2},\nonumber\\
    &\Gamma= \rho_3-\tfrac{C_4}{K^2\tau} +\tfrac{ C_5}{K\sqrt{\tau}}, \quad \rho_3=\frac{LH^2(1+\gamma^t)^2}{2\bar\mu^2},\nonumber\\
    &C_4=\frac{LH^2(\gamma^t L+\lambda)}{2\bar\mu},\quad C_5=\frac{LH^2(1+\gamma^t)(\gamma^t L+\lambda)}{\bar\mu^2}.\nonumber
\end{align}
 \end{small}
% \end{tiny}
\noindent \textbf{Auxiliary terms of Theorem~\ref{partial_participation_theorem_BH}:} The proof of Theorem~\ref{partial_participation_theorem_BH} is deferred to the supplementary. 
 \begin{small} 
\begin{align}   
\hat\beta&=\hat\rho_2  +\frac{\hat C_3}{\hat K^2 \tau} + \frac{\hat C_4}{\hat K\sqrt{\tau}},\quad \hat\Gamma=\hat\rho_3 +\frac{\hat C_5}{\hat K^2 \tau} + \frac{\hat C_6}{\hat K\sqrt{\tau}},\nonumber\\
\hat\rho_2&= \frac{H\gamma^t}{\lambda}+\frac{LBH(1+\gamma^t)^2}{\bar\mu^2}+ \frac{9LBH(1+\gamma^t)^2}{\sqrt{\hat K}\bar\mu}, \nonumber\\
\hat C_3&=\frac{3LBH(\gamma^t L+\lambda)}{\bar\mu}+6LBH,\hat C_5=\frac{3LH^2(\gamma^t L+\lambda)}{4\bar\mu}+4LH^2,  \nonumber\\
\hat C_4&=\frac{LBH(1+\gamma^t)}{\bar\mu}(3+\frac{2(\gamma^t L+ \lambda)}{\bar \mu})+2H+ \frac{H\gamma^t L}{\lambda},\nonumber\\
\hat\rho_3&=\frac{LH^2(1+\gamma^t)^2}{2\bar\mu^2},\quad  \hat C_6=\frac{LH^2(1+\gamma^t)}{\bar\mu}(\frac{\gamma^t L+\lambda}{\bar \mu}+2). \nonumber
\end{align}
\end{small} 
\textbf{Auxiliary terms of Corollary~\ref{fading_theorem_BH}:} The proof of Corollary~\ref{fading_theorem_BH} is deferred to the supplementary. 
\begin{small} 
\begin{align}   
\bar\beta&=\bar\rho_2 + \frac{\bar C_3}{\hat K \sqrt{\tau}} +\frac{\bar C_4}{\hat r^2 \hat K^2 \tau} +\frac{ \bar C_5}{\hat r \hat K\sqrt{\tau}},\nonumber\\
\bar\Gamma&=\bar\rho_3 -\frac{\bar C_6}{\hat K^2 \tau} +\frac{\bar C_7}{\hat K \sqrt{\tau}} +\frac{\bar C_8}{\hat r^2 \hat K^2 \tau} + \frac{\bar C_9}{\hat r \hat K\sqrt{\tau}},\nonumber\\
\bar\rho_2&= \frac{H\gamma^t}{\lambda}+\frac{LBH(1+\gamma^t)^2}{\bar\mu^2}+ \frac{9LBH(1+\gamma^t)^2}{\sqrt{\hat K}\bar\mu}, \nonumber\\
\bar C_3&= H+ \frac{H\gamma^t L}{\lambda}+\frac{2LBH(1+\gamma^t)(\gamma^t L+\lambda)}{\bar\mu^2}, \nonumber\\
\bar C_4&=\frac{3LBH(\gamma^t L+\lambda)}{\bar\mu}+6LBH,\bar C_5=\frac{3LBH(1+\gamma^t)}{\bar\mu}+H, \nonumber\\
\bar\rho_3&=\hat\rho_3, \bar C_6=\frac{LH^2(\gamma^t L+\lambda)}{2\bar\mu},\bar C_7=\frac{LH^2(1+\gamma^t)(\gamma^t L+\lambda)}{\bar\mu^2}, \nonumber\\
\bar C_8&=\frac{2LH^2(\gamma^t L+\lambda)}{\bar\mu}+4LH^2, \bar C_9=\frac{2LH^2(1+\gamma^t)}{\bar\mu}. \nonumber
\end{align}
\end{small}
\textbf{Proof Sketch of Lemma~\ref{full_participation_theorem} , Theorem~\ref{partial_participation_theorem} and Corollary~\ref{fading_theorem}:} By substituting $(B,0)$-LGD in place of $(B,H)$-LGD in Lemma~\ref{full_participation_theorem_BH}, Theorem~\ref{partial_participation_theorem_BH} and Corollary~\ref{fading_theorem_BH}, we obtain Lemma~\ref{full_participation_theorem}, Theorem~\ref{full_participation_theorem} and Corollary~\ref{fading_theorem}, respectively.
The complete proofs of all lemmas, theorems, and corollaries are provided in the supplementary material.
\end{appendix}
\bibliographystyle{IEEEbib}
\bibliography{references}

\begin{thebibliography}{10}

\bibitem{6GIOT}
Dinh~C. Nguyen, Ming Ding, Pubudu~N. Pathirana, Aruna Seneviratne, Jun Li, Dusit Niyato, Octavia Dobre, and H.~Vincent Poor,
\newblock ``6g internet of things: A comprehensive survey,''
\newblock {\em IEEE Internet of Things Journal}, vol. 9, no. 1, pp. 359--383, 2022.

\bibitem{yang2022federated}
Zhaohui Yang, Mingzhe Chen, Kai-Kit Wong, H~Vincent Poor, and Shuguang Cui,
\newblock ``Federated learning for 6g: Applications, challenges, and opportunities,''
\newblock {\em Engineering}, vol. 8, pp. 33--41, 2022.

\bibitem{gafni2022federated}
Tomer Gafni, Nir Shlezinger, Kobi Cohen, Yonina~C Eldar, and H~Vincent Poor,
\newblock ``Federated learning: A signal processing perspective,''
\newblock {\em IEEE Signal Processing Magazine}, vol. 39, no. 3, pp. 14--41, 2022.

\bibitem{10278452}
Vinay~Chakravarthi Gogineni, Ashkan Moradi, Naveen K.~D. Venkategowda, and Stefan Werner,
\newblock ``Communication-efficient and privacy-aware distributed learning,''
\newblock {\em IEEE Transactions on Signal and Information Processing over Networks}, vol. 9, pp. 705--720, 2023.

\bibitem{201564}
Kevin Hsieh, Aaron Harlap, Nandita Vijaykumar, Dimitris Konomis, Gregory~R. Ganger, Phillip~B. Gibbons, and Onur Mutlu,
\newblock ``Gaia: {Geo-Distributed} machine learning approaching {LAN} speeds,''
\newblock in {\em NSDI 17}, Boston, MA, 2017, pp. 629--647, USENIX Association.

\bibitem{9084352}
Tian Li, Anit~Kumar Sahu, Ameet Talwalkar, and Virginia Smith,
\newblock ``Federated learning: Challenges, methods, and future directions,''
\newblock {\em IEEE Signal Processing Magazine}, vol. 37, no. 3, pp. 50--60, 2020.

\bibitem{mcmahan2017communication}
Brendan McMahan, Eider Moore, Daniel Ramage, Seth Hampson, and Blaise~Aguera y~Arcas,
\newblock ``Communication-efficient learning of deep networks from decentralized data,''
\newblock in {\em Artificial intelligence and statistics}. PMLR, 2017, pp. 1273--1282.

\bibitem{kairouz2021advances}
Peter Kairouz, H~Brendan McMahan, Brendan Avent, Aur{\'e}lien Bellet, Mehdi Bennis, Arjun~Nitin Bhagoji, Kallista Bonawitz, Zachary Charles, Graham Cormode, Rachel Cummings, et~al.,
\newblock ``Advances and open problems in federated learning,''
\newblock {\em Foundations and Trends{\textregistered} in Machine Learning}, vol. 14, no. 1--2, pp. 1--210, 2021.

\bibitem{yang2020federated}
Kai Yang, Tao Jiang, Yuanming Shi, and Zhi Ding,
\newblock ``Federated learning via over-the-air computation,''
\newblock {\em IEEE transactions on wireless communications}, vol. 19, no. 3, pp. 2022--2035, 2020.

\bibitem{sery2021over}
Tomer Sery, Nir Shlezinger, Kobi Cohen, and Yonina~C Eldar,
\newblock ``Over-the-air federated learning from heterogeneous data,''
\newblock {\em IEEE Transactions on Signal Processing}, vol. 69, pp. 3796--3811, 2021.

\bibitem{OTAEarlyAccess}
Navneet Agrawal, Renato~L.G. Cavalcante, Masahiro Yukawa, and Sławomir Sta´nczak,
\newblock ``Distributed convex optimization “over-the-air” in dynamic environments,''
\newblock {\em IEEE Trans. on Signal and Information Processing over Networks}, pp. 1--16, 2024.

\bibitem{zhu2018broadband}
Guangxu Zhu, Yong Wang, and Kaibin Huang,
\newblock ``Broadband analog aggregation for low-latency federated edge learning (extended version),''
\newblock {\em arXiv preprint arXiv:1812.11494}, 2018.

\bibitem{amiri2020machine}
Mohammad~Mohammadi Amiri and Deniz G{\"u}nd{\"u}z,
\newblock ``Machine learning at the wireless edge: Distributed stochastic gradient descent over-the-air,''
\newblock {\em IEEE Transactions on Signal Processing}, vol. 68, pp. 2155--2169, 2020.

\bibitem{mitra2021linear}
Aritra Mitra, Rayana Jaafar, George~J Pappas, and Hamed Hassani,
\newblock ``Linear convergence in federated learning: Tackling client heterogeneity and sparse gradients,''
\newblock {\em Advances in Neural Information Processing Systems}, vol. 34, pp. 14606--14619, 2021.

\bibitem{khan2021federated}
Latif~U Khan, Walid Saad, Zhu Han, Ekram Hossain, and Choong~Seon Hong,
\newblock ``Federated learning for internet of things: Recent advances, taxonomy, and open challenges,''
\newblock {\em IEEE Communications Surveys \& Tutorials}, vol. 23, no. 3, pp. 1759--1799, 2021.

\bibitem{li2020federated}
Tian Li, Anit~Kumar Sahu, Manzil Zaheer, Maziar Sanjabi, Ameet Talwalkar, and Virginia Smith,
\newblock ``Federated optimization in heterogeneous networks,''
\newblock {\em Proceedings of Machine learning and systems}, vol. 2, pp. 429--450, 2020.

\bibitem{10373878}
Iifan Tyou, Tomoya Murata, Takumi Fukami, Yuki Takezawa, and Kenta Niwa,
\newblock ``A localized primal-dual method for centralized/decentralized federated learning robust to data heterogeneity,''
\newblock {\em IEEE Trans. on Signal and Information Processing over Networks}, vol. 10, pp. 94--107, 2024.

\bibitem{ozfatura2020straggler}
Emre Ozfatura, Sennur Ulukus, and Deniz G{\"u}nd{\"u}z,
\newblock ``Straggler-aware distributed learning: Communication--computation latency trade-off,''
\newblock {\em Entropy}, vol. 22, no. 5, pp. 544, 2020.

\bibitem{ang2020robust}
Fan Ang, Li~Chen, Nan Zhao, Yunfei Chen, Weidong Wang, and F~Richard Yu,
\newblock ``Robust federated learning with noisy communication,''
\newblock {\em IEEE Transactions on Communications}, vol. 68, no. 6, pp. 3452--3464, 2020.

\bibitem{wang2020tackling}
Jianyu Wang, Qinghua Liu, Hao Liang, Gauri Joshi, and H~Vincent Poor,
\newblock ``Tackling the objective inconsistency problem in heterogeneous federated optimization,''
\newblock {\em Advances in neural information processing systems}, vol. 33, pp. 7611--7623, 2020.

\bibitem{yuan2022convergence_newfedprox}
Xiaotong Yuan and Ping Li,
\newblock ``On convergence of fedprox: Local dissimilarity invariant bounds, non-smoothness and beyond,''
\newblock {\em NeuRIPS}, vol. 35, pp. 10752--10765, 2022.

\bibitem{karimireddy2020scaffold}
Sai~P. Karimireddy, Satyen Kale, Mehryar Mohri, Sashank Reddi, Sebastian Stich, and Ananda~T. Suresh,
\newblock ``Scaffold: Stochastic controlled averaging for federated learning,''
\newblock PMLR, 2020, pp. 5132--5143.

\bibitem{pmlr-v235-makkuva24a_laser}
Ashok~V. Makkuva, Marco Bondaschi, Thijs Vogels, Martin Jaggi, Hyeji Kim, and Michael Gastpar,
\newblock ``{LASER}: Linear compression in wireless distributed optimization,''
\newblock 2024, vol. 235, pp. 34383--34416, PMLR.

\bibitem{FedAVG}
Brendan M., Eider M., Daniel R., Seth H., and Blaise~A. y~A.,
\newblock ``Communication-efficient learning of deep networks from decentralized data,''
\newblock in {\em Artificial intelligence and statistics}. PMLR, 2017.

\bibitem{FedPD}
Xinwei Zhang, Mingyi Hong, Sairaj Dhople, Wotao Yin, and Yang Liu,
\newblock ``Fedpd: A federated learning framework with adaptivity to non-iid data,''
\newblock {\em IEEE Transactions on Signal Processing}, vol. 69, pp. 6055--6070, 2021.

\bibitem{Sery2020Over-the-AirData}
Tomer Sery, Nir Shlezinger, Kobi Cohen, and Yonina~C. Eldar,
\newblock ``{Over-the-Air Federated Learning from Heterogeneous Data},''
\newblock 9 2020.

\bibitem{xiao2024over}
Bingnan Xiao, Xichen Yu, Wei Ni, Xin Wang, and H~Vincent Poor,
\newblock ``Over-the-air federated learning: Status quo, open challenges, and future directions,''
\newblock {\em Fundamental Research}, 2024.

\bibitem{OTA_demo}
Alphan Şahin,
\newblock ``A demonstration of over-the-air computation for federated edge learning,''
\newblock in {\em 2022 IEEE Globecom Workshops (GC Wkshps)}, 2022, pp. 1821--1827.

\bibitem{6796505}
Chris~M. Bishop,
\newblock ``Training with noise is equivalent to tikhonov regularization,''
\newblock {\em Neural Computation}, vol. 7, no. 1, pp. 108--116, 1995.

\bibitem{elbir2021federated}
Ahmet~M Elbir and Sinem Coleri,
\newblock ``Federated learning for channel estimation in conventional and ris-assisted massive mimo,''
\newblock {\em IEEE trans. on wireless communications}, vol. 21, no. 6, pp. 4255--4268, 2021.

\bibitem{10890519}
Zubair Shaban, Nazreen Shah, and Ranjitha Prasad,
\newblock ``Robust over-the-air federated learning in heterogeneous networks,''
\newblock in {\em ICASSP 2025 - 2025 IEEE International Conference on Acoustics, Speech and Signal Processing (ICASSP)}, 2025, pp. 1--5.

\bibitem{li2022convergence}
Xiaoxiao Li, Zhao Song, Runzhou Tao, and Guangyi Zhang,
\newblock ``A convergence theory for federated average: Beyond smoothness,''
\newblock in {\em 2022 IEEE International Conference on Big Data (Big Data)}. IEEE, 2022, pp. 1292--1297.

\bibitem{caldas2019leafbenchmarkfederatedsettings}
Sebastian Caldas, Sai Meher~Karthik Duddu, Peter Wu, Tian Li, Jakub Konečný, H.~Brendan McMahan, Virginia Smith, and Ameet Talwalkar,
\newblock ``Leaf: A benchmark for federated settings,'' 2019.

\bibitem{krizhevsky2009learningcifar}
Alex Krizhevsky, Geoffrey Hinton, et~al.,
\newblock ``Learning multiple layers of features from tiny images,''
\newblock {\em University of Toronto}, 2009.

\bibitem{li2019convergence}
Xiang Li, Kaixuan Huang, Wenhao Yang, Shusen Wang, and Zhihua Zhang,
\newblock ``On the convergence of fedavg on non-iid data,''
\newblock {\em arXiv preprint arXiv:1907.02189}, 2019.

\end{thebibliography}

\newpage
\begin{figure*}
    \begin{center}
    \fontsize{16pt}{10pt}
    {\textbf{Supplementary Material}}
\end{center}
\end{figure*}

This supplementary material provides the proof of various results discussed in the manuscript titled ``Noise Resilient Over-The-Air Federated Learning In Heterogeneous Wireless Networks". For ease of reference, the enumeration of all the theorems, lemmas and equations is consistent with the manuscript. 
% \subsection{Notations}
% Boldface small letters denote vectors, and boldface capital letters denote matrices. $\mathbf{I}_M$ denotes an $M \times M$ identity matrix. The $\ell_2$-norm of a vector $\vecx$ is denoted as $||\vecx||$. $\mathcal{P}$ represents sets and $|\mathcal{P}|$ represents size of the set. We use $\Ex_k[.]$ to represent the expectation taken with respect to all client indices $k=0,..., K$ and $\Ex_{\mathcal{P}}[.]$ denotes the expectation with respect to the chosen set of clients in $\mathcal{P}$. $\nabla f(.)$ denotes gradient of the function $f(.)$. Further, $[M]$ represents the set $\{1,\hdots,M\}$ and $\matA \succcurlyeq \matB$ denotes that $\matA - \matB$ is positive definite.

\subsection{Notations}

\begin{table}[h]
    \centering
    \renewcommand{\arraystretch}{1.3} % Adjust row height
    \begin{tabular}{|c|l|}
        \hline
        \textbf{Notation} & \textbf{Description} \\
        \hline
        $\mathbf{x}$ & Boldface small letters denote vectors \\
        $\mathbf{X}$ & Boldface capital letters denote matrices \\
        $\mathbf{I}_M$ & $M \times M$ identity matrix \\
        $||\bm{x}||$ & $\ell_2$-norm of vector $\bm{x}$ \\
        $\mathcal{P}$ & A set \\
        $|\mathcal{P}|$ & Size of the set $\mathcal{P}$ \\
        $\Ex[\cdot]$ & Expectation with respect all randomnesses including noise\\
        $\mathbb{E}_k[\cdot]$ & Expectation over all client indices $k=0,\dots,K$ \\
        $\mathbb{E}_{\mathcal{P}}[\cdot]$ & Expectation over the chosen set of clients in $\mathcal{P}$ \\
        $\nabla f(\cdot)$ & Gradient of function $f(\cdot)$ \\
        $[M]$ & Set $\{1,\hdots,M\}$ \\
        $\mathbf{A} \succcurlyeq \mathbf{B}$ & $\mathbf{A} - \mathbf{B}$ is positive definite \\
        \hline
    \end{tabular}
    \caption{List of notations used in the paper}
    \label{tab:notations}
\end{table}

\section{Additional Experiments}

\subsection{Ablation Study}
In this subsection, we study the effect of varying numbers of clients and the pre-coding factor on the proposed approach.

\noindent \textbf{Effect of varying clients:}

\noindent In this section, we observe the behavior of NoROTA-FL for varying clients on CIFAR10 and FEMNIST datasets. We provide the convergence trend for $15,30$ and $45$ clients for $\pi=0.5, \lambda=0.4$ and SNR= $0$ dB. As discussed in the convergence analysis, from the accuracy plot on the CIFAR10 dataset as in Fig.~\ref{fig:clients_cifarfeminist}, we observe that increasing the number of clients can mitigate the impact of noise in our proposed framework. In contrast, the FEMNIST dataset is less affected by noise, and thus, the accuracy performance on the FEMNIST dataset clearly illustrates the increased communication overhead and higher variance in updates that accompany an increase in the number of clients, ultimately resulting in slower convergence.

\begin{figure}[htbp]
    \centering
      \includegraphics[scale=0.45]{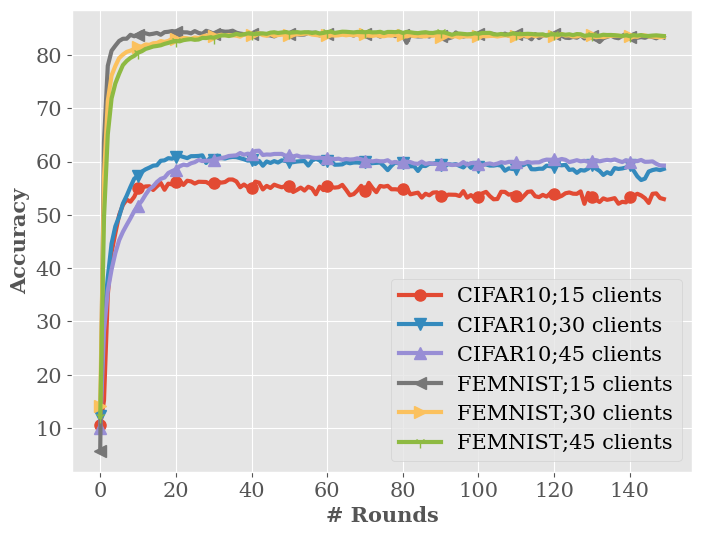}
    \caption{Performance of NoROTA-FL under varying clients. }
    \label{fig:clients_cifarfeminist}
    \end{figure}

\noindent \textbf{Effect of precoding and proximal term:}

\noindent In this section, we analyze the behavior of the precoding factor $p^t$ and the proximal term in the proposed approach as compared to the baselines. From Fig.~\ref{fig:precoding_proxterm}, we observe that both NoROTA-FL and COTAF consistently adapt to varying noise levels by appropriately adjusting the precoding factor, thereby effectively mitigating noise. However, COTAF exhibits larger parameter variations, leading to poorer performance at $0$dB SNR, as shown in Fig.~\ref{fig:snr_plot1} (third from the left). Furthermore, COTAF fails to adjust for lower SNRs, resulting in a near-zero precoding value at an SNR of $-10$dB, which leads to severely degraded convergence, as seen in Fig.~\ref{fig:snr_plot1} (third from the left). In contrast, NoROTA-FL demonstrates more controlled parameter adjustments, resulting in higher accuracies.

Similarly, we observe that the behavior of the proximal term in Fig.~\ref{fig:precoding_proxterm} is similar in NoROTA-FL and FedProx. This indicates that NoROTA-FL consistently adapts to varying noisy and heterogeneous environments. In comparison, NoisyProx either fails to adjust, maintaining an almost constant value for an SNR of $5$dB, or fails to learn and hence is stuck at a near-zero value for an SNR of $0$dB.
\begin{figure}[htbp]
    \centering
      \includegraphics[scale=0.28]{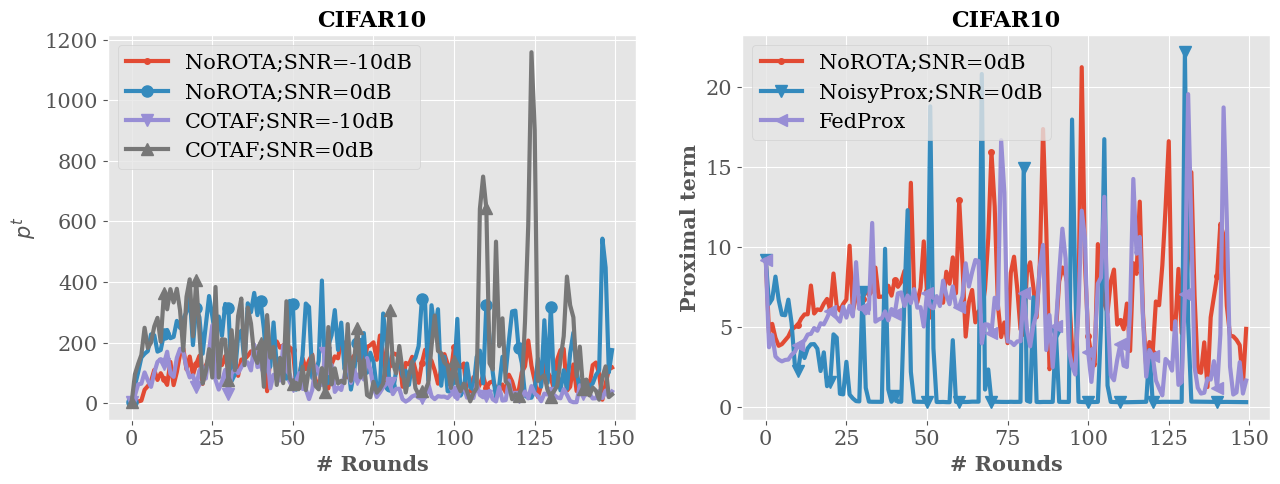}
    \caption{ Precoding and Proximal term behavior for CIFAR10}
    \label{fig:precoding_proxterm}
    \end{figure}

% \newpage

%%%BH proof added here
% \newpage
\section{Detailed Proofs of Key Lemmas and Theorems under $(B,H)$-LGD assumption}
\begin{lemma}
Let there exist ${\bar{L}}>0$ such that $\nabla^2{f}_k(\bmtheta)\succcurlyeq-{\bar{L}}\mathbf{I}$, and let  $\bar{\mu}\coloneqq \lambda - \bar{L} >0 $. Then for any $\bmtheta, {{\bmtheta}^t} \in \mathbb{R}^d$, $h_k(\bmtheta,\bmtheta^t)$, as in Definition~\ref{defn1}, is $\bar{\mu}$-strongly convex for all $t$.
\label{lem:proxconvexity}
\end{lemma}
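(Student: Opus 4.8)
The plan is to show that under the hypothesis $\nabla^2 f_k(\bm{\theta}) \succcurlyeq -\bar{L}\mathbf{I}$, the Hessian of the proximal surrogate $h_k(\bm{\theta};\bm{\theta}^t) = f_k(\bm{\theta}) + \frac{\lambda}{2}\norm{\bm{\theta}-\bm{\theta}^t}^2$ is uniformly bounded below by $\bar{\mu}\mathbf{I}$, which is the standard second-order characterization of $\bar{\mu}$-strong convexity. First I would compute the Hessian of $h_k$: since the quadratic penalty $\frac{\lambda}{2}\norm{\bm{\theta}-\bm{\theta}^t}^2$ has constant Hessian $\lambda \mathbf{I}$ (the shift by $\bm{\theta}^t$ does not affect the second derivative), we get $\nabla^2 h_k(\bm{\theta};\bm{\theta}^t) = \nabla^2 f_k(\bm{\theta}) + \lambda \mathbf{I}$ for every $\bm{\theta}$.

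Next I would combine this with the assumed curvature lower bound. Adding $\lambda\mathbf{I}$ to both sides of $\nabla^2 f_k(\bm{\theta}) \succcurlyeq -\bar{L}\mathbf{I}$ gives $\nabla^2 h_k(\bm{\theta};\bm{\theta}^t) \succcurlyeq (\lambda - \bar{L})\mathbf{I} = \bar{\mu}\mathbf{I}$, and by hypothesis $\bar{\mu} = \lambda - \bar{L} > 0$. Since this holds for all $\bm{\theta} \in \mathbb{R}^d$ and for every choice of the reference point $\bm{\theta}^t$, the function $h_k(\cdot;\bm{\theta}^t)$ has Hessian uniformly bounded below by $\bar{\mu}\mathbf{I} \succ 0$ for all $t$. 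Invoking the equivalence between $\bar{\mu}$-strong convexity and $\nabla^2 h_k \succcurlyeq \bar{\mu}\mathbf{I}$ for twice-differentiable functions then yields the claim.

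There is essentially no serious obstacle here; the only point requiring a little care is the regularity needed to pass from the Hessian bound to strong convexity. If one does not want to assume $f_k$ is twice differentiable everywhere, the argument can instead be carried out directly: for any $\bm{\theta}_1, \bm{\theta}_2$, write $\phi(s) = f_k(\bm{\theta}_1 + s(\bm{\theta}_2 - \bm{\theta}_1))$ and use $\phi''(s) \geq -\bar{L}\norm{\bm{\theta}_2-\bm{\theta}_1}^2$ via the integral form of Taylor's theorem to recover $f_k(\bm{\theta}_2) \geq f_k(\bm{\theta}_1) + \langle \nabla f_k(\bm{\theta}_1), \bm{\theta}_2 - \bm{\theta}_1\rangle - \frac{\bar{L}}{2}\norm{\bm{\theta}_2-\bm{\theta}_1}^2$, then add the exact identity $\frac{\lambda}{2}\norm{\bm{\theta}_2-\bm{\theta}^t}^2 = \frac{\lambda}{2}\norm{\bm{\theta}_1-\bm{\theta}^t}^2 + \lambda\langle \bm{\theta}_1 - \bm{\theta}^t, \bm{\theta}_2 - \bm{\theta}_1\rangle + \frac{\lambda}{2}\norm{\bm{\theta}_2-\bm{\theta}_1}^2$ and collect terms to obtain the $\bar{\mu}$-strong convexity inequality for $h_k$ directly. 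I would present the Hessian version as the main line since it is cleanest, noting the first-order derivation as the fallback when $f_k$ is merely $\bar{L}$-weakly convex without twice-differentiability.
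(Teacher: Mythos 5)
Your proof is correct and follows exactly the paper's argument: compute $\nabla^2 h_k(\bm{\theta};\bm{\theta}^t) = \nabla^2 f_k(\bm{\theta}) + \lambda\mathbf{I}$, add $\lambda\mathbf{I}$ to the assumed lower bound to get $\nabla^2 h_k \succcurlyeq \bar{\mu}\mathbf{I}$, and invoke the second-order characterization of strong convexity. The first-order fallback you sketch for non-twice-differentiable $f_k$ is a nice extra but not needed, since the paper works with the Hessian bound as its hypothesis.
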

\noindent \textbf{Proof of Lemma~\ref{lem:proxconvexity}:} Adding $\lambda\mathbf{I}_d$ on the LHS of $\nabla^2{f}_k(\bmtheta) \geq -{\bar{L}}\mathbf{I}_d$, and using the definition of $\bar{\mu}$, we obtain 
\begin{align}
\nabla^2{f}_k(\bmtheta)+\lambda\mathbf{I}_d&\geq\bar{\mu}\mathbf{I}_d.
\end{align}
Using the expression for $h_k(\bmtheta,\bmtheta^t)$ as given in Definition~\ref{defn1}, we have $\nabla^2 h_k(\bmtheta,\bmtheta^t)=\nabla^2 {f}_k(\bmtheta)+\lambda\mathbf{I}_d$. Substituting in the above, we have $\nabla^2 h_k(\bmtheta,\bmtheta^t)\geq\bar{\mu}\mathbf{I}_d$, which implies that  $h_k(\bmtheta,\bmtheta^t)$ is $\bar{\mu}$-strongly convex for all $t$.\\

\noindent \textbf{Proof of Lemma~\ref{lem:boundingthegradients}:}   Using the reverse triangular inequality for two vectors $\bmtheta^t$ and $\tilde{\bmtheta}^t$, and Lipschitz smoothness, we have:
\begin{align}
    \norm{\nabla{f({\bmtheta}^t)}}-\norm{\nabla{f(\tilde{\bmtheta}^t)}} &\leq\norm{\nabla{f({\bmtheta}^t)}-\nabla{f(\tilde{\bmtheta}^t)}}\nonumber\\
    &\leq L\norm{{\bmtheta}^t-\tilde{\bmtheta}^t}.
\end{align}
Since $\vecw^t = {\tilde\bmtheta}^t-{\bmtheta}^t$, we have $\norm{\nabla{f({\bmtheta}^t)}} \leq  \norm{\nabla{f(\tilde{\bmtheta}^t)}} + L\norm{\vecw^t}$. 
The same result is obtained (as an approximation) by using Taylor series expansion as follows:
\begin{align}
f({\bmtheta^t}) = f(\tilde{\bmtheta}^t-\vecw^t) &\approx f({\tilde{\bmtheta}}^t)-\vecw^t \nabla{f(\tilde{\bmtheta}^t)}.
\end{align}
Differentiating both sides of the above equation and considering the norm, we have 
\begin{align}
    \norm{\nabla{f({\bmtheta}^t)}} &\lessapprox \norm{\nabla{f({\tilde{\bmtheta}}^t)}}+ \norm{\vecw^t}\norm{\nabla^2 f(\tilde{\bmtheta}^t)} \nonumber\\
    &\leq \norm{\nabla{f({\tilde{\bmtheta}}^t)}}+ \norm{\vecw^t}L,
\end{align}
where the last step holds by the spectral norm property, i,e., $\norm{\nabla^2 f(\tilde{\bmtheta}^t)}\leq L$ if $f$ satisfies $\norm*{\nabla f(x)-\nabla f(y)} \leq L \norm*{x-y} $.\\

\noindent \textbf{Proof of Lemma~\ref{lem:norm_gradient_h}:} The local objective function, as given in P2 is defined as follows,
\begin{align}
h_k(\bmtheta_k^{t+1};{\tilde{\bmtheta}}^t) = f_k(\bmtheta_k^{t+1}) + \frac{\lambda}{2}\norm{\bmtheta_k^{t+1} - {\tilde{\bmtheta}}^t}^2.
\label{eq:OptiProx}
\end{align}
where $\tilde{\bmtheta}^t$ is the available aggregated global model from the $t$-th aggregation epoch. From (5) in the main manuscript, we have the noisy FedAvg decoding rule given as ${\tilde\bmtheta}^t = \frac{1}{K}\tsum_{k  = 1}^K \bmtheta_k^t + {\vecw}^t$. Differentiating \eqref{eq:OptiProx} with respect to $\bmtheta_k^{t+1}$, we obtain 
\begin{align}
\nabla h_k(\bmtheta_k^{t+1};\tilde{\bmtheta}^t) = \nabla{f_k(\bmtheta_k^{t+1})} + \lambda\left[\bmtheta_k^{t+1}-\tilde{\bmtheta^{t}}\right].
\label{eq:diff_prox}
\end{align}
We introduce the noiseless parameter update as  ${\bmtheta}^t = \frac{1}{K}\tsum_{k=1}^{K}\bmtheta_k^t$, which leads to ${\tilde\bmtheta}^t = \bmtheta^t + {\vecw}^t$. Considering $\ell_2$ norm of both the sides of the above expression, we have
\begin{align}
    \norm{\nabla{h_k(\bmtheta_k^{t+1};\tilde{\bmtheta}^t)}} &= \norm{\nabla{f_k(\bmtheta_k^{t+1})} + \lambda\left(\bmtheta_k^{t+1}-{\bmtheta}^t\right) - \lambda {\vecw}^t}.
    \end{align}
Applying triangle inequality to the above, we obtain
    \begin{align}
\norm{\nabla{h_k(\bmtheta_k^{t+1};\tilde{\bmtheta}^t)}} &\leq \norm{\nabla{f_k(\bmtheta_k^{t+1})} + \lambda\left(\bmtheta_k^{t+1}-{\bmtheta}^t\right)} + \lambda\norm{{\vecw}^t},\nonumber\\
    \label{eq:prox_tri_inequality}
    &\leq \norm{\nabla{h_k(\bmtheta_k^{t+1};{\bmtheta}^t)}} + \lambda\norm{{\vecw}^t}.
\end{align}
Using the notion of inexactness as mentioned in Definition~\ref{defn1}, we have $\norm{\nabla{h_k(\bmtheta_k^{t+1};{\bmtheta}^t)}} \leq \gamma^t\norm{\nabla{f_k({\bmtheta}^t)}}$, the expression in \eqref{eq:prox_tri_inequality} can be rewritten as
\begin{align}
\norm{\nabla{h_k(\bmtheta_k^{t+1};\tilde{\bmtheta}^t)}} &\leq \gamma^t\norm{\nabla{f_k({\bmtheta}^t)}} + \lambda\norm{{\vecw}^t}
  \label{norm_nabla_hk}
\end{align}
Finally, using Lemma.~\ref{lem:boundingthegradients}, we have
\begin{align}
\norm{\nabla{h_k(\bmtheta_k^{t+1};\tilde{\bmtheta}^t)}} & \leq \gamma^t\norm{\nabla{f_k({\tilde{\bmtheta}}^t)}} + \gamma^t L \norm{{\vecw}^t}+\lambda\norm{{\vecw}^t}\nonumber\\
  &\leq \gamma^t\norm{\nabla{f_k(\tilde{\bmtheta}^t)}} + (\gamma^t L+\lambda) \norm{{\vecw}^t}
\end{align}
 
\textbf{Proof of Lemma~\ref{lem:precodingfactor_1storder} and  Lemma~\ref{lem:precodingfactor_zerothorder}}
This bound has been established for SGD-based updates in Lemma~A.2 in \cite{sery2021over}. Here, we assume that this bound holds for local solvers that may not use SGD.
\begin{enumerate}
    \item \emph{First-order inexactness:} From the definition of $p^t$ in \eqref{eq:precoder1}, we have $\frac{1}{p^t}\leq \frac{1}{P}\sum_{k=1}^K q_k\Ex\norm{\bmtheta_k^t-\tilde\bmtheta^{t-1}}^2$. Using the differentiation of $h_k(\cdot,\cdot)$, we have
    \begin{small}
\begin{align}
    \frac{1}{p^t}&\leq\frac{\Ex_k\norm{\nabla h_k(\bmtheta_k^t;\tilde\bmtheta^{t-1})-\nabla f_k(\bmtheta_k^t)}^2}{\lambda^2 P}\nonumber\\
    &\underset{\text{(i)}}{\leq}\frac{2\Ex_k[\norm{\nabla h_k(\bmtheta_k^t;\tilde\bmtheta^{t-1})}^2+\norm{\nabla f_k(\bmtheta_k^t)}^2]}{\lambda^2 P}\nonumber\\
    &\underset{\text{(ii)}}{\leq}\frac{2\Ex_k[\{\gamma^t\norm{\nabla f_k(\tilde\bmtheta^{t-1})}+(\gamma^t L+\lambda)\norm{\vecw^t}\}^2+\norm{\nabla f_k(\bmtheta_k^t)}^2]}{\lambda^2 P}\nonumber\\
    &\underset{\text{(iii)}}{\leq}\frac{8K^2(B^2 \norm{\nabla F(\tilde\bmtheta)}^2 +H^2)}{\lambda^2 P K^2-(\gamma^t L+\lambda)^2d \sigma^2}\nonumber\\
    &\underset{\text{(iv)}}{\leq}\frac{(B^2 \norm{\nabla F(\tilde\bmtheta)}^2 +H^2)}{P},
\end{align}
\end{small}
where the $\text{(ii)}$ is obtained using the Lemma~\ref{lem:norm_gradient_h} and $\text{(iii)}$ using Young's inequality and the Assumption~1. Finally, we get $\text{(iv)}$ for $\lambda>\frac{\gamma L}{K\sqrt{\tau}}$, where $\tau=\frac{P}{d\sigma^2}$.
\item \emph{Zeroth-order inexactness:} From the definition of $p^t$ in \eqref{eq:precoder1} and using $\norm*{\bmtheta_k^t-\tilde \bmtheta^{t-1}}\leq \nu(2L\zeta^t+G)+\norm*{\vecw^{t}}$, we have 
\begin{align}
    \frac{1}{p^t}&\leq\frac{2K^2(2L\zeta^t+G)^2}{\lambda^2 (PK^2-2d\sigma^2)}.
\end{align}
For $K^2 \tau>2$, we have the result. 
\end{enumerate}

Since in the case of full participation, $\vecw^t \sim \mathcal{N} (0,\frac{\sigma^2}{K^2 p^t}\mathbf{I}_d)$, therefore 
\begin{align}
    \Ex\left[\norm{\vecw^t}^2\right]=\frac{d\sigma^2}{K^2p^t}.
\end{align}
Then, using Lemma~\ref{lem:precodingfactor_1storder}, we have
\begin{align}
    \Ex\left[\norm{\vecw^t}^2\right]=\frac{ d\sigma^2}{K^2p^t}\leq \frac{ d\sigma^2}{K^2P}(B^2\norm{\nabla F(\tilde\bmtheta^t)}^2 +H^2).
\end{align}
Using Jensen's inequality, we can rewrite the above as
\begin{align}
   \Ex\left[\norm{\vecw^t}\right]\leq \sqrt{\Ex\left[\norm{\vecw^t}^2\right]}\leq \frac{\sqrt{d} \sigma }{K\sqrt{P}}(B\norm{\nabla F(\tilde\bmtheta^t)}+H).
   \label{eq:noise2normbound}
\end{align}

Furthermore, in partial participation case, $\hat\vecw^t \sim \mathcal{N} (0,\frac{\sigma^2}{\hat{K}^2 p^t}\mathbf{I}_d)$, hence 
\begin{align}
    \Ex\left[\norm{\hat\vecw^t}^2\right]=\frac{d\sigma^2}{\hat{K}^2p^t}.
\end{align}
Then using Lemma~\ref{lem:precodingfactor_1storder} and Jensen's inequality, we have
\begin{align}
   \Ex\left[\norm{\hat\vecw^t}\right]\leq \sqrt{\Ex\left[\norm{\hat\vecw^t}^2\right]}\leq \frac{\sqrt{d} \sigma }{\hat{K}\sqrt{P}}(B\norm{\nabla F(\tilde\bmtheta^t)}+H).
   \label{eq:noise2normbound_partial}
\end{align}
Similarly, for fading case, $\bar\vecw^t \sim \mathcal{N} (0,\frac{\sigma^2}{\hat r^2|\mathcal{K}^t|^2  p^t}\mathbf{I}_d)$, therefore 
\begin{align}
    \Ex\left[\norm{\bar\vecw^t}^2\right]=\frac{d\sigma^2}{\hat r^2|\mathcal{K}^t|^2  p^t}.
\end{align}
Then using Lemma~\ref{lem:precodingfactor_1storder} and Jensen's inequality, we have
\begin{align}
   \Ex\left[\norm{\bar\vecw^t}\right]\leq \sqrt{\Ex\left[\norm{\bar\vecw^t}^2\right]}\leq \frac{\sqrt{d} \sigma (B\norm{\nabla F(\tilde\bmtheta^t)}+H) }{\hat r|\mathcal{K}^t|\sqrt{P}}.
   \label{eq:noise2normbound_fading}
\end{align}

\noindent \textbf{Proof of Lemma~\ref{full_participation_theorem_BH}:} Consider the local objective function in  (15) of the main manuscript as follows,
\begin{align}
h_k(\bmtheta_k^{t+1};{\tilde{\bmtheta}}^t) = f_k(\bmtheta_k^{t+1}) + \frac{\lambda}{2}\norm{\bmtheta_k^{t+1} - {\tilde{\bmtheta}}^t}^2.
\end{align}
Denoting $\bar \bmtheta^{t+1}=\Ex_k[{\bmtheta}_k^{t+1}]$ and  differentiating the above equation and taking the expectation $\Ex_k[\cdot]$, we obtain the following:
\begin{align}
    \bar \bmtheta^{t+1} - {\bmtheta}^t = &\frac{-1}{\lambda}\Ex_k\left[\nabla{f_k(\bmtheta_k^{t+1})}\right] + \frac{1}{\lambda}\Ex_k\left[\nabla{h_k(\bmtheta_k^{t+1};\tilde{\bmtheta}^t)}\right] \nonumber\\
    &+ \vecw^t,
    \label{eq:expecthetatp1thetat}
\end{align}
where $\Ex_k[{\tilde{\bmtheta}}^t]={\bmtheta}^t+\vecw^t$.

From Lemma~\ref{lem:proxconvexity}, we know $h_k(\cdot,\cdot)$ is $\bar{\mu}$-strongly convex. Let ${\bmtheta}_k^{*,t+1}= \argmin_{\bmtheta}\nabla{h_k(\bmtheta;\bmtheta^t)}$. Using $\bar{\mu}$-strong convexity of $h_k(\cdot,\cdot)$ and \eqref{norm_nabla_hk} we obtain
\begin{align}
    \norm{{\bmtheta}^{*,t+1}_k-\bmtheta^{t+1}_k}\leq\frac{\gamma^t}{\bar{\mu}}\norm{\nabla{f_k({\bmtheta}^t)}}+\frac{\lambda}{\bar{\mu}}\norm{{\vecw}^t}.
    \label{eq:mustrong1}
\end{align}
Directly from $\bar{\mu}$-strong convexity of $h_k(\cdot)$ we have that 
\begin{align}
\norm{{\bmtheta}^{*,t+1}_k-\tilde{\bmtheta}^{t}}\leq\frac{1}{\bar{\mu}}\norm{\nabla{f_k({\tilde{\bmtheta}}^t)}}.
\label{eq:mustrong2}
\end{align}
Combining \eqref{eq:mustrong1} and \eqref{eq:mustrong2} and using triangle inequality we obtain
\begin{align}
    \norm{\bmtheta_k^{t+1}-\tilde{\bmtheta}^t} \leq \frac{\gamma^t}{\bar{\mu}}\norm{\nabla{f_k({\bmtheta}^t)}} + \frac{1}{\bar{\mu}}\norm{\nabla{f_k({\tilde{\bmtheta}}^t)}} + \frac{\lambda}{\bar{\mu}}\norm{\vecw^t}.
    \label{eq:boundnormdiffsq}
\end{align}
Substituting for $\norm{\nabla{f_k({\bmtheta}^t)}}$ from  Lemma~\ref{lem:boundingthegradients}, we obtain
\begin{align}
    \norm{\bmtheta_k^{t+1}-\tilde{\bmtheta}^t}
    &\leq \frac{1+\gamma^t}{\bar\mu}\norm{\nabla{f_k(\tilde{\bmtheta}^t)}}+ \frac{\gamma^t L+\lambda}{\bar\mu}\norm{\vecw^t}.
    \label{eq:norm_theta_k_t+1_theta_tilde_t}
\end{align}
% Using Lemma~\ref{lem4}, we have
% \begin{align}
%     \Ex\left[\norm{\vecw^t}^2\right]=\frac{d\sigma^2}{K^2p^t}\leq \frac{d\sigma^2B^2}{K^2P}\norm{\nabla F(\tilde\bmtheta^t)}^2.
% \end{align}
% Using Jensen's inequality, we can rewrite the above as
% \begin{align}
%    \Ex\left[\norm{\vecw^t}\right]\leq \sqrt{\Ex\left[\norm{\vecw^t}^2\right]}\leq \frac{\sqrt{d} \sigma B}{K\sqrt{P}}\norm{\nabla F(\tilde\bmtheta^t)}
%    \label{eq:noise2normbound}
% \end{align}
Now we bound $\norm{\bar \bmtheta^{t+1}-{\bmtheta}^t}$ from \eqref{eq:expecthetatp1thetat} as follows. 
\begin{align}
\norm{\bar \bmtheta^{t+1}-{\bmtheta}^t} & = \norm{\Ex_k[{\bmtheta}_k^{t+1}]-\Ex_k[\tilde{\bmtheta}^t] +\vecw^t} \nonumber\\
&\leq
\Ex_k\norm{\bmtheta_k^{t+1}-\tilde{\bmtheta}^t} +\norm{\vecw^t},
\label{eq:norm_thetabar_t+1_theta_t}
\end{align}
% where the last inequality is due to triangular inequality.
% Substituting the upper bound on $\norm{\bmtheta_k^{t+1}-\tilde{\bmtheta}^t}$ from  \eqref{eq:norm_theta_k_t+1_theta_tilde_t}, we obtain the following:
% \begin{align}
% &\Ex_k\norm{\bmtheta_k^{t+1}-\tilde{\bmtheta}^t}\nonumber\\
% &\leq  \left(\frac{1+\gamma^t}{\bar\mu}\right)\Ex_k[\norm{\nabla{f_k(\tilde{\bmtheta}^t)}}]+ \left(\frac{\bar \mu+\gamma^t L+\lambda}{\bar\mu}\right)\norm{\vecw^t}  \nonumber\\
% &\leq  \left(\frac{1+\gamma^t}{\bar\mu}\right)(B\norm{\nabla{F}(\tilde{\bmtheta}^t)}+H) + \left(\frac{\bar \mu+\gamma^t L+\lambda}{\bar\mu}\right)\norm{{\vecw}^t}.
% \label{eq:norm_theta_bar_t+1_theta_tilde_t}
% \end{align}
% The last inequality is due to the bounded local dissimilarity assumption, i.e.,  $~\Ex_k[\norm{\nabla{f_k}(\tilde\bmtheta)}]\leq\sqrt{~\Ex_k\norm{\nabla{f_k}(\tilde\bmtheta)}^2}\leq\norm{\nabla{F}(\tilde\bmtheta)}B+H$.
where the last inequality is due to triangular inequality.
After substituting upper bound on $\norm{\bmtheta_k^{t+1}-\tilde{\bmtheta}^t}$ from  \eqref{eq:norm_theta_k_t+1_theta_tilde_t} and using $(B,H)$-LGD, i.e.,  $~\Ex_k[\norm{\nabla{f_k}(\tilde\bmtheta)}]\leq\sqrt{~\Ex_k\norm{\nabla{f_k}(\tilde\bmtheta)}^2}\leq\norm{\nabla{F}(\tilde\bmtheta)}B+H$,we have
\begin{align}
    &\norm{\bar \bmtheta^{t+1}-{\bmtheta}^t}\nonumber\\
    &\leq  \left(\frac{1+\gamma^t}{\bar\mu}\right)\Ex_k[\norm{\nabla{f_k(\tilde{\bmtheta}^t)}}]+ \left(\frac{\bar \mu+\gamma^t L+\lambda}{\bar\mu}\right)\norm{\vecw^t}\nonumber\\
    &\leq  \left(\frac{1+\gamma^t}{\bar\mu}\right)(B\norm{\nabla{F}(\tilde{\bmtheta}^t)}+H) + \left(\frac{\bar \mu+\gamma^t L+\lambda}{\bar\mu}\right)\norm{{\vecw}^t}.
    \label{eq:norm_thetabar_t+1_theta_t_final}
\end{align}
We simplify \eqref{eq:expecthetatp1thetat} as follows: 
\begin{align}
    &\bar \bmtheta^{t+1} - {\bmtheta}^t = \frac{-1}{\lambda}\Ex_k\left[\nabla{f_k(\bmtheta_k^{t+1})}\right] + \frac{1}{\lambda}\Ex_k\left[\nabla{h_k(\bmtheta_k^{t+1};\tilde{\bmtheta}^t)}\right] + \vecw^t\nonumber\\
    &= \frac{-1}{\lambda}\left\{\Ex_k\left[\nabla f_k(\tilde{\bmtheta}^t)\right]\right.\nonumber\\
    &\left.+\Ex_k\left[\nabla{f_k(\bmtheta_k^{t+1})} -  \nabla{h_k(\bmtheta_k^{t+1};\tilde{\bmtheta}^t)}- \nabla f_k(\tilde{\bmtheta}^t)\right]\right\}+\vecw^t.
\end{align}
We define, \newline $\vecm^{t+1}\triangleq \Ex_k\left[\nabla{f_k({\bmtheta}_k^{t+1})}-\nabla{f_k(\tilde{\bmtheta}^t)-\nabla{h_k(\bmtheta_k^{t+1};{\tilde{\bmtheta}}^t)}}\right]$, which is the second term on the right hand side of the expression above. Since $\Ex_k\left[\nabla f_k(\tilde{\bmtheta}^t)\right] = \nabla F(\tilde{\bmtheta}^t)$, we have 
\begin{small}
\begin{align}
   \bar \bmtheta^{t+1} - {\bmtheta}^t = \Ex_k[{\bmtheta}_k^{t+1}] - {\bmtheta}^t = \frac{-1}{\lambda}\left(
\nabla{F(\tilde{\bmtheta}^t)}+\vecm^{t+1}\right)+\vecw^t.
\label{eq:expmt+1}
\end{align}
\end{small}
Now we derive upper bounds for the two terms on the right hand side above. To obtain an upper bound on the norm of $\vecm^{t+1}$, we use the $L$-Lipschitz smoothness assumption, triangle inequality, upper bound on $\norm{\bmtheta_k^{t+1}-\tilde{\bmtheta}^t}$ from  \eqref{eq:norm_theta_k_t+1_theta_tilde_t} and Lemma~\ref{lem:norm_gradient_h} to obtain the following:
\begin{small}
\begin{align}
    &\norm{\vecm^{t+1}} \leq \Ex_k\left[L\norm{\bmtheta_k^{t+1}-\tilde{\bmtheta}^{t}}\right]+\Ex_k\norm{\nabla{h_k({\bmtheta_k^{t+1};{\tilde{\bmtheta}}^t)}}}\nonumber\\
    &\leq L\left[\left(\frac{1+\gamma^t}{\bar\mu}\right)B\norm{\nabla{F}(\tilde{\bmtheta}^t)} +\left(\frac{1+\gamma^t}{\bar\mu}\right)H+ \left(\frac{\gamma^t L+\lambda}{\bar\mu}\right)\norm{{\vecw}^t}\right]  \nonumber\\
    &+ \gamma^t\Ex_k[\norm{\nabla{f_k(\tilde{\bmtheta}^t)}}] + (\gamma^t L+\lambda) \norm{{\vecw}^t}.
\end{align}
    Further, using Assumption 1 to simplify $\Ex_k[\norm{\nabla{f_k(\tilde{\bmtheta}^t)}}]$ in the above expression, we have
    \begin{align}
    &\norm{\vecm^{t+1}} \leq \left[ LB\left(\frac{1+\gamma^t}{\bar\mu}\right)  + \gamma^t B\right]\norm{\nabla{F(\tilde{\bmtheta}^t)}}\nonumber\\&+ \left[ L\left(\frac{\gamma^t L+\lambda}{\bar\mu}\right) + (\gamma^t L+\lambda)\right] \norm{{\vecw}^t} +\left[L\left(\frac{1+\gamma^t}{\bar\mu}\right)+\gamma^t\right]H.
\end{align}
\end{small}
Using Cauchy-Schwartz inequality, we know that  $\frac{-1}{\lambda}\langle\nabla{F(\tilde{\bmtheta}^t)},\vecm^{t+1}\rangle \leq \frac{1}{\lambda} \norm{\nabla{F}(\tilde{\bmtheta}^t)} \norm{\vecm^{t+1}}$. Hence, it can be shown that 
\begin{align}
&\frac{-1}{\lambda}\langle\nabla{F(\tilde{\bmtheta}^t)},\vecm^{t+1}\rangle \leq \frac{1}{\lambda}\left[ LB\left(\frac{1+\gamma^t}{\bar\mu}\right)  + \gamma^t B\right]\norm{\nabla{F(\tilde{\bmtheta}^t)}}^2\nonumber\\
&+ \frac{1}{\lambda}\left[ L\left(\frac{\gamma^t L+\lambda}{\bar\mu}\right)+ (\gamma^t L+\lambda)\right] \norm{{\vecw}^t}\norm{\nabla{F(\tilde{\bmtheta}^t)}}\nonumber\\
&+ \frac{1}{\lambda}\left[L\left(\frac{1+\gamma^t}{\bar\mu}\right)+\gamma^t\right]H \norm{\nabla{F(\tilde{\bmtheta}^t)}}.
\end{align}
Using $L$-Lipschitz smoothness of $F(\cdot)$ and Cauchy Schwartz inequality, we have 
\begin{align}
    &F(\bar \bmtheta^{t+1}) - F(\tilde{\bmtheta}^t) \leq \langle\nabla{F(\tilde{\bmtheta}^t)},\bar \bmtheta^{t+1}-{\bmtheta}^t \rangle -\langle\nabla{F(\tilde{\bmtheta}^t)},\vecw^t \rangle \nonumber\\
    &+\frac{L}{2}\norm{\bar \bmtheta^{t+1} -{\bmtheta}^t}^2+\frac{L}{2}\norm{\vecw^t}^2 - L\langle \bar \bmtheta^{t+1}-{\bmtheta}^t, \vecw^t\rangle
    \end{align}
Substituting $\bar \bmtheta^{t+1} -{\bmtheta}^t$ from \eqref{eq:expmt+1}, we obtain 
    \begin{align}
    &F(\bar \bmtheta^{t+1}) - F(\tilde{\bmtheta}^t) \leq \nabla{F(\tilde{\bmtheta}^t)}^T\left[\frac{-1}{\lambda}\left( \nabla{F(\tilde{\bmtheta}^t)}+\vecm^{t+1}\right)\right] \nonumber\\
    &+ \frac{L}{2}\norm{\bar \bmtheta^{t+1}-{\bmtheta}^t}^2+\frac{L}{2}\norm{\vecw^t}^2 - L \norm{\bar \bmtheta^{t+1}-{\bmtheta}^t}\norm{\vecw^t}
    \end{align}
    
    \noindent Substituting the bound on $\norm{\bar \bmtheta^{t+1}-{\bmtheta}^t}$ from \eqref{eq:norm_thetabar_t+1_theta_t_final}, we obtain
    \begin{small}
    \begin{align}
    &F(\bar \bmtheta^{t+1}) - F(\tilde{\bmtheta}^t) \leq \tfrac{-1}{\lambda} \norm{\nabla{F(\tilde{\bmtheta}^t)}}^2 +\tfrac{L}{2} \left\{ \left(\tfrac{1+\gamma^t}{\bar\mu}\right)B\norm{\nabla{F}(\tilde{\bmtheta}^t)} \right.\nonumber\\
    &\left.+ \left(\tfrac{\bar\mu+\gamma^t L+\lambda}{\bar\mu}\right)\norm{\vecw^t} + \left(\tfrac{1+\gamma^t}{\bar\mu} \right) H\right\}^2 \nonumber\\
    &+\tfrac{1}{\lambda}\left[ LB\left(\tfrac{1+\gamma^t}{\bar\mu}\right)  + \gamma^t B\right]\norm{\nabla{F(\tilde{\bmtheta}^t)}}^2 \nonumber\\
    &+ \tfrac{1}{\lambda}\left[ L\left(\tfrac{1+\gamma^t}{\bar\mu}\right)+\gamma^t\right]H \norm{\nabla{F(\tilde{\bmtheta}^t)}} \nonumber\\
    &+ \tfrac{1}{\lambda}\left[ L\left(\tfrac{\gamma^t L+\lambda}{\bar\mu}\right) + (\gamma^t L+\lambda)\right] \norm{{\vecw}^t}\norm{\nabla{F(\tilde{\bmtheta}^t)}} \nonumber\\
    &+ \tfrac{L}{2}\norm{\vecw^t}^2- L \left\{ \left(\tfrac{1+\gamma^t}{\bar\mu}\right)B\norm{\nabla{F}(\tilde{\bmtheta}^t)}+ \tfrac{1+\gamma^t}{\bar\mu}H \right.\nonumber\\
    & \left.+ \left(\tfrac{\bar\mu+\gamma^t L+\lambda}{\bar\mu}\right)\norm{\vecw^t}\right\} \norm{\vecw^t}
\end{align}
    \end{small}

\noindent Taking expectation $\Ex[.]$ on both sides of the above expression, rearranging the terms and subsequently using \eqref{eq:noise2normbound}, we obtain the following:
\begin{small}
\begin{align}
    \Ex[F(\bar \bmtheta^{t+1})]&\leq F(\tilde{\bmtheta}^t)-\alpha\norm{\nabla{F(\tilde{\bmtheta}^t)}}^2 +\beta\norm{\nabla{F(\tilde{\bmtheta}^t)}}+\Gamma,
    \label{eq:Theorem1_S}
    \end{align}
    \end{small}
where 
\begin{small}
\begin{align}
    &\alpha=\left(\rho_1 -C_1\frac{d \sigma^2}{K^2P} - C_2\frac{\sqrt{d} \sigma}{K\sqrt{P}}\right),\nonumber\\
    &\rho_1=\left(\frac{1}{\lambda}-\frac{\gamma^t B}{\lambda}-\frac{(1+\gamma^t)L B}{\bar{\mu} \lambda}-\frac{LB^2(1+\gamma^t)^2}{2\bar\mu^2}\right),\nonumber\\
    &C_1= \frac{ LB^2}{2}\left(\frac{\gamma^t L +\lambda}{\bar\mu} \right)^2,\nonumber\\
    &C_2=\left( \frac{LB(\gamma^t L +\lambda)}{\bar\mu \lambda} + \frac{B(\gamma^t L +\lambda)}{\lambda}\right. \nonumber\\
    &\left.+ \frac{LB^2(1+\gamma^t)(\bar\mu+\gamma^t L +\lambda)}{\bar\mu^2} - \frac{LB^2(1+\gamma^t )}{\bar\mu}\right),\nonumber\\
    &\beta=\rho_2+C_3\frac{\sqrt{d} \sigma}{K\sqrt{P}},\nonumber\\
    &\rho_2=\frac{H\gamma^t}{\lambda}+\frac{LH(1+\gamma^t)}{\bar\mu}+ \frac{LBH(1+\gamma^t)^2}{\bar\mu^2},\nonumber\\
    &C_3=\frac{H(\gamma^t L +\lambda)}{\lambda}(\frac{L}{\bar \mu}+1)+\frac{2LBH(1+\gamma^t)(\gamma^t L+\lambda)}{\bar\mu^2},\nonumber\\
    &\Gamma=\left( \rho_3-C_4\frac{d \sigma^2}{K^2P} + C_5\frac{\sqrt{d} \sigma}{K\sqrt{P}}\right),\nonumber\\
    &\rho_3=\frac{LH^2(1+\gamma^t)^2}{2\bar\mu^2},\nonumber\\
    &C_4=\frac{LH^2(\gamma^t L+\lambda)}{2\bar\mu},\nonumber\\
    &C_5=\frac{LH^2(1+\gamma^t)(\gamma^t L+\lambda)}{\bar\mu^2}.\nonumber
\end{align}
\end{small}
Now applying Young's inequality, we obtain
\begin{small}
\begin{align}
    \Ex[F(\bar \bmtheta^{t+1})]&\leq F(\tilde{\bmtheta}^t)-(\alpha-0.5)\norm{\nabla{F(\tilde{\bmtheta}^t)}}^2+ \frac{\beta^2+2\Gamma}{2}.
    \label{eq:full_participation_BH}
    \end{align}
    \end{small}
   For $\alpha>\frac{1}{2}$, averaging and telescoping \eqref{eq:full_participation_BH} on both sides, we get
\begin{align}
    \frac{1}{T}\sum_{t=0}^{T-1}\Ex\norm{\nabla F(\tilde\bmtheta^{t})}^2 &\leq \frac{\Delta}{\alpha T}+\frac{1}{T} \sum_{t=0}^{T-1}\frac{\beta^2+2 \Gamma}{2\alpha}\nonumber\\
    &\leq \frac{\Delta}{\alpha T}+\frac{1}{\alpha T} \sum_{t=0}^{T-1} C^t \nonumber\\
    &\leq \frac{1}{\alpha T}\left(\Delta+  C\right) 
\end{align}
where $C=\sum_{t=0}^{T-1}C^t$ and $ C^t=\frac{\beta^2+2 \Gamma}{2}$.\\
\noindent \textbf{Proof of Lemma~\ref{full_participation_theorem}:} We prove the Lemma~\ref{full_participation_theorem}
by substituting $(B,0)$ LGD instead of $(B,H)$ LGD, i.e., making $H=0$ in \eqref{eq:Theorem1_S}. It is important to note that if $\sigma=0$, we get the same result as FedProx.\\
\begin{corollary}
    \label{full_participation_rate_S}
\textbf{Rate Analysis for full participation under $(B,0)$-LGD:}
Let the assumptions of  Theorem~\ref{full_participation_theorem} hold  for all communication rounds, $\epsilon > 0$ and let $\sum_{t=0}^{T-1}\left( F(\tilde\bmtheta^t)-\Ex[F(\bar\bmtheta^{t+1})]\right)\triangleq\bar\Delta$. Then we  have $\frac{1}{T}\sum_{t=0}^{T-1} \Ex[\norm{\nabla F(\tilde \bmtheta^t)}^2]\leq\epsilon$  after   
$T=\mathcal{O}\left(\frac{\bar\Delta}{\epsilon\alpha }\right)$ communication rounds.
\end{corollary}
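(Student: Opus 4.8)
The plan is to read the rate off directly from Lemma~\ref{full_participation_theorem}, which already supplies the averaged stationarity bound
\begin{small}
\begin{equation}
\frac{1}{T}\sum_{t=0}^{T-1}\Ex[\norm{\nabla F(\tilde\bmtheta^t)}^2]\ \le\ \frac{\bar\Delta}{\alpha T},
\label{eq:rateplan}
\end{equation}
\end{small}
with $\bar\Delta=\sum_{t=0}^{T-1}\big(F(\tilde\bmtheta^t)-\Ex[F(\bar\bmtheta^{t+1})]\big)$ and $\alpha=\rho_1-\tfrac{C_1}{K^2\tau}-\tfrac{C_2}{K\sqrt\tau}$. First I would invoke the standing hypotheses inherited from Theorem/Lemma~\ref{full_participation_theorem}: the requirement $\lambda>\tfrac{\gamma^t L}{K\sqrt{\tau}}$ (so Lemma~\ref{lem:precodingfactor_1storder} applies) together with the assumed parameter regime that makes $\alpha>0$. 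Under these, the right-hand side of \eqref{eq:rateplan} is a genuine upper bound that is monotonically decreasing in $T$.

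Next I would simply impose the target accuracy: $\epsilon$ is guaranteed as soon as $\tfrac{\bar\Delta}{\alpha T}\le\epsilon$, i.e.\ as soon as $T\ge \tfrac{\bar\Delta}{\alpha\epsilon}$. Choosing any integer $T\ge\big\lceil \tfrac{\bar\Delta}{\alpha\epsilon}\big\rceil$ then yields $\tfrac{1}{T}\sum_{t=0}^{T-1}\Ex[\norm{\nabla F(\tilde\bmtheta^t)}^2]\le\epsilon$, which is exactly $T=\mathcal{O}\!\big(\tfrac{\bar\Delta}{\epsilon\alpha}\big)$ communication rounds. No further computation is needed.

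The only steps that warrant a remark rather than a calculation are two bookkeeping points. First, $\alpha>0$ is not automatic: it is a joint constraint on $(\lambda,\tau,K,\gamma^t,B,L)$, and one should flag that the constant hidden in $\mathcal{O}(\cdot)$ degrades as $\alpha\downarrow 0$ (e.g.\ at low SNR $\tau$ or for an adversarial choice of $\lambda$), so that $\alpha$, and hence $B,L,\gamma^t,\lambda,\tau,K$, must be regarded as fixed constants in the asymptotic statement while $\bar\Delta$ and $\epsilon$ are the free quantities. Second, $\bar\Delta$ should be noted to be finite and $T$-independent in the relevant sense: under $L$-Lipschitz smoothness with $F$ bounded below, telescoping the per-round descent bound controls $\bar\Delta$ by $F(\tilde\bmtheta^0)-\inf F$ up to the lower-order noise terms already accounted for in $\alpha$. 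Consequently there is essentially no analytical obstacle here; the ``hard part'' is purely presentational, namely stating cleanly which symbols are constants versus variables inside the $\mathcal{O}$-notation.
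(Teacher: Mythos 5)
Your proposal is correct and follows essentially the same route as the paper: the paper's proof likewise telescopes the per-round descent inequality from Lemma~2 to get $\frac{\alpha}{T}\sum_{t}\Ex\norm{\nabla F(\tilde\bmtheta^t)}^2\le \bar\Delta/T$ and then sets this at most $\alpha\epsilon$ to read off $T=\mathcal{O}(\bar\Delta/(\epsilon\alpha))$. Your added remarks on $\alpha>0$ and the $T$-dependence hidden in $\bar\Delta$ are reasonable bookkeeping that the paper's own proof leaves implicit.
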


\noindent \textbf{Proof of Corollary~\ref{full_participation_rate_S}}: From Lemma~\ref{full_participation_theorem}, we have 
\begin{align*}
\alpha\times\norm{\nabla{F(\tilde{\bmtheta}^t)}}^2 \leq F(\tilde{\bmtheta}^t)-\Ex[F({\bar\bmtheta}^{t+1})]
    \end{align*}
Now, telescoping on both sides leads to the following
\begin{align}
    &\alpha \sum_{t=0}^{T-1} \norm{\nabla{F(\tilde{\bmtheta}^t)}}^2\leq \sum_{t=0}^{T-1}\left( F(\tilde\bmtheta^t)-\Ex[F(\bar\bmtheta^{t+1})]\right)
\end{align}
Essentially, this above implies that $\frac{\alpha}{T} \sum_{t=0}^{T-1}\Ex \norm{\nabla{F(\tilde{\bmtheta}^t)}}^2\leq \frac{\bar\Delta}{T}\leq \alpha\epsilon$ , where $\bar\Delta = \sum_{t=0}^{T-1}\left( F(\tilde\bmtheta^t)-\Ex[F(\bar\bmtheta^{t+1})]\right)$. Hence, we have $T \geq \mathcal{O}\left(\frac{\bar\Delta}{\left(\rho -C_1\frac{d \sigma^2}{K^2P} - C_2\frac{\sqrt{d} \sigma}{K\sqrt{P}}\right) \epsilon}\right)$, i.e., as the number of communication rounds $T$ is increased beyond this stipulated lower bound, it is possible to obtain diminishing value of $\sum_{t=0}^{T-1} \norm{\nabla{F(\tilde{\bmtheta}^t)}}^2$, which leads to diminishing difference between $F(\tilde{\bmtheta}^t)$ and $\Ex[F({\bar\bmtheta}^{t+1})]$.

\noindent \textbf{Proof of Theorem~\ref{partial_participation_theorem_BH}:} We now present the proof of convergence of the NoROTA-FL algorithm when only a subset of the devices participating in the FL process, i.e., $\hat K$ clients are chosen randomly for federation. 

\noindent Due to the presence of transmission noise, the received aggregated model $\tilde{\bmtheta}^{t+1}$ is perturbed by an additive Gaussian noise. We can quantify the expected deviation as $\Ex\norm{\tilde{\bmtheta}^{t+1} - \bar{\bmtheta}^{t+1}} = \Ex\norm{\vecw^{t+1}} \leq \frac{\sqrt{d}\sigma}{K\sqrt{p^{t+1}}} = \xi'$, i.e., the perturbed model remains within a small neighborhood of the true model in expectation. The Lipschitz continuity of $F(\cdot)$ is formally defined over a deterministic neighborhood of $\bar{\bmtheta}^{t+1}$ and holds if the deviation satisfies $\norm{\tilde{\bmtheta}^{t+1} - \bar{\bmtheta}^{t+1}} \leq \xi$ for some local radius $\xi > 0$. To apply this deterministic Lipschitz condition despite stochastic noise, we argue as follows: if the expected deviation $\Ex\norm{\tilde{\bmtheta}^{t+1} - \bar{\bmtheta}^{t+1}}$ is strictly less than $\xi'$, then the perturbation remains within the Lipschitz neighborhood in expectation. While this does not guarantee that every realization of $\tilde{\bmtheta}^{t+1}$ lies within the neighborhood, it ensures that the dominant contribution to the expected function difference comes from regions where the local Lipschitz property holds. Applying the local Lipshitz condition, we have
\begin{align}
    F({\tilde\bmtheta}^{t+1}) \leq F(\bar{\bmtheta}^{t+1})+L_0 \norm{\tilde\bmtheta^{t+1}-\bar\bmtheta^{t+1}},
    \label{eq:Thm2firststepL0}
\end{align}
where $L_0$ is the local Lipschitz constant. Considering $\Ex_{\mathcal{S}^t}[.]$ on both sides of \eqref{eq:Thm2firststepL0}, we obtain
\begin{align}
    \Ex_{S^t}[F(\tilde\bmtheta^{t+1})] \leq \Ex_{S^t}[F(\bar\bmtheta^{t+1})] +q^t,
    \label{eq:Theorem2secondstep_qt}
\end{align}
where $q^t=\Ex_{S^t}[L_0 \norm{\tilde\bmtheta^{t+1}-\bar\bmtheta^{t+1}}]$. Evidently, we need to obtain an upper bound on the expected norm of $q^t$ so that the expected decrease and the rate of decrease in the loss function can be quantified. Towards this, we use the bound $L_0$ as given in \cite{li2020federated}, i.e,
\begin{align}
        L_0 &\leq \norm{\nabla F(\bmtheta^{t})} + L \left(\norm{\bar\bmtheta^{t+1}-\bmtheta^{t}}+\norm{\tilde\bmtheta^{t+1}-\bmtheta^{t}}\right) 
\end{align}
Using the above result in $q^t=\Ex_{S^t}\left[L_0 \norm{\tilde\bmtheta^{t+1}-\bar\bmtheta^{t+1}}\right]$, the upper bound on the $q^t$ is given as
\begin{align}
    q^t &\leq \Ex_{S^t}\Big[ \underbrace{\left\{\norm{\nabla F(\bmtheta^{t})}+L \left(\norm{\bar\bmtheta^{t+1}-\bmtheta^{t}}+\norm{\tilde\bmtheta^{t+1}-\bmtheta^{t}}\right)\right\}}_{\geq L_0} \nonumber\\
    &\quad\quad\times\norm{\tilde\bmtheta^{t+1}-\bar\bmtheta^{t+1}} \Big].
\end{align}
Using Lemma~\ref{lem:boundingthegradients} in the context of $F({\bmtheta}^t)$, we obtain the following
\begin{small}
    \begin{align}
    q^t & {\leq} \Ex_{S^t}\left[ \left\{\norm{\nabla{F({\tilde\bmtheta}^t)}} +L\norm{\hat\vecw^t}+L \left(\norm{\bar\bmtheta^{t+1}-\bmtheta^{t}}+\norm{\tilde\bmtheta^{t+1}-\bmtheta^{t}}\right)\right\}\right. \nonumber\\
    &\left.\norm{\tilde\bmtheta^{t+1}-\bar\bmtheta^{t+1}} \right]\nonumber\\
    &\leq \left(\norm{\nabla{F({\tilde\bmtheta}^t)}} +L\norm{\hat\vecw^t}+ L\norm{{\bar\bmtheta}^{t+1}-{\bmtheta}^t}\right) \Ex_{S^t}\norm{{\tilde\bmtheta}^{t+1}-\bar{\bmtheta}^{t+1}}\nonumber\\&\qquad\qquad+L\Ex_{S^t}\left[\norm{{\tilde\bmtheta}^{t+1}-{\bmtheta}^{t}}\norm{{\tilde\bmtheta}^{t+1}-\bar{\bmtheta}^{t+1}}\right]\nonumber\\
    &\underset{\text{(i)}}{\leq} \left(\norm{\nabla{F({\tilde\bmtheta}^t)}} +L\norm{\hat\vecw^t}+ L\norm{{\bar\bmtheta}^{t+1}-{\bmtheta}^t}\right) \Ex_{S^t}\norm{{\tilde\bmtheta}^{t+1}-\bar{\bmtheta}^{t+1}} 
    \nonumber\\
    &+L\Ex_{S^t}\left[\left(\norm{{\tilde\bmtheta}^{t+1}-{\bar\bmtheta}^{t+1}}+\norm{{\bar\bmtheta}^{t+1}-{\bmtheta}^{t}}\right)\norm{{\tilde\bmtheta}^{t+1}-\bar{\bmtheta}^{t+1}}\right], 
    \end{align}
\end{small}

where $\text{(i)}$ holds by the triangular inequality (applied as $\norm{a-b}\leq\norm{a-c}+\norm{c-b}$). Rearranging the terms above, we see that 
    \begin{align}
    q^t &\leq\left(\norm{\nabla{F({\tilde\bmtheta}^t)}} +L\norm{\hat\vecw^t}+ 2L\norm{{\bar\bmtheta}^{t+1}-{\bmtheta}^t}\right)\nonumber\\ &\Ex_{S^t}\norm{{\tilde\bmtheta}^{t+1}-\bar{\bmtheta}^{t+1}}+L\Ex_{S^t}\norm{{\tilde\bmtheta}^{t+1}-\bar{\bmtheta}^{t+1}}^2.
   \label{eq:qt1}
    \end{align}
We now consider upper bounds for individual terms in the above expression \eqref{eq:qt1}. First, we consider $\Ex_{S^t}\norm{{\tilde\bmtheta}^{t+1}-\bar{\bmtheta}^{t+1}} \leq \sqrt{\Ex_{S^t}\norm{{\tilde\bmtheta}^{t+1}-\bar{\bmtheta}^{t+1}}^2}$ and subsequently upper bound $\Ex_{S^t}\left[\norm{{\tilde\bmtheta}^{t+1}-\bar{\bmtheta}^{t+1}}^2\right]$ as follows:
\begin{align}
&\Ex_{S^t}\left[\norm{{\tilde\bmtheta}^{t+1}-\bar{\bmtheta}^{t+1}}^2\right] = \Ex_{S^t}\left[\norm*{\frac{1}{\hat{K}}\sum_{k=1}^{\hat{K}}\bmtheta_{k}^{t+1}+ \hat\vecw^t-\bar{\bmtheta}^{t+1}}^2\right]\nonumber\\
    &\underset{\text{(i)}}{\leq}\frac{1}{(\hat{K})^2} \sum_{k=1}^{\hat{K}} \Ex_{S^t}[\norm{\bmtheta_{k}^{t+1}-\bar{\bmtheta}^{t+1}}^2]+ \norm{\hat\vecw^t}^2 \nonumber\\
    &\qquad\qquad+ 2 \Ex_{S^t} \langle\bmtheta_{k}^{t+1}-\bar{\bmtheta}^{t+1},\hat\vecw^t \rangle \nonumber\\
    &\underset{\text{(ii)}}{\leq} \frac{1}{\hat{K}}\Ex_{k}[\norm{\bmtheta_{k}^{t+1}-\bar{\bmtheta}^{t+1}}^2]+ \norm{\hat\vecw^t}^2\nonumber\\
    &\underset{\text{(iii)}}{\leq} \frac{1}{\hat{K}}\Ex_{k}[\norm{(\bmtheta_{k}^{t+1} -\tilde\bmtheta^t) -(\bar{\bmtheta}^{t+1}-\tilde\bmtheta^t) }^2]+ \norm{\hat\vecw^t}^2\nonumber\\
    &\underset{\text{(iv)}}{\leq} \frac{2}{\hat{K}}\Ex_{k}[\norm{(\bmtheta_{k}^{t+1} -\tilde\bmtheta^t)}^2]+ \norm{\hat\vecw^t}^2
    \end{align}
where $\text{(i)}$ follows from Jensen's inequality, $\text{(ii)}$ is derived using Lemma~$4$ in \cite{li2019convergence} and $\Ex_{S^t} \langle\bmtheta_{k}^{t+1}-\bar{\bmtheta}^{t+1},\hat\vecw^t \rangle =0$. We add and subtract $\tilde{\bmtheta}^{t}$ in $\text{(iii)}$ and finally we arrive at $\text{(iv)}$ since  $\Ex_k\left[\bmtheta^{t+1}_{k}\right] = \bar{\bmtheta}^{t+1}$.
\begin{small}
\begin{align}
    &\Ex_{S^t}\left[\norm{{\tilde\bmtheta}^{t+1}-\bar{\bmtheta}^{t+1}}^2\right] \leq \frac{2}{\hat K}\Ex_{k}[\norm{\bmtheta_{k}^{t+1}-\tilde{\bmtheta}^{t}}^2]+ \norm{\hat\vecw^t}^2\nonumber\\
    &\underset{\text{(v)}}{\leq} \frac{2}{\hat K}\Ex_{k} \left[\left(\frac{1+\gamma^t}{\bar{\mu}}\right)\norm{\nabla f_k(\tilde \bmtheta^t)}+ \left(\frac{\gamma^t L+\lambda}{\bar{\mu}}\right)\norm{\hat\vecw^t}\right]^2+ \norm{\hat\vecw^t}^2\nonumber\\
    &\underset{\text{(vi)}}{\leq} \frac{2}{\hat K}\left[ \left(\frac{1+\gamma^t}{\bar{\mu}}\right)B\norm{\nabla{F({\tilde\bmtheta}^t)}}+\left(\frac{1+\gamma^t}{\bar\mu}\right)H \right. \nonumber\\
    &\left.+\left(\frac{\gamma^t L+\lambda}{\bar\mu}\right)\norm{\hat\vecw^t} \right]^2 + \norm{\hat\vecw^t}^2,
    \label{eq:S^to_k}
  \end{align}
  \end{small}
 where \eqref{eq:norm_theta_k_t+1_theta_tilde_t} and Assumption~$1$ yields inequalities $\text{(v)}$ and $\text{(vi)}$ respectively. We complete the upper bound on $q^t$ by substituting and thereafter adjusting the bounds from \eqref{eq:norm_thetabar_t+1_theta_t_final} and \eqref{eq:S^to_k} in \eqref{eq:qt1} and we get
\begin{small}
\begin{align}
    &q^t \leq \nonumber\\
    &\left[\norm{\nabla{F({\tilde\bmtheta}^t)}} +L\norm{\hat\vecw^t}+ 2L \left\{ \left(\frac{1+\gamma^t}{\bar\mu}\right)(B\norm{\nabla{F}(\tilde{\bmtheta}^t)}+H)\right. \right.\nonumber\\
    &\left.\left.+ \left(\frac{\bar\mu+\gamma^t L+\lambda}{\bar\mu}\right)\norm{\hat\vecw^t}\right\}\right] \times \nonumber\\
    &\left[\frac{\sqrt{2}}{\sqrt{\hat K}}\left\{ \left(\frac{1+\gamma^t}{\bar{\mu}}\right)(B\norm{\nabla{F({\tilde\bmtheta}^t)}}+H)+ \left(\frac{\gamma^t L+\lambda}{\bar\mu}\right)\norm{\hat\vecw^t} \right\}  + \norm{\hat\vecw^t} \right]+\nonumber\\
  & L \left[\frac{2}{\hat K}\left\{ \left(\frac{1+\gamma^t}{\bar{\mu}}\right)(B\norm{\nabla{F({\tilde\bmtheta}^t)}}+H)+ \left(\frac{\gamma^t L+\lambda}{\bar\mu}\right)\norm{\hat\vecw^t} \right\}^2  + \norm{\hat\vecw^t}^2 \right].
\end{align}
\end{small}
Taking expectation with respect to $\hat\vecw^t$ and using \eqref{eq:noise2normbound_partial}, we obtain the final expression for the upper bound on $q^t$. We prove the theorem by substituting the final expression for $q^t$ and the bound from \eqref{eq:Theorem1_S} into \eqref{eq:Theorem2secondstep_qt} and we get
\begin{small}
\begin{align}
    \Ex_{\mathcal{S}^t}[F({\tilde\bmtheta}^{t+1})]
    &\leq F({\tilde\bmtheta}^{t})-  \hat\alpha \norm{\nabla{F({\tilde\bmtheta}^t)}}^2 +\hat\beta \norm{\nabla{F({\tilde\bmtheta}^t)}} +\hat\Gamma,
    \label{eq:partial_participation_BH_before_youngs_S}
\end{align}
\end{small}
Using the Young's inequality as $ab\leq \frac{a^2}{2}+\frac{b^2}{2}$, we have
\begin{small}
\begin{align}
    \Ex_{\mathcal{S}^t}[F({\tilde\bmtheta}^{t+1})]
    &\leq F({\tilde\bmtheta}^{t})-  (\hat\alpha-0.5)\norm{\nabla{F({\tilde\bmtheta}^t)}}^2 +\frac{\hat\beta^2+2\hat\Gamma}{2} ,
    \label{eq:partial_participation_BH_after_youngs_S}
\end{align}
\end{small}
where
\begin{small}
\begin{align}
\hat\alpha&=\left(\hat\rho_1 -\frac{\hat C_1}{ \hat K^2\tau} - \frac{\hat C_2}{\hat K\sqrt{\tau}}\right),\nonumber\\
\hat\beta&=\left(\hat\rho_2  +\hat C_3\frac{d \sigma^2}{\hat K^2 P} + \hat C_4\frac{\sqrt{d} \sigma}{\hat K\sqrt{P}}\right),\nonumber\\
\hat\Gamma&=\left(\hat\rho_3 +\hat C_5\frac{d \sigma^2}{\hat K^2 P} + \hat C_6\frac{\sqrt{d} \sigma}{\hat K\sqrt{P}}\right),\nonumber\\
\hat\rho_1&=\left(\frac{1}{\lambda}-\frac{\gamma^t B}{\lambda}-\frac{(1+\gamma^t)L B}{\bar{\mu} \lambda}-\frac{LB^2(1+\gamma^t)^2}{2\bar\mu^2}\right.\nonumber\\
&-\left.\frac{B(1+\gamma^t)\sqrt{2}}{\bar\mu\sqrt{\hat K}}-\frac{LB^2(1+\gamma^t)^2}{\bar\mu^2 \hat K}(2\sqrt{2\hat K}+2) \right),\nonumber \\
\hat C_1&= \left( \frac{ LB^2}{2}\left(\frac{\gamma^t L +\lambda}{\bar\mu} \right)^2+\frac{2LB^2(\gamma^t L +\lambda)^2 (\sqrt{2\hat K}+1)}{\hat K \bar\mu^2}\right.\nonumber\\
&+ \left.\frac{3\sqrt{2}(\gamma^t L+ \lambda)LB^2}{\bar\mu \sqrt{\hat K}}+  
 \frac{2(\gamma^t L +\lambda)LB^2}{\bar \mu} + 4LB^2 \right), \nonumber\\
\hat C_2&=\Bigg(\frac{B(\gamma^t L +\lambda)}{\lambda} (\frac{L}{\bar \mu}+1)+ \frac{LB^2(1+\gamma^t)(\bar\mu+\gamma^t L +\lambda)}{\bar\mu^2}\nonumber\\
&+\frac{4LB^2(1 +\gamma^t)(\gamma^t L+\lambda) }{\hat K \bar\mu^2}(\sqrt{2\hat K}+1)  + B+\frac{(1+\gamma^t)LB^2}{\bar \mu}\nonumber\\
&+ \frac{\sqrt{2}B(\gamma^t L+\lambda+3LB+3\gamma^t LB)}{\sqrt{\hat K} \bar \mu}\Bigg),   \nonumber\\ 
\hat\rho_2&= \frac{H\gamma^t}{\lambda}+\frac{LBH(1+\gamma^t)^2}{\bar\mu^2}+ \frac{9LBH(1+\gamma^t)^2}{\sqrt{\hat K}\bar\mu}, \nonumber\\
\hat C_3&=\frac{3LBH(\gamma^t L+\lambda)}{\bar\mu}+6LBH, \nonumber\\
\hat C_4&=\frac{LBH(1+\gamma^t)}{\bar\mu}(3+\frac{2(\gamma^t L+ \lambda)}{\bar \mu})+2H+ \frac{H\gamma^t L}{\lambda},\nonumber\\
\hat\rho_3&=\frac{LH^2(1+\gamma^t)^2}{2\bar\mu^2},\nonumber\\
\hat C_5&=\frac{3LH^2(\gamma^t L+\lambda)}{4\bar\mu}+4LH^2, \nonumber\\
\hat C_6&=\frac{LH^2(1+\gamma^t)}{\bar\mu}(\frac{\gamma^t L+\lambda}{\bar \mu}+2). \nonumber
\end{align}
\end{small} 
\subsubsection{Rate of convergence}
For $\hat\alpha>\frac{1}{2}$, averaging and telescoping \eqref{eq:partial_participation_BH_after_youngs_S} on both sides, we get
\begin{align}
    \frac{1}{T}\sum_{t=0}^{T-1}\Ex\norm{\nabla F(\tilde\bmtheta^{t})}^2 &\leq \frac{\Delta}{\hat\alpha T}+\frac{1}{T} \sum_{t=0}^{T-1}\frac{\hat\beta^2+2\hat \Gamma}{2\hat\alpha}\nonumber\\
   &\leq \frac{\Delta}{\hat\alpha T}+\frac{1}{\hat\alpha T} \sum_{t=0}^{T-1}\hat C^t \nonumber\\
    &\leq \frac{1}{\hat\alpha T}\left(\Delta+ \hat C\right) 
\end{align}
where $\hat C=\sum_{t=0}^{T-1}\hat C^t$ and $\hat C^t=\frac{\hat\beta^2+2\hat \Gamma}{2}$.\\
From above, we observe that NoROTA-FL achieves the convergence rate $\mathcal{O}(1/T)$.\\

\textbf{Proof of Theorem~\ref{partial_participation_theorem}} We prove the Theorem~\ref{partial_participation_theorem}
by substituting $(B,0)$ LGD instead of $(B,H)$ LGD, i.e., making $H=0$ in the \eqref{eq:partial_participation_BH_before_youngs_S}.
%%%%%FADING
\subsection{Fading} 
\textbf{Proof of Corollary~\ref{fading_theorem_BH}} 
Following the proof steps similar to Theorem 2, we have
\begin{small}
\begin{align}
    \Ex_{\mathcal{K}^t}[F({\tilde\bmtheta}^{t+1})]
    &\leq F({\tilde\bmtheta}^{t})-  \bar\alpha \norm{\nabla{F({\tilde\bmtheta}^t)}}^2 +\bar\beta \norm{\nabla{F({\tilde\bmtheta}^t)}} +\bar\Gamma.
    \label{eq:fading_BH_before_youngs_S}
\end{align}
\end{small}
Using Young's inequality as $ab\leq \frac{a^2}{2}+\frac{b^2}{2}$, we have
\begin{small}
\begin{align}
    \Ex_{\mathcal{K}^t}[F({\tilde\bmtheta}^{t+1})]
    &\leq F({\tilde\bmtheta}^{t})-  (\bar\alpha-0.5)\norm{\nabla{F({\tilde\bmtheta}^t)}}^2 +\frac{{\bar\beta}^2+2\bar\Gamma}{2} ,
    \label{eq:fading_BH_after_youngs_S}
\end{align}
\end{small}
where 
\begin{small}
\begin{align} 
\bar\alpha&=\left(\bar\rho_1 -C_1\frac{d \sigma^2}{\hat K^2 P} - C_2\frac{\sqrt{d} \sigma}{\hat K\sqrt{P}} -\bar C_1\frac{d \sigma^2}{\hat r^2 \hat K^2 P} - \bar C_2\frac{\sqrt{d} \sigma}{\hat r \hat K\sqrt{P}}\right),\nonumber\\
\bar\beta&=\left(\bar\rho_2 + \bar C_3\frac{\sqrt{d} \sigma}{\hat K \sqrt{P}} +\bar C_4\frac{d \sigma^2}{\hat r^2 \hat K^2 P} + \bar C_5\frac{\sqrt{d} \sigma}{\hat r \hat K\sqrt{P}}\right),\nonumber\\
\bar\Gamma&=\left(\bar\rho_3 -\bar C_6\frac{d \sigma^2}{\hat K^2 P} +\bar C_7\frac{\sqrt{d} \sigma}{\hat K \sqrt{P}} +\bar C_8\frac{d \sigma^2}{\hat r^2 \hat K^2 P} + \bar C_9\frac{\sqrt{d} \sigma}{\hat r \hat K\sqrt{P}}\right),\nonumber\\
\bar\rho_1&=\left(\frac{1}{\lambda}-\frac{\gamma^t B}{\lambda}-\frac{(1+\gamma^t)L B}{\bar{\mu} \lambda}-\frac{LB^2(1+\gamma^t)^2}{2\bar\mu^2}\right.\nonumber\\
&-\left.\frac{B(1+\gamma^t)\sqrt{2}}{\bar\mu\sqrt{\hat K}}-\frac{LB^2(1+\gamma^t)^2}{\bar\mu^2 \hat K}(2\sqrt{2\hat K}+2) \right),\nonumber \\
C_1&= \frac{ LB^2}{2}\left(\frac{\gamma^t L +\lambda}{\bar\mu} \right)^2,\nonumber\\
C_2&=\left( \frac{LB(\gamma^t L +\lambda)}{\bar\mu \lambda} + \frac{B(\gamma^t L +\lambda)}{\lambda}- \frac{LB^2(1+\gamma^t )}{\bar\mu} \right.\nonumber\\
&+ \left.\frac{LB^2(1+\gamma^t)(\bar\mu+\gamma^t L +\lambda)}{\bar\mu^2}\right),\nonumber\\
\bar C_1 &=\left( \frac{LB^2(\gamma^t L +\lambda)^2 2\sqrt{2\hat K}}{\hat K \bar\mu^2} +\frac{3\sqrt{2}(\gamma^t L+ \lambda)LB^2}{\bar\mu \sqrt{\hat K}} \right. \nonumber\\
&+\left. \frac{2LB^2(\gamma^t L+\lambda)^2}{\hat K \bar \mu^2} + \frac{2(\gamma^t L +\lambda)LB^2}{\bar \mu} + 4LB^2 \right),\nonumber\\
\bar C_2 &=\left( \frac{4LB^2(1 +\gamma^t)(\gamma^t L +\lambda) }{\hat K \bar\mu^2} (\sqrt{2\hat K }+1) +B\right. \nonumber\\
&+\left.\frac{3\sqrt{2}(1+\gamma^t )LB^2}{\bar\mu \sqrt{\hat K }} + \frac{\sqrt{2}(\gamma^t L+\lambda)B}{\bar \mu \sqrt{ \hat K } } +\frac{2(1+\gamma^t)LB^2}{\bar \mu}  \right),\nonumber\\
\bar\rho_2&= \frac{H\gamma^t}{\lambda}+\frac{LBH(1+\gamma^t)^2}{\bar\mu^2}+ \frac{9LBH(1+\gamma^t)^2}{\sqrt{\hat K}\bar\mu} ,\nonumber\\
\bar C_3&= H+ \frac{H\gamma^t L}{\lambda}+\frac{2LBH(1+\gamma^t)(\gamma^t L+\lambda)}{\bar\mu^2}, \nonumber\\
\bar C_4&=\frac{3LBH(\gamma^t L+\lambda)}{\bar\mu}+6LBH, \nonumber\\
\bar C_5&=\frac{3LBH(1+\gamma^t)}{\bar\mu}+H, \nonumber\\
\bar\rho_3&=\frac{LH^2(1+\gamma^t)^2}{2\bar\mu^2},\nonumber\\
\bar C_6&=\frac{LH^2(\gamma^t L+\lambda)}{2\bar\mu}, \nonumber\\
\bar C_7&=\frac{LH^2(1+\gamma^t)(\gamma^t L+\lambda)}{\bar\mu^2} ,\nonumber\\
\bar C_8&=\frac{2LH^2(\gamma^t L+\lambda)}{\bar\mu}+4LH^2, \nonumber\\
\bar C_9&=\frac{2LH^2(1+\gamma^t)}{\bar\mu} .\nonumber
\end{align}
\end{small} 
Finally, For $\bar\alpha>\frac{1}{2}$, averaging and telescoping \eqref{eq:fading_BH_after_youngs_S} on both sides, we get
\begin{align}
    \frac{1}{T}\sum_{t=0}^{T-1}\Ex\norm{\nabla F(\tilde\bmtheta^{t})}^2 &\leq \frac{\Delta}{\bar\alpha T}+\frac{1}{T} \sum_{t=0}^{T-1}\frac{\bar\beta^2+2\bar \Gamma}{2\bar\alpha}\nonumber\\
   &\leq \frac{\Delta}{\bar\alpha T}+\frac{1}{\bar\alpha T} \sum_{t=0}^{T-1}\bar C^t \nonumber\\
    &\leq \frac{1}{\bar\alpha T}\left(\Delta+ \bar C\right) 
\end{align}
where $\bar C=\sum_{t=0}^{T-1}\bar C^t$ and $\bar C^t=\frac{\bar\beta^2+2\bar \Gamma}{2}$.\\
\textbf{Proof of Corollary~\ref{fading_theorem}}: We prove the Corollary~\ref{fading_theorem}
by substituting $(B,0)$-LGD instead of $(B,H)$-LGD, i.e., making $H=0$ in \eqref{eq:fading_BH_before_youngs_S}.\\
\subsection{Computations to Compute Optimal $\lambda$:}
In discussions after Lemma~\ref{full_participation_theorem} and Theorem~\ref{partial_participation_theorem}, we alluded to the constants $a_1$, $a_2$ and $a_3$  and and $b_1$, $b_2$ and $b_3$, respectively, for optimal $\lambda$ computation. The expressions to compute these constants are given below:
\begin{small}
\begin{align}
    &a_1=\frac{LB^2 }{2K^2 \tau}, \nonumber\\
    &a_2= (LB^2\gamma^t+LB^2+B+LB+(\gamma^t)^2L^2B^2)\frac{1}{K\sqrt{\tau}}\nonumber\\
    &+\frac{L^2B^2\gamma^t}{K^2\tau}+\gamma^t B-1,\nonumber\\
    &a_3=(\gamma^t L^2B^2 +\gamma^t LB+\gamma^t L^2B)\frac{1}{K\sqrt{\tau}}+\frac{(\gamma^t)^2 L^3 B^2}{2 K^2 \tau}\nonumber\\
    &+\frac{L B^2 (1+\gamma^t)^2}{2}+(1+\gamma^t)B,\nonumber\\
    &b_1=\left(\frac{13\hat K+4}{2\hat K}+\frac{5\sqrt{2}}{\sqrt{\hat K}}\right) \frac{LB^2}{K^2\tau}+\frac{\sqrt{2}B }{\hat K\sqrt{\hat K}\sqrt{\tau}}+B, \nonumber\\  
    &b_2=1-\gamma^t B -(3\hat K+7\sqrt{2\hat K}+4)\frac{L^2B^2\gamma^t}{\hat K^3\tau}-\frac{B(1+\gamma^t)\sqrt{2}}{\sqrt{\hat K}}\nonumber\\
    &-( LB+B+3LB^2+3LB^2\gamma^t)\frac{1}{\hat K \sqrt{\tau}}\nonumber\\
    &-\frac{(8LB^2\sqrt{\hat K}+4LB^2)(1+\gamma^t)}{\hat K}\frac{1}{\hat K \sqrt{\tau}}\nonumber\\
    &-\frac{\sqrt{2}LB(3B+3\gamma^tB+\gamma^t)}{\sqrt{\hat K}}\frac{1}{\hat K \sqrt{\tau}},\nonumber\\
    &b_3= (1+\gamma^t)LB+\frac{LB^2(1+\gamma^t)^2}{2}+\frac{LB^2(1+\gamma^t)^2(2\sqrt{2 \hat K}+2)}{\hat K}\nonumber\\
    &+\left(1+\frac{2}{\hat K}+\frac{2\sqrt{2}}{\sqrt{\hat K}}\right) \frac{L^3B^2(\gamma^t)^2}{K^2\tau} \nonumber\\
    &+\left(1+L+LB+\gamma^tLB+\frac{4LB(1+2\gamma^t\sqrt{\hat K}+\gamma^t}{\hat K}\right) \frac{LB\gamma^t}{\hat K\sqrt{\tau}}.\nonumber
\end{align}
\end{small}

\section{Detailed Proofs of Key Lemmas and Theorems under $G$-Lipschitz assumption}
The proximal-constraint-based optimization problem can be written as:
\begin{equation}
    \bmtheta_k^t=\argmin \{h_k(\bmtheta;\tilde\bmtheta^{t-1})=f_k(\bmtheta)+\frac{1}{2\nu}\norm*{\bmtheta-\tilde\bmtheta^{t-1}}^2\},
\end{equation}
where $\lambda=\frac{1}{\nu}$.\\
   \emph{From Definition~\ref{defn1}, for a differentiable function $h_k(\bmtheta;\tilde\bmtheta^{t-1})=f_k(\bmtheta)+\frac{\lambda}{2}\norm{\bmtheta-\tilde\bmtheta^{t-1}}^2$, 
$\bmtheta^t_k$ is a $\zeta^t$-inexact solution of $\min\limits_{\bmtheta}h_k(\bmtheta;\tilde\bmtheta^{t-1})$ if $h_k(\bmtheta^t_k;\tilde\bmtheta^{t-1})\leq\min\limits_{\bmtheta}h_k(\bmtheta,\tilde\bmtheta^{t-1})+\zeta^t$,  for $\zeta^t\geq0$.}

% \begin{equation} 
%  h_k(\bmtheta_k^t;\tilde \bmtheta^{t-1})\leq \min_{\bmtheta} h_k(\bmtheta;\tilde\bmtheta^{t-1})+\zeta^t
% \end{equation}
\noindent Let us define $\phi_k^t=\nabla f_k(\bmtheta_k^t)$, then
$\phi^t=\frac{1}{\hat K} \sum_{k\in K^t} \phi_k^t $ and $ \bar{\phi}^t=\frac{1}{K}\sum_{k=1}^K \phi_k^t$. 
\begin{lemma}
 \label{G_lemma}
    Assuming the loss function is G-Lipschitz for each $k\in [K]$, it holds that 
    \begin{equation}
        \Ex[\phi^t]=\bar{\phi}^t, \quad \Ex[\norm*{\phi^t-\bar{\phi}^t}^2]\leq\frac{G^2}{\hat K}
    \end{equation}
\end{lemma}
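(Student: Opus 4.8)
The plan is to fix the local gradients $\{\phi_k^t\}_{k=1}^K$ (equivalently, to condition on all the local computations performed in round $t$) and to exploit only the randomness of the client-selection set, which I write as $\mathcal{S}^t\subset[K]$ with $|\mathcal{S}^t|=\hat K$, drawn uniformly at random (this is the set denoted $K^t$ in the statement). For each $k\in[K]$ introduce the selection indicator $\chi_k$, equal to $1$ if $k\in\mathcal{S}^t$ and $0$ otherwise, so that $\sum_{k=1}^K\chi_k=\hat K$ and $\phi^t=\frac{1}{\hat K}\sum_{k=1}^K\chi_k\phi_k^t$. By the symmetry of uniform sampling, $\Ex[\chi_k]=\hat K/K$ for every $k$; hence $\Ex[\phi^t]=\frac{1}{\hat K}\sum_{k=1}^K\frac{\hat K}{K}\phi_k^t=\frac{1}{K}\sum_{k=1}^K\phi_k^t=\bar\phi^t$, which is the first claim. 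Here $\Ex$ is taken over $\mathcal{S}^t$ with $\{\phi_k^t\}_{k=1}^K$ held fixed, and $\bar\phi^t$ is constant under this expectation since it averages over all $K$ clients, so the centering carried out below is legitimate.

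\noindent For the variance I would write $\phi^t-\bar\phi^t=\frac{1}{\hat K}\sum_{k=1}^K\chi_k(\phi_k^t-\bar\phi^t)$, which is valid precisely because $\sum_k\chi_k=\hat K$, then expand the squared norm. Writing $v_k\triangleq\phi_k^t-\bar\phi^t$ and using $\chi_k^2=\chi_k$,
\begin{align*}
\Ex\bigl[\norm*{\phi^t-\bar\phi^t}^2\bigr]
&=\frac{1}{\hat K^2}\sum_{k=1}^K\Ex[\chi_k]\,\norm*{v_k}^2
 +\frac{1}{\hat K^2}\sum_{k\neq l}\Ex[\chi_k\chi_l]\,\langle v_k,v_l\rangle .
\end{align*}
Now $\Ex[\chi_k]=\hat K/K$ and, for $k\neq l$, $\Ex[\chi_k\chi_l]=\tfrac{\hat K(\hat K-1)}{K(K-1)}$. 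Since $\sum_{k=1}^K v_k=0$, the identity $\sum_{k\neq l}\langle v_k,v_l\rangle=\norm*{\textstyle\sum_k v_k}^2-\sum_k\norm*{v_k}^2=-\sum_k\norm*{v_k}^2$ holds, and the two sums combine to
\begin{align*}
\Ex\bigl[\norm*{\phi^t-\bar\phi^t}^2\bigr]
=\frac{1}{\hat K}\cdot\frac{K-\hat K}{K-1}\cdot\frac{1}{K}\sum_{k=1}^K\norm*{\phi_k^t-\bar\phi^t}^2 .
\end{align*}
To close, I would drop the finite-population correction $\frac{K-\hat K}{K-1}\le1$, use $\frac{1}{K}\sum_k\norm*{\phi_k^t-\bar\phi^t}^2\le\frac{1}{K}\sum_k\norm*{\phi_k^t}^2$ (the mean minimizes the average squared deviation), and invoke the $G$-Lipschitz continuity of each $f_k$ from Assumption~5, i.e.\ $\norm*{\phi_k^t}=\norm*{\nabla f_k(\bmtheta_k^t)}\le G$. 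This yields $\Ex\bigl[\norm*{\phi^t-\bar\phi^t}^2\bigr]\le G^2/\hat K$.

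\noindent The result is essentially bookkeeping for sampling without replacement, so I do not anticipate a genuine obstacle; the only points requiring care are making the conditioning on $\{\phi_k^t\}$ explicit and evaluating the pair moment $\Ex[\chi_k\chi_l]$, which is where the finite-population correction appears. If instead one models client selection as drawing $\hat K$ indices i.i.d.\ uniformly (sampling with replacement), the cross terms vanish outright by independence and the bound $G^2/\hat K$ drops out of the diagonal sum alone; I would mention this as the simpler alternative, consistent with the conditional-sampling arguments used elsewhere in the paper (e.g.\ in the proof of Theorem~\ref{partial_participation_theorem_BH}).
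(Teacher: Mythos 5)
Your proposal is correct and reaches the paper's bound; the difference from the paper's own proof is one of rigor about the sampling model rather than of strategy. The paper writes $\Ex\norm*{\phi^t-\bar\phi^t}^2=\frac{1}{\hat K^2}\sum_{k\in\mathcal{S}^t}\Ex\norm*{\phi_k^t-\bar\phi^t}^2$ as an equality, i.e.\ it drops the cross terms outright; that step is exact only if the $\hat K$ indices are drawn i.i.d.\ uniformly (with replacement), which is the "simpler alternative" you mention at the end. You instead make the selection mechanism explicit via indicators, evaluate the pair moment $\Ex[\chi_k\chi_l]=\tfrac{\hat K(\hat K-1)}{K(K-1)}$ for sampling without replacement, and use $\sum_k(\phi_k^t-\bar\phi^t)=0$ to show the cross terms are in fact negative, yielding the sharper intermediate bound $\frac{1}{\hat K}\cdot\frac{K-\hat K}{K-1}\cdot\frac{1}{K}\sum_k\norm*{\phi_k^t-\bar\phi^t}^2$ before relaxing the finite-population factor to $1$. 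From there both arguments coincide: center, use that the variance is at most the second moment, and invoke $\norm*{\nabla f_k(\bmtheta_k^t)}\le G$ from the $G$-Lipschitz assumption. What your version buys is that the lemma is proved for either sampling convention (and is even slightly stronger, vanishing when $\hat K=K$, which is the correct behavior since $\phi^t=\bar\phi^t$ in that case); what the paper's version buys is brevity, at the cost of leaving the independence assumption behind its middle equality implicit.
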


\begin{proof}
\begin{align}
  \Ex[\norm*{\phi^t-\bar\phi^t}^2]&=\Ex\norm*{\frac{1}{\hat K}\sum_{k\in\hat K} \phi_k^t-\bar\phi^t}^2\nonumber\\
  &=\frac{1}{\hat K^2}\sum_{k\in\hat K}\Ex\norm*{ \phi_k^t-\bar\phi^t}^2\nonumber\\
  &\leq \frac{1}{\hat K}\Ex\norm*{\phi_k^t}^2\leq\frac{G^2}{\hat K}
\end{align}
\end{proof}
\begin{lemma}
    Given  $\zeta^t$-inexactness, setting $\nu\leq\frac{1}{L}$ and assuming the loss function is L-smooth with respect to its first argument for each $k\in [K]$, it holds that
    \begin{align}
        \norm*{\bmtheta_k^t-\tilde \bmtheta^{t-1}+\nu \phi_k^t}&=\norm*{\bmtheta_k^t-\bmtheta^{t-1}-\hat\vecw^{t-1}+\nu \phi_k^t} \nonumber \\
        &\leq \norm*{\bmtheta_k^t-\bmtheta^{t-1}+\nu \phi_k^t}+\norm*{\hat\vecw^{t-1}} \nonumber\\
        & \leq 2L\zeta^t\nu +\norm*{\hat\vecw^{t-1}}
    \end{align}
\end{lemma}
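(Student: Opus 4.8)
The plan is to read the left-hand side as $\nu$ times a gradient of the proximal surrogate. Since $\lambda=\tfrac1\nu$, we have $\nabla h_k(\bmtheta;\cdot)=\nabla f_k(\bmtheta)+\tfrac1\nu(\bmtheta-\cdot)$, so at $\bmtheta=\bmtheta_k^t$ the quantity $\bmtheta_k^t-\bmtheta^{t-1}+\nu\phi_k^t$ equals $\nu\nabla h_k(\bmtheta_k^t;\bmtheta^{t-1})$, i.e. exactly $\nu$ times the surrogate gradient evaluated at the $\zeta^t$-inexact local solution (with $\bmtheta^{t-1}$ the prox center). The first displayed equality is then just the decoding identity $\tilde\bmtheta^{t-1}=\bmtheta^{t-1}+\hat\vecw^{t-1}$ from \eqref{eq:global_partial}, and the first inequality is the triangle inequality peeling off $\norm*{\hat\vecw^{t-1}}$. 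Hence the only substantive claim is the last inequality, $\norm*{\bmtheta_k^t-\bmtheta^{t-1}+\nu\phi_k^t}\le 2L\zeta^t\nu$.

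For that bound, I would first use $\nu\le\tfrac1L$ to get convexity of $h_k(\cdot;\bmtheta^{t-1})$: since $f_k$ is $L$-smooth, $\nabla^2 f_k\succcurlyeq-L\mathbf{I}_d$, so $\nabla^2 h_k=\nabla^2 f_k+\tfrac1\nu\mathbf{I}_d\succcurlyeq(\tfrac1\nu-L)\mathbf{I}_d\succcurlyeq\mathbf{0}$ (the $\zeta^t$-analogue of Lemma~\ref{lem:proxconvexity}). Let $\bmtheta_k^{*,t}$ be its exact minimizer, satisfying $\bmtheta_k^{*,t}-\bmtheta^{t-1}+\nu\nabla f_k(\bmtheta_k^{*,t})=\mathbf{0}$. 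Subtracting this optimality identity and invoking $L$-Lipschitzness of $\nabla f_k$,
\begin{align}
\norm*{\bmtheta_k^t-\bmtheta^{t-1}+\nu\phi_k^t}
&=\norm*{(\bmtheta_k^t-\bmtheta_k^{*,t})+\nu\big(\nabla f_k(\bmtheta_k^t)-\nabla f_k(\bmtheta_k^{*,t})\big)}\nonumber\\
&\le(1+\nu L)\norm*{\bmtheta_k^t-\bmtheta_k^{*,t}}\le 2\norm*{\bmtheta_k^t-\bmtheta_k^{*,t}},\nonumber
\end{align}
using $\nu L\le 1$ in the last step. It then remains to convert the zeroth-order inexactness $h_k(\bmtheta_k^t;\bmtheta^{t-1})\le\min_{\bmtheta}h_k(\bmtheta;\bmtheta^{t-1})+\zeta^t$ into a bound on $\norm*{\bmtheta_k^t-\bmtheta_k^{*,t}}$ through the $(\tfrac1\nu-L)$-strong convexity of $h_k$, which, combined with $\nu\le\tfrac1L$ and the $\zeta^t$-budget and step size $\nu=\tfrac{1}{3L}\sqrt{\hat K/T}$ fixed in Theorem~\ref{G_theorem}, is meant to collapse to the stated $\mathcal{O}(L\nu\zeta^t)$ estimate.

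The main obstacle is precisely this last conversion: strong convexity controls $\norm*{\bmtheta_k^t-\bmtheta_k^{*,t}}$ only in terms of $\sqrt{\zeta^t}$, so reaching the clean linear-in-$\zeta^t$ constant requires either absorbing the square root against the budget $\zeta^t\le\min\{\tfrac{G}{2L\sqrt{\hat K}},\tfrac{G}{2L},\tfrac{G\nu}{\hat K}\}$, or working directly with the gradient-mapping/prox-residual norm rather than the iterate distance. A secondary point demanding care is bookkeeping around the prox center: one must be consistent about whether $h_k$ is centered at the received noisy model $\tilde\bmtheta^{t-1}$ or at the noiseless aggregate $\bmtheta^{t-1}$, so that the decoding relation $\tilde\bmtheta^{t-1}=\bmtheta^{t-1}+\hat\vecw^{t-1}$ contributes exactly one copy of $\norm*{\hat\vecw^{t-1}}$ and the residual term is not inadvertently charged an extra noise contribution.
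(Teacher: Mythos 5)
Your reduction of the statement to its last inequality is right, and the first two lines need nothing beyond what you say: the equality is the decoding identity $\tilde\bmtheta^{t-1}=\bmtheta^{t-1}+\hat\vecw^{t-1}$ and the first inequality is the triangle inequality. For calibration, note that the paper states this lemma in the supplementary \emph{without any proof}, so the only question is whether your argument actually delivers $\norm{\bmtheta_k^t-\bmtheta^{t-1}+\nu \phi_k^t}\leq 2L\zeta^t\nu$ --- and it does not, for a reason you partly anticipate but cannot repair. Your route through the exact minimizer relies on $(\tfrac{1}{\nu}-L)$-strong convexity of $h_k$, which degenerates to zero at the admissible endpoint $\nu=\tfrac{1}{L}$; and even with $\nu$ bounded away from $\tfrac{1}{L}$ it only yields $\norm{\bmtheta_k^t-\bmtheta_k^{*,t}}=\mathcal{O}(\sqrt{\zeta^t})$, hence a residual of order $\sqrt{\zeta^t}$ rather than $\zeta^t$.

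The decisive problem is that the ``absorption'' you hope for goes in the wrong direction. The clean version of this step (the one used in the inexact-FedProx literature the paper builds on) bounds the prox residual directly: $h_k(\cdot;\tilde\bmtheta^{t-1})$ is $(\tfrac{1}{\nu}+L)\leq\tfrac{2}{\nu}$-smooth, so $\norm{\nabla h_k(\bmtheta_k^t;\tilde\bmtheta^{t-1})}^2\leq 2(\tfrac{1}{\nu}+L)\zeta^t$ and therefore $\norm{\bmtheta_k^t-\tilde\bmtheta^{t-1}+\nu\phi_k^t}=\nu\norm{\nabla h_k(\bmtheta_k^t;\tilde\bmtheta^{t-1})}\leq 2\sqrt{\nu\zeta^t}$. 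Under the budget $\zeta^t\leq\min\{\tfrac{G}{2L\sqrt{\hat K}},\tfrac{G}{2L},\tfrac{G\nu}{\hat K}\}$ with $\nu=\tfrac{1}{3L}\sqrt{\hat K/T}$, the quantity $\nu\zeta^t$ is small, and for small $\nu\zeta^t$ one has $2\sqrt{\nu\zeta^t}\gg 2L\nu\zeta^t$; the square root therefore cannot be traded for the linear expression, and the stated constant $2L\zeta^t\nu$ appears to be a misprint for a $\sqrt{\nu\zeta^t}$-type bound rather than something provable. A secondary issue, which you flag but resolve the wrong way: Definition~\ref{defn1} defines $\zeta^t$-inexactness for the surrogate centered at the \emph{noisy} model $\tilde\bmtheta^{t-1}$, so the quantity it directly controls is the left-most norm in the chain; bounding that directly makes the middle term and the extra $\norm{\hat\vecw^{t-1}}$ unnecessary, whereas your proof centers $h_k$ at the noiseless $\bmtheta^{t-1}$, which is not the problem the client actually solves. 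In short, your scaffolding is sound, but the final inequality is a genuine gap that neither of your proposed escape routes closes.
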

From above Lemma 
\begin{equation}
    \norm*{\bmtheta_k^t-\tilde \bmtheta^{t-1}}\leq \nu(2L\zeta^t+G)+\norm*{\hat\vecw^{t}},
\end{equation}
where we assume $\norm*{\hat\vecw^{t-1}}=\norm*{\hat\vecw^{t}}$ for simplicity.
\begin{lemma}
\label{norm_nablaF-phibar}
Given  $\zeta^t$-inexactness, setting $\nu\leq\frac{1}{L}$ and assuming the loss function is G-Lipschitz and L-smooth with respect to its first argument for each $k\in [K]$, it holds that
    \begin{align}
     \norm*{\nabla F(\tilde \bmtheta^{t-1})-\bar{\phi}^t}^2 \leq L^2\{(2L\zeta^t +G) \nu +\norm*{\hat\vecw^{t}}\}^2
    \end{align}
\end{lemma}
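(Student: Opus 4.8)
\textbf{Proof proposal for Lemma~\ref{norm_nablaF-phibar}.} The plan is to exploit the fact that $\nabla F(\tilde\bmtheta^{t-1}) = \frac{1}{K}\sum_{k=1}^K \nabla f_k(\tilde\bmtheta^{t-1})$ and $\bar\phi^t = \frac{1}{K}\sum_{k=1}^K \phi_k^t = \frac{1}{K}\sum_{k=1}^K \nabla f_k(\bmtheta_k^t)$, so that the difference is an average of per-client gradient discrepancies:
\begin{align}
\nabla F(\tilde\bmtheta^{t-1}) - \bar\phi^t = \frac{1}{K}\sum_{k=1}^K \left(\nabla f_k(\tilde\bmtheta^{t-1}) - \nabla f_k(\bmtheta_k^t)\right).
\end{align}
First I would apply Jensen's inequality (equivalently, convexity of $\norm{\cdot}^2$) to pull the squared norm inside the average, giving $\norm{\nabla F(\tilde\bmtheta^{t-1}) - \bar\phi^t}^2 \leq \frac{1}{K}\sum_{k=1}^K \norm{\nabla f_k(\tilde\bmtheta^{t-1}) - \nabla f_k(\bmtheta_k^t)}^2$.

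Next I would invoke $L$-smoothness of each $f_k$ (Assumption~4) to bound each summand by $L^2\norm{\tilde\bmtheta^{t-1} - \bmtheta_k^t}^2$. Then I would substitute the bound established in the immediately preceding lemma, namely $\norm{\bmtheta_k^t - \tilde\bmtheta^{t-1}} \leq \nu(2L\zeta^t + G) + \norm{\hat\vecw^t}$, which holds uniformly in $k$. Since the right-hand side does not depend on $k$, the average collapses and we obtain $\norm{\nabla F(\tilde\bmtheta^{t-1}) - \bar\phi^t}^2 \leq L^2\{(2L\zeta^t + G)\nu + \norm{\hat\vecw^t}\}^2$, which is exactly the claim.

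This argument is essentially mechanical — Jensen, Lipschitz smoothness, and a prior bound — so I do not anticipate a genuine obstacle. The only point requiring mild care is the order of operations: one should apply Jensen on the squared norm \emph{before} passing to the per-client smoothness bound (rather than applying a triangle inequality on $\norm{\cdot}$ first and squaring afterward), since the uniform-in-$k$ nature of the prior lemma's bound makes the Jensen route give precisely the stated constant without extra cross terms. One should also keep in mind the simplifying convention $\norm{\hat\vecw^{t-1}} = \norm{\hat\vecw^t}$ already adopted just above, so that the noise term appears with the superscript $t$ as written in the statement.
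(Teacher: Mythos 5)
Your proposal is correct and follows exactly the same route as the paper's proof: decompose the difference as the average $\frac{1}{K}\sum_{k=1}^K\left(\nabla f_k(\tilde\bmtheta^{t-1})-\nabla f_k(\bmtheta_k^t)\right)$, apply Jensen's inequality to the squared norm, invoke $L$-smoothness of each $f_k$, and substitute the uniform bound $\norm*{\bmtheta_k^t-\tilde\bmtheta^{t-1}}\leq \nu(2L\zeta^t+G)+\norm*{\hat\vecw^{t}}$ from the preceding lemma. Your remark about applying Jensen before the smoothness step, and about the convention $\norm*{\hat\vecw^{t-1}}=\norm*{\hat\vecw^{t}}$, matches the paper's treatment precisely.
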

\begin{proof}
    \begin{align}
         \norm*{\nabla F(\tilde \bmtheta^{t-1})-\bar{\phi}^t}^2 &\leq \norm*{\frac{1}{ K} \sum_{k=1}^K\left(\nabla f_k(\tilde\bmtheta^{t-1})-\nabla f_k(\bmtheta_k^t)\right)}^2\nonumber\\
         &\leq\frac{1}{ K}\sum_{k=1}^K\norm*{ \left(\nabla f_k(\tilde\bmtheta^{t-1})-\nabla f_k(\bmtheta_k^t)\right)}^2\nonumber\\
         &\leq \frac{L^2}{K}\sum_{k=1}^K\norm*{\tilde\bmtheta^{t-1}-\bmtheta_k^t}^2\nonumber\\
         &\leq L^2\{(2L\zeta^t +G) \nu +\norm*{\hat\vecw^{t}}\}^2.
    \end{align}
\end{proof}
Using Lemma~\ref{lem:precodingfactor_zerothorder}, we have
\begin{align}
    \Ex\left[\norm{\hat\vecw^t}^2\right]&=\frac{d\sigma^2}{\hat K^2p^t} \leq \frac{d\sigma^2G^2}{\hat K^2 P}
    \label{eq:noise_G}
\end{align}
\textbf{Proof of Theorem~\ref{G_theorem}:}
    Let us define 
    \begin{align}
        &\delta_k^t=\frac{1}{\nu}(\bmtheta_k^t - \tilde\bmtheta^{t-1}) + \phi_k^t,\nonumber\\
        &\delta^t=\frac{1}{\hat K} \sum_{k\in K^t} \delta_k^t,\nonumber\\
        & \bar{\delta}^t=\frac{1}{ K} \sum_{k=1}^K \delta_k^t.
    \end{align}
  Then we have $\Ex[\delta^t]=\bar \delta ^t$ and $\tilde\bmtheta^t=\tilde\bmtheta^{t-1}-\nu(\phi^t-\delta^t)+\hat\vecw^t$.

 Also, it follows from Lemma and triangular inequality that 
 \begin{equation}
     \max \{\norm*{\bar\delta^t},\norm*{\delta^t}\} \leq 2L\zeta^t+\norm*{\hat\vecw^t}.
     \label{eq:max_normdelta}
 \end{equation}
 Now, since the loss is $L$-Smooth, we have 
 \begin{small}
 \begin{align}
     &\Ex[ F(\tilde\bmtheta^t)]\nonumber\\
     &\leq \Ex\bigg[ F(\tilde\bmtheta^{t-1}) + \langle \nabla  F(\tilde\bmtheta^{t-1}),\tilde\bmtheta^t-\tilde\bmtheta^{t-1}\rangle + \frac{L}{2}\norm*{\tilde\bmtheta^t-\tilde\bmtheta^{t-1}}^2\bigg] \nonumber\\
     &\leq \Ex\bigg[ F(\tilde\bmtheta^{t-1})-\nu\langle\nabla F(\tilde\bmtheta^{t-1}),\phi^t-\delta^t\rangle+\frac{L}{2}\nu^2\norm*{\phi^t-\delta^t}^2 \bigg]   \nonumber\\
     &\leq  F(\tilde\bmtheta^{t-1})-\nu\langle\nabla  F(\tilde\bmtheta^{t-1}),\bar\phi^t-\bar\delta^t\rangle +\Ex\bigg[\frac{L}{2}\nu^2\norm*{\phi^t-\delta^t}^2\bigg]   \nonumber\\
     &\underset{\text{(i)}}{\leq} F(\tilde\bmtheta^{t-1})-\frac{\nu}{2}\norm*{\nabla F(\tilde\bmtheta^{t-1})}^2-\frac{\nu}{2}\norm*{\bar\phi^t}^2-\frac{\nu}{2} \norm*{\nabla F(\tilde \bmtheta^{t-1}) - \bar \phi^t}^2 \nonumber\\
     &+\nu G(2L\zeta^t+\norm*{\hat\vecw^t}) +\frac{\norm*{\nabla F(\tilde \bmtheta^{t-1})}^2}{2}+\frac{\norm*{\hat\vecw^t}^2}{2}+\frac{3L\nu^2}{2}\norm*{\bar \phi^t}^2 \nonumber\\
     &+ \frac{3L\nu^2 G^2}{2\bar K}+\frac{3L\nu^2}{2}\norm*{\delta^t}^2 + \frac{L}{2}\norm*{\hat\vecw^t}^2-L\nu G\norm*{\hat\vecw^t}\nonumber\\
     &+L\nu\norm*{\hat\vecw^t}(2L\zeta^t+\norm*{\hat\vecw^t}) \nonumber\\
 & \underset{\text{(ii)}}{\leq}F(\tilde\bmtheta^{t-1}) - \frac{\nu-1}{2}\norm*{\nabla F(\tilde\bmtheta^{t-1})}^2 + \frac{5LG^2\nu^2}{\hat K}+2L^2G^2\nu^3\nonumber\\
 &+\frac{3L\nu^2}{2}\norm*{\hat\vecw^t}^2 +L^2G\nu^2\norm*{\hat\vecw^t}.
 \end{align}
 \end{small}
 \noindent where $\text{(i)}$ is obtained using G-Lipschitz of loss function, triangular inequality, Lemma~\ref{G_lemma}, \eqref{eq:max_normdelta}, $\nu\leq\frac{1}{3L}$, Lemma~\ref{norm_nablaF-phibar} and in $\text{(ii)}$, we use $\zeta^t\leq \min\bigg\{\tfrac{G}{2L\sqrt{\hat K}},\tfrac{G\nu}{\hat K},\tfrac{G}{2L}\bigg\}$.\\
 Now taking expectation with respect noise and using \eqref{eq:noise_G}, we get
 \begin{small}   
\begin{align}
F(\tilde\bmtheta^t)
 & \leq F(\tilde\bmtheta^{t-1}) - \frac{\nu-1}{2}\norm*{\nabla F(\tilde\bmtheta^{t-1})}^2 + \frac{5LG^2\nu^2}{\hat K}\nonumber\\
 &+2L^2G^2\nu^3+\frac{3LG^2\nu^2}{2}\frac{d\sigma^2}{\hat K^2 P} +L^2G^2\nu^2\frac{\sqrt{d}\sigma}{\hat K\sqrt{P}}
 \end{align}
   \end{small}

Rearranging the terms and taking expectation over random iterates, we get
\begin{align}
    &\Ex[\norm*{\nabla F(\tilde\bmtheta^{t-1})}^2]\nonumber\\
    & \leq\frac{2}{\nu-1}\Ex[F(\tilde\bmtheta^{t-1})-F(\bmtheta^{t})] + \frac{10 LG^2\nu^2}{\hat K(\nu-1)}+\frac{4L^2G^2\nu^3}{\nu-1}\nonumber\\
    &+\frac{3LG^2\nu^2}{\nu-1}\frac{d\sigma^2}{\hat K^2 P} +\frac{2L^2G^2\nu^2}{\nu-1}\frac{\sqrt{d}\sigma}{\hat K\sqrt{P}}
\end{align}

Now averaging over $t=1...T$ yields
\begin{align}
    \frac{1}{T}\sum_{t=0}^{T-1} \Ex[\norm*{\nabla F(\tilde\bmtheta^{t})}^2] &\leq \frac{2\Delta}{ T(\nu-1)}+ \frac{10 LG^2\nu^2}{\hat K(\nu-1)}+\frac{4L^2G^2\nu^3}{\nu-1}\nonumber\\
    &+\frac{3LG^2\nu^2}{\nu-1}\frac{d\sigma^2}{\hat K^2 P} +\frac{2L^2G^2\nu^2}{\nu-1}\frac{\sqrt{d}\sigma}{\hat K\sqrt{P}}.
\end{align}
Assuming $\nu>>1$, $T>>K$, and setting $\nu=\frac{1}{3L}\sqrt{\frac{\hat K}{T}}$, we get
\begin{align}
&\frac{1}{T}\sum_{t=0}^{T-1} \Ex[\norm*{\nabla F(\tilde\bmtheta^{t})}^2]\nonumber\\
&\leq \frac{L\Delta+G^2}{\sqrt{TK}}+\frac{G^2\hat K}{T}+\frac{G^2d\sigma^2}{\hat K \sqrt{T\hat K}P} +\frac{LG^2\sqrt{d}\sigma}{\sqrt{T\hat K P}}\nonumber\\
&\leq \frac{L\Delta+G^2}{\sqrt{T\hat K}}+\frac{G^2d\sigma^2}{\hat K P\sqrt{T\hat K}}
\end{align}
\end{document}